\PassOptionsToPackage{table,xcdraw,usenames,dvipsnames}{xcolor}
\documentclass{article}

\usepackage{microtype}
\usepackage{graphicx}
\usepackage{subcaption}
\usepackage{booktabs} % for professional tables

\usepackage{hyperref}

\usepackage[preprint]{icml2026}

\usepackage{amsmath}
\usepackage{amssymb}
\usepackage{mathtools}
\usepackage{amsthm}

\usepackage[capitalize,noabbrev]{cleveref}
\usepackage[ruled,algo2e]{algorithm2e}
\renewcommand{\algorithmicrequire}{\textbf{Input:}}  % Use Input in the format of Algorithm
\renewcommand{\algorithmicensure}{\textbf{Output:}} % Use Output in the format of Algorithm
\SetKwInOut{Input}{Input}\SetKwInOut{Output}{Output}
\theoremstyle{plain}
\newtheorem{theorem}{Theorem}[section]

\newtheorem{axiom}[theorem]{Axiom}
\newtheorem{corollary}[theorem]{Corollary}
\theoremstyle{definition}
\newtheorem{definition}[theorem]{Definition}

\theoremstyle{remark}

\usepackage[textsize=tiny]{todonotes}

\usepackage{xspace}
\newcommand{\ourmethod}{{\fontfamily{lmtt}\selectfont \textbf{ColParse}}\xspace}

\usepackage[normalem]{ulem}
\usepackage{bbm}
\usepackage{bbding}

\definecolor{ForestGreen}{RGB}{34,139,34}
\definecolor{myyellow}{RGB}{181, 181, 27}

\newcommand{\blue}[1]{$_{\color{BlueGreen}\downarrow #1}$}
\newcommand{\red}[1]{$_{\color{RedOrange}\uparrow #1}$}
\newcommand{\blueup}[1]{$_{\color{BlueGreen}\uparrow #1}$}
\definecolor{darksalmon}{rgb}{0.91, 0.59, 0.48}
\definecolor{emerald}{rgb}{0.31, 0.78, 0.47}
\definecolor{green(pigment)}{rgb}{0.0, 0.65, 0.31}
\definecolor{amaranth}{rgb}{0.9, 0.17, 0.31}
\definecolor{iris}{rgb}{0.35, 0.31, 0.81}
\definecolor{uu}{rgb}{0.95, 0.51, 0.51}
\definecolor{spirodiscoball}{rgb}{0.06, 0.75, 0.99}

\usepackage{multirow}
\usepackage{makecell}
\usepackage{enumerate}
\usepackage{enumitem}
\usepackage{pifont}

\usepackage{algorithm2e}

\usepackage[table]{xcolor}
\usepackage{graphicx}
\usepackage{booktabs}
\usepackage{makecell}
\usepackage{amsmath}
\usepackage{amssymb}
\usepackage{multirow}

\usepackage{longtable}
\usepackage{longtable}
\usepackage{rotating} % For rotating column headers
\usepackage{cuted} % Alternative to \onecolumn for full-width strips

\icmltitlerunning{Beyond the Grid: Layout-Informed Multi-Vector Retrieval with Parsed Visual Document Representations}

\begin{document}

\twocolumn[
  \icmltitle{Beyond the Grid: Layout-Informed Multi-Vector Retrieval \\ with Parsed Visual Document Representations}

  % It is OKAY to include author information, even for blind submissions: the
  % style file will automatically remove it for you unless you've provided
  % the [accepted] option to the icml2026 package.

  % List of affiliations: The first argument should be a (short) identifier you
  % will use later to specify author affiliations Academic affiliations
  % should list Department, University, City, Region, Country Industry
  % affiliations should list Company, City, Region, Country

  % You can specify symbols, otherwise they are numbered in order. Ideally, you
  % should not use this facility. Affiliations will be numbered in order of
  % appearance and this is the preferred way.
\icmlsetsymbol{equal}{*}
\icmlsetsymbol{correspondence}{\dag}

\begin{icmlauthorlist}
\icmlauthor{Yibo Yan}{aaa,bbb,ccc}
\icmlauthor{Mingdong Ou}{correspondence,bbb}
\icmlauthor{Yi Cao}{bbb}
\icmlauthor{Xin Zou}{aaa,ccc}
\icmlauthor{Shuliang Liu}{aaa,ccc} 
\icmlauthor{Jiahao Huo}{aaa,bbb} 
\icmlauthor{Yu Huang}{aaa,bbb} \\
% \icmlauthor{Philip S. Yu}{ddd}
\icmlauthor{James Kwok}{ccc}
\icmlauthor{Xuming Hu}{correspondence,aaa,ccc}
\end{icmlauthorlist}

\icmlaffiliation{aaa}{The Hong Kong University of Science and Technology (Guangzhou)}
\icmlaffiliation{bbb}{Alibaba Cloud Computing}
\icmlaffiliation{ccc}{The Hong Kong University of Science and Technology}
% \icmlaffiliation{ddd}{University of Illinois, Chicago}

\icmlcorrespondingauthor{Yibo Yan}{yanyibo70@gmail.com}
\icmlcorrespondingauthor{Mingdong Ou}{mingdong.omd@alibaba-inc.com}
\icmlcorrespondingauthor{Xuming Hu}{xuminghu@hkust-gz.edu.cn}

  % You may provide any keywords that you find helpful for describing your
  % paper; these are used to populate the "keywords" metadata in the PDF but
  % will not be shown in the document
  \icmlkeywords{Machine Learning, ICML}

  \vskip 0.3in
]

% this must go after the closing bracket ] following \twocolumn[ ...

% This command actually creates the footnote in the first column listing the
% affiliations and the copyright notice. The command takes one argument, which
% is text to display at the start of the footnote. The \icmlEqualContribution
% command is standard text for equal contribution. Remove it (just {}) if you
% do not need this facility.

% Use ONE of the following lines. DO NOT remove the command.
% If you have no special notice, KEEP empty braces:
\printAffiliationsAndNotice{}  % no special notice (required even if empty)
% Or, if applicable, use the standard equal contribution text:
% \printAffiliationsAndNotice{\icmlEqualContribution}

\begin{abstract}
  Harnessing the full potential of visually-rich documents requires retrieval systems that understand not just text, but intricate layouts, a core challenge in Visual Document Retrieval (VDR). The prevailing multi-vector architectures, while powerful, face a \textit{crucial storage bottleneck} that current optimization strategies, such as embedding merging, pruning, or introducing abstract tokens, \textit{fail to resolve without compromising performance or ignoring vital layout cues}. To address this, we introduce \ourmethod, a novel paradigm that leverages a document parsing model to generate a small set of layout-informed sub-image embeddings, which are then fused with a global page-level vector to create a compact and structurally-aware multi-vector representation. Extensive experiments demonstrate that \ourmethod reduces storage requirements by over 95\% while simultaneously yielding significant performance gains across numerous benchmarks and base models. \ourmethod thus bridges the critical gap between the fine-grained accuracy of multi-vector retrieval and the practical demands of large-scale deployment, offering a new path towards efficient and interpretable multimodal information systems.
\end{abstract}

\section{Introduction}
\label{sec:introduction}
Visual Document Retrieval (VDR), the task of retrieving relevant document pages from a large-scale corpus, has become a cornerstone of modern information retrieval \cite{mei2025survey,yan2026unlocking}. 
Unlike natural image retrieval, visual documents, such as academic papers, financial reports, and invoices, are defined by a dense interplay of textual content, intricate layouts, and graphical elements, as illustrated in \autoref{fig:visual_document_comparison}. 
To effectively capture this fine-grained detail, the field has predominantly converged on multi-vector retrieval architectures \cite{faysse2024colpali,gunther2025jina,nomicembedmultimodal2025}. 
These models represent each document page as a set of patch-level embeddings and employ a late-interaction mechanism, such as \texttt{MaxSim}, to compute relevance \cite{khattab2020colbert,santhanam2022colbertv2}. 
This paradigm excels at aligning specific query phrases with corresponding visual or textual regions within a document, a capability essential for the high-precision information-seeking tasks inherent to VDR.
% \cite{zhang2025roles,kim2025hybrid}.

\begin{figure}[t!]
    \centering
    \includegraphics[width=0.8 \linewidth]{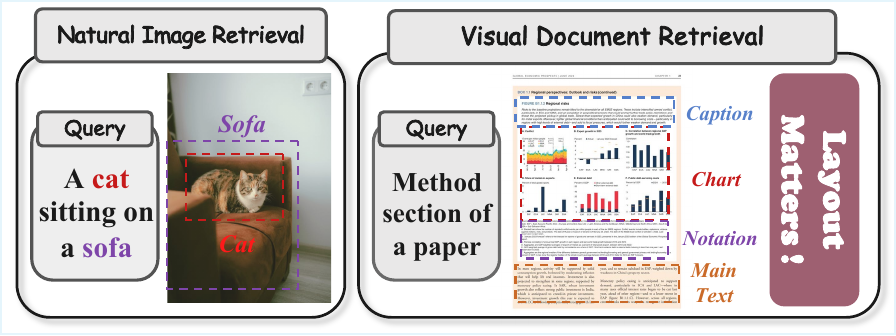}
    \vspace{-0.8em}
    \caption{Comparison of natural image retrieval versus VDR.}
    \label{fig:visual_document_comparison}
    \vspace{-8mm}
\end{figure}

Despite their superior performance, the widespread adoption of multi-vector VDR models is hindered by a critical bottleneck: \textit{prohibitive storage overhead} \cite{jayaram2024muvera,shrestha2024espn,liu2023understanding}. 
Storing hundreds or even thousands of embedding vectors for every page makes large-scale deployment practically challenging. 
To address this, the research community has explored several optimization strategies, as illustrated in \autoref{fig:optimization_paradigm_comparison}. 
\ding{182} One line of work involves \textbf{merging} patch embeddings, where methods like Light-ColPali~\cite{ma2025towards} use clustering techniques to aggregate similar vectors. 
However, this approach often leads to \textit{a dilution of fine-grained information, resulting in unstable performance}. 
\ding{183} Another direction is \textbf{pruning}, where frameworks such as DocPruner~\cite{yan2025docpruner} aim to discard redundant embeddings. 
These methods \textit{struggle to maintain performance under aggressive compression}. 
\ding{184} A third paradigm, exemplified by MetaEmbed~\cite{xiao2025metaembed}, \textbf{introduces a set of abstract, learnable tokens} to form a compact multi-vector representation. 
While innovative, these tokens lack an explicit grounding in the document's inherent layout structure, limiting their ability to capture crucial layout-specific semantics.

\begin{figure*}[t!]
    \centering
    \includegraphics[width=0.9 \linewidth]{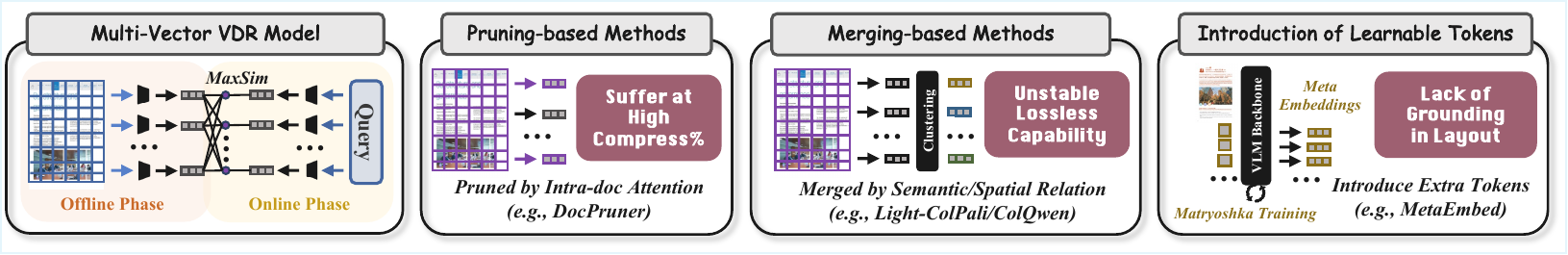}
    \vspace{-0.8em}
    \caption{The illustration of a multi-vector VDR model and three primary optimization strategies for its efficiency bottleneck.}
    \label{fig:optimization_paradigm_comparison}
    \vspace{-6mm}
\end{figure*}

To address the limitations of existing approaches, we introduce \ourmethod, \textbf{a novel paradigm for constructing multi-vector representations that is fundamentally aligned with the structural nature of visual documents}. 
Instead of operating on a uniform grid of patches or abstract tokens, \ourmethod first employs a specialized document parsing model to intelligently segment each document page into a small set of $k$ semantically meaningful, layout-informed sub-images (\textit{e.g.,} tables, figures, paragraphs), where $k$ is typically less than 10. 
These $k$ sub-images are then individually encoded by a standard single-vector retrieval model to yield $k$ local vectors. 
In parallel, the entire document page is encoded to generate one global vector that captures the overall context. 
Finally, we fuse these representations by weighted element-wise adding the global vector to each of the $k$ local vectors. 
This process results in $k$ fused vectors for each document, which integrate both fine-grained, layout-specific details and holistic page-level context.

We conducted comprehensive experiments on 24 diverse VDR datasets \cite{meng2025vlm2vec} to validate the effectiveness and robustness of our proposed framework. 
\ourmethod consistently delivers substantial performance improvements, achieving an average gain of over 10 points in nDCG@5 when applied to 10 different mainstream single-vector models. 
This demonstrates its remarkable flexibility as a training-free, plug-and-play module. 
By deeply integrating the unique structural properties of visual documents with the powerful mechanism of multi-vector retrieval, \ourmethod establishes a new trade-off between retrieval performance and storage efficiency. Our main contributions are as follows:
\begin{itemize}[leftmargin=*]
    \item[\ding{182}] \textbf{A Novel Paradigm for Multi-Vector Construction:} We introduce the first layout-informed paradigm for constructing multi-vector representations in VDR, which overcomes the storage efficiency bottleneck of conventional multi-vector models by leveraging document parsing.
    \item[\ding{183}] \textbf{A Flexible and Robust Framework:} Our method is designed as a training-free, plug-and-play framework that demonstrates robust and significant performance gains across a wide array of existing single-vector models, highlighting its versatility and ease of adoption.
    \item[\ding{184}] \textbf{Superior Performance with Enhanced Interpretability:} \ourmethod provides inherent interpretability by enabling retrieval results to be traced back to specific, parsed layout components, which significantly enhances its practicality and potential for real-world industrial applications.
\end{itemize}

\section{Related Work}
\label{sec:related_work}

\subsection{Visual Document Retrieval}
\label{sec:related_work_vdr}

VDR has become a crucial task for understanding visually-rich documents, moving beyond traditional \textbf{OCR-based} pipelines that often \textit{lose critical layout information} \cite{zhang2025ocr,most2025lost}. 
The advent of Vision-Language Models (VLMs) introduced end-to-end \textbf{single-vector} approaches (\textit{e.g.,} DSE \cite{ma2024dse}, GME \cite{zhang2024gme}, and UniSE \cite{liu2025any}), but these frequently \textit{struggle to capture the fine-grained semantics} required for dense documents. 
A significant leap forward was made with the \textbf{multi-vector} paradigm, pioneered by ColPali~\citep{faysse2024colpali}, which represents pages as numerous patch-level embeddings and employs late interaction for superior matching. 
Recent efforts have sought to optimize this paradigm at various levels: \textbf{model-level}, by exploring bidirectional architectures like ModernVBERT~\citep{teiletche2025modernvbert}; \textbf{data-level}, through advanced data synthesis and hard-negative mining as seen in works like Llama Nemoretriever Colembed~\citep{xu2025llama}; and \textbf{training-level}, via new objectives and multi-task frameworks such as jina-embeddings-v4~\citep{gunther2025jina}. 
Despite their performance, these multi-vector models introduce a \textit{severe storage bottleneck}. 
% Our proposed method, \ourmethod, specifically addresses this limitation by leveraging document parsing to generate a compact and structurally-aware multi-vector representation.

\subsection{Mutli-Vector Retrieval}
\label{sec:related_work_multi_vector}
The multi-vector paradigm, first popularized in \textbf{text retrieval} by ColBERT~\citep{khattab2020colbert}, represents documents as sets of token-level embeddings to enable fine-grained matching through a late-interaction mechanism \cite{qian2022multi,lee2023rethinking}.
This approach was further refined in the text domain by models like BGE-M3-Embedding~\citep{chen2024m3} and Jina-ColBERT-v2~\citep{jha2024jina}.
The paradigm was successfully adapted for \textbf{multimodal retrieval} by ColPali~\citep{faysse2024colpali}, shifting the focus to visual documents, which are inherently more complex than natural images.
Despite their superior performance, these models face \textit{a critical efficiency bottleneck from the prohibitive storage cost of patch-level embeddings} \cite{liu2023understanding,shrestha2024espn,park2025scv}.
Current optimization efforts fall into three main categories, each with inherent drawbacks. 
(i) \textbf{Pruning} redundant embeddings, as seen in DocPruner~\citep{yan2025docpruner} and Prune-then-Merge \cite{yan2026sculpting}, often \textit{struggles to maintain performance under aggressive compression}.
(ii) \textbf{Merging} similar embeddings via clustering, exemplified by Light-ColPali~\citep{ma2025towards}, can \textit{dilute fine-grained information, leading to unstable performance}.
(iii) \textbf{Introducing abstract, learnable tokens}, pioneered by MetaEmbed~\citep{xiao2025metaembed} and CausalEmbed \cite{huo2026causalembed}, creates compact representations that, however, \textit{lack an explicit grounding in the document’s inherent layout structure}.
In contrast, \ourmethod addresses these limitations by leveraging document parsing to generate a compact set of layout-informed embeddings.
% In contrast, \ourmethod directly addresses these limitations. 
% By leveraging document parsing to generate a compact set of layout-informed embeddings, it avoids the information dilution of merging and the performance degradation of aggressive pruning, while ensuring its representations are semantically grounded in the document’s structure, unlike abstract tokens.

\subsection{Document Parsing VLM}
\label{sec:related_work_doc_parsing_vlm}
Document parsing VLMs have emerged as critical tools for converting visually-rich document images into structured formats like LaTeX or Markdown \cite{zhang2024documentparsing,ouyang2025omnidocbench,zhang2025docr}. 
Early models, such as Nougat~\citep{blecher2023nougat} and Donut \cite{kim2022ocr}, adopted an end-to-end, sequence-to-sequence approach but often struggled with the computational cost of high-resolution inputs.
To balance accuracy and efficiency, a more recent multi-stage paradigm has gained traction.
This is exemplified by models like MinerU2.5~\citep{niu2025mineru25}, which first performs efficient layout analysis on a downsampled image before conducting targeted, high-resolution recognition on cropped regions.
This coarse-to-fine strategy, also seen in models like Dolphin~\citep{feng2025dolphin} and MonkeyOCR~\citep{zhang2025monkeyocr}, effectively mitigates the O(N\textsuperscript{2}) complexity of processing high-resolution images end-to-end.
For \ourmethod, we select MinerU2.5 as our document parser, given its state-of-the-art accuracy and efficiency. A quantitative comparison with other document parsing models will be presented in Section~\ref{sec:hyperparameter_analysis}.

\section{Methodology}
\label{sec:method}

In this section, we first formalize the task of VDR within the multi-vector paradigm. We then introduce the \ourmethod~framework, detailing its multi-stage process for generating compact, layout-informed document representations.
See our pseudo-code in \autoref{app:algo_workflow}.

\subsection{Task Formulation}
\label{sec:task_formulation}

The primary goal of VDR is, given a textual query $q$, to retrieve a ranked list of relevant document pages from a large-scale corpus $\mathcal{C} = \{d_1, d_2, \dots, d_{|\mathcal{C}|}\}$.
In the conventional multi-vector retrieval paradigm, a document page $d$ is first rendered as an image and then uniformly partitioned into a grid of $N_p$ patches, $\{p_j\}_{j=1}^{N_p}$. A VLM, serving as an encoder $\Phi(\cdot)$, maps each patch $p_j$ into a $D$-dimensional embedding, resulting in a large set of patch-level document embeddings $\mathbf{D}_{\text{grid}} = \{\mathbf{d}_j\}_{j=1}^{N_p}$, where each $\mathbf{d}_j \in \mathbb{R}^D$. Concurrently, the same encoder maps the textual query $q$ into a set of $N_q$ token-level embeddings $\mathbf{Q} = \{\mathbf{q}_i\}_{i=1}^{N_q}$, where each $\mathbf{q}_i \in \mathbb{R}^D$.
The relevance score $s(q, d)$ between the query and the document is then computed using a late-interaction mechanism, typically \texttt{MaxSim}, as defined below:
\begin{equation}
\label{eq:maxsim_grid}
s(q, d) = \sum_{i=1}^{N_q} \max_{j=1}^{N_p} (\mathbf{q}_i^\top \mathbf{d}_j).
\end{equation}
where vectors are assumed to be L2-normalized. While this grid-based approach excels at fine-grained matching, it incurs a prohibitive storage cost of $O(N_p \times D)$ per document page, as $N_p$ can be in the hundreds or thousands.

The objective of our work is to address this critical bottleneck. We aim to replace the large, layout-agnostic set $\mathbf{D}_{\text{grid}}$ with a highly compact, structurally-aware multi-vector representation $\mathbf{D}_{\text{\ourmethod}}$, which contains only $k$ vectors, where $k \ll N_p$. This new representation should significantly reduce storage requirements to $O(k \times D)$ while simultaneously enhancing retrieval performance by being explicitly grounded in the document's semantic layout.

\begin{figure}[!t]
  \centering
  \includegraphics[width=\linewidth]{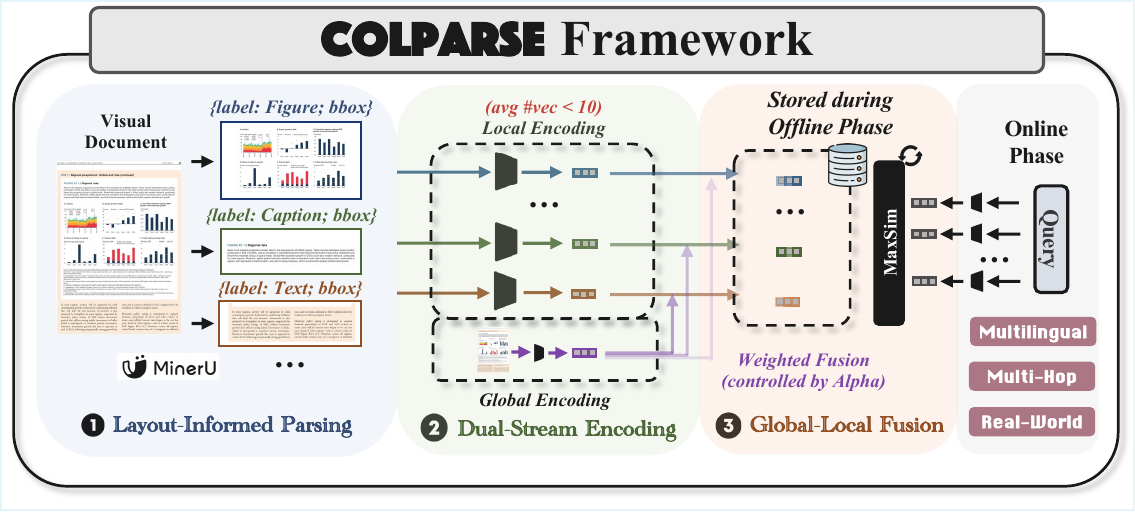}
  \vspace{-1em}
  \caption{The simplified illustration of \ourmethod framework.}
   \label{fig:colparse_framework}
   \vspace{-1.3em}
\end{figure}

\subsection{The \ourmethod Framework}
\label{sec:colparse_framework}

\ourmethod~is a plug-and-play framework that revolutionizes the construction of multi-vector representations by moving ``beyond the grid.'' Instead of relying on uniform patches, it leverages structural understanding to generate a compact and semantically rich set of embeddings. As shown in \autoref{fig:colparse_framework}, our framework operates offline in a three-stage pipeline for each document image $d \in \mathbb{R}^{H \times W \times 3}$: (1) Layout-Informed Document Parsing, (2) Dual-Stream Encoding, and (3) Global-Local Fusion.

\subsubsection{Layout-Informed Document Parsing}
\label{sec:stage1_parsing}

The foundational step of \ourmethod~is to deconstruct the document image into its constituent semantic components. We employ a specialized, off-the-shelf document parsing model, $\Psi_{\text{parse}}(\cdot)$, which functions as a layout detector. For a given document image $d$, the parser identifies a set of $k$ distinct layout regions, outputting their corresponding bounding boxes and content type labels: $[\{b_j, c_j\}_{j=1}^k] = \Psi_{\text{parse}}(d).$
Here, $b_j = (x_{j1}, y_{j1}, x_{j2}, y_{j2})$ is the bounding box for the $j$-th region, defining its coordinates within the original image. $c_j$ is a categorical label from a predefined set of content types $\mathcal{C}_{\text{types}} = \{\text{`title'}, \text{`table'}, \text{`figure'}, \dots\}$, indicating the semantic nature of the region.
Using these bounding boxes, we crop the original image $d$ to extract a set of $k$ sub-images $\mathcal{S}_d$. The number of sub-images, $k$, is dynamically determined by the parser based on the document's complexity and is typically very small (\textit{e.g.,} $k < 10$). The process can be formulated as $\mathcal{S}_d = \{s_1, s_2, \dots, s_k\} \quad \text{where } s_j = \text{Crop}(d, b_j).$
Each sub-image $s_j \in \mathbb{R}^{H_j \times W_j \times 3}$ is of variable size, flexibly conforming to the dimensions of the detected layout component. This intelligent, content-aware segmentation forms the basis for our compact representation.

\subsubsection{Dual-Stream Encoding}
\label{sec:stage2_encoding}

Once the document is parsed, we generate embeddings using a standard single-vector retrieval model, $\Phi_{\text{enc}}(\cdot): \mathbb{R}^{H' \times W' \times 3} \to \mathbb{R}^D$, which serves as the base encoder. This stage operates in two parallel streams to capture both local, layout-specific details and global, page-level context.

\paragraph{Local Encoding.} Each of the $k$ variable-sized sub-images $s_j$ from $\mathcal{S}_d$ is resized and independently passed through the encoder $\Phi_{\text{enc}}(\cdot)$ to produce a corresponding $D$-dimensional local vector, $\mathbf{v}_{\text{local}}^{(j)}$. This process yields a set of $k$ local embeddings, each representing a distinct semantic unit: $\mathbf{D}_{\text{local}} = \{\mathbf{v}_{\text{local}}^{(j)} \in \mathbb{R}^D \mid \mathbf{v}_{\text{local}}^{(j)} = \Phi_{\text{enc}}(s_j), \forall s_j \in \mathcal{S}_d\}_{j=1}^k.$

\paragraph{Global Encoding.} In parallel, the \textit{entire, un-cropped} document page image $d$ is passed through the same encoder $\Phi_{\text{enc}}(\cdot)$ to generate a single $D$-dimensional global vector, $\mathbf{v}_{\text{global}}$. This vector serves as a holistic summary of the page, capturing the overall context and relationships between layout components: $\mathbf{v}_{\text{global}} = \Phi_{\text{enc}}(d) \in \mathbb{R}^D.$
This dual-stream design ensures our final representation benefits from both the specificity of individual layout elements and the broader context of the entire page.

\subsubsection{Global-Local Fusion for Final Representation}
\label{sec:stage3_fusion}

% The final offline stage is crucial: it fuses the local, layout-specific representations with the global, page-level context to form a cohesive and powerful multi-vector set. 
Instead of a simple summation, we employ a weighted fusion strategy that allows for a tunable balance between these two critical streams of information.
For each of the $k$ local vectors, $\mathbf{v}_{\text{local}}^{(j)}$, we inject the holistic context captured by the single global vector, $\mathbf{v}_{\text{global}}$, using a balancing factor $\alpha \in [0, 1]$. 
The resulting fused vector, $\mathbf{d}_{\text{fused}}^{(j)} \in \mathbb{R}^D$, synergistically integrates both fine-grained detail and overarching context. 
The fusion is performed as a weighted element-wise addition: $\mathbf{d}_{\text{fused}}^{(j)} = \alpha \cdot \mathbf{v}_{\text{global}} + (1-\alpha) \cdot \mathbf{v}_{\text{local}}^{(j)}, \quad \forall j \in \{1, \dots, k\}.$
% This hyperparameter $\alpha$ acts as a control knob governing the contribution of each component. 
% A higher value of $\alpha$ emphasizes the global context, making the representation more attuned to the overall theme and structure of the page. 
% This can be advantageous for broad, topic-level queries. 
% Conversely, a lower value of $\alpha$ prioritizes the specific, fine-grained information from the sub-image, enhancing precision for highly localized queries targeting a specific table, figure, or paragraph. 
This weighted mechanism provides the flexibility to tailor the representation's focus.
% , a capability that simple addition lacks.

This fusion operation is performed for all $k$ local vectors, producing the final multi-vector representation for document $d$, denoted as $\mathbf{D}_{\text{\ourmethod}} = \{\mathbf{d}_{\text{fused}}^{(j)}\}_{j=1}^k \subset \mathbb{R}^{k \times D}$.
This set, containing only $k$ structurally-aware and context-enriched vectors, is then stored for online retrieval, achieving our goal of massive storage reduction.

\subsubsection{Late-Interaction Scoring with \ourmethod}
\label{sec:scoring}

During the online retrieval phase, the relevance score between a query $q$ and a document $d$ is computed efficiently using the compact representation $\mathbf{D}_{\text{\ourmethod}}$. The query is first encoded into its token-level embeddings $\mathbf{Q} = \{\mathbf{q}_i \in \mathbb{R}^D\}_{i=1}^{N_q}$ as standard. The \texttt{MaxSim} score is then calculated over the $k$ fused document vectors:
\vspace{-2mm}
\begin{equation}
\label{eq:maxsim_colparse}
s_{\text{\ourmethod}}(q, d) = \sum_{i=1}^{N_q} \max_{j=1}^{k} (\mathbf{q}_i^\top \mathbf{d}_{\text{fused}}^{(j)}).
\end{equation}
\vspace{-1mm}
By replacing the search over $N_p$ grid-based vectors with a search over just $k$ layout-informed vectors, \ourmethod~not only dramatically reduces the storage footprint but also focuses the late-interaction mechanism on the most semantically salient parts of the document.

\subsection{Theoretical Foundation}
\label{sec:theorectical_foundation}

We provide a theoretical justification for \ourmethod~from an Information Bottleneck (IB) perspective. We demonstrate that the framework's architecture serves as a principled surrogate for solving the intractable IB objective in VDR by (1) disentangling source information via parsing and (2) refining it with contextual side-information.

\subsubsection{The VDR Compression Problem as an Information Bottleneck}

Let $D$ be a random variable representing a document image and $Q$ be a random variable for a query, drawn from an unknown distribution $P(Q)$. Let $R$ be a relevance variable, a function of $D$ and $Q$. The goal is to learn a compression function $g$ that maps a document $D$ to a compact representation $Z = g(D)$ by solving the IB Lagrangian:
\begin{equation}
    \min_{g} \mathcal{L}(Z) = I(Z; D) - \beta \mathbb{E}_{Q}[I(Z; R(D, Q))]
    \label{eq:ib_lagrangian}.
\end{equation}
This objective seeks to minimize the information $Z$ retains about the source $D$ (compression) while maximizing the information it preserves about the relevance $R$ (prediction). The expectation $\mathbb{E}_{Q}$ over the unknown query distribution makes this problem intractable at indexing time.

\subsubsection{Information Disentanglement via Parsing}

\ourmethod's first stage, parsing, transforms the input space. Let $\Psi_{\text{parse}}(D) = \{S_1, \dots, S_k\}$ be the set of sub-images (semantic regions) derived from document $D$. These regions form a partition of the document's core semantic content. By the chain rule of mutual information, the information in the original document can be expressed through its components: $I(D; R) = I(S_1, S_2, \dots, S_k; R)$.
We posit the \textbf{Semantic Concentration Axiom}: for any given query $Q$, the relevance signal $R$ is predominantly determined by a single primary semantic region $S_{j^*} \in \{S_j\}$, where $j^*$ is the index of the most relevant region. This implies near conditional independence for the remaining regions: $I(S_{\neg j^*}; R | S_{j^*}) \approx 0, \quad \text{where } S_{\neg j^*} = \{S_j\}_{j \neq j^*}.$
This axiom leads to the approximation: $I(D; R) \approx I(S_{j^*}; R).$
This decomposition transforms the problem from compressing monolithic variable $D$ to creating a set of representations, one for each potential primary channel $S_j$. This provides a structural prior to IB problem, justifying creation of a multi-vector set $\{g_j(S_j)\}_{j=1}^k$ instead of a single vector $g(D)$.

\subsubsection{Contextual Refinement via Synergistic Fusion}

\ourmethod~generates two sets of intermediate representations: local vectors $\{V_j = \Phi_{\text{enc}}(S_j)\}_{j=1}^k$ and a global vector $V_{\text{global}} = \Phi_{\text{enc}}(D)$. Each of these encoding steps is itself an information bottleneck, subject to the Data Processing Inequality (DPI): $I(V_j; R) \le I(S_j; R),\quad \forall j \in \{1, \dots, k\}$; $I(V_{\text{global}}; R) \le I(D; R)$.
The core of \ourmethod~is the fusion step, which creates the final representation set $\{Z_j = V_j + V_{\text{global}}\}_{j=1}^k$. To analyze its benefit, consider the joint information held by the local-global pair $(V_j, V_{\text{global}})$ about the relevance $R$. Using the chain rule:
\begin{equation}
    I(V_j, V_{\text{global}}; R) = I(V_j; R) + \underbrace{I(V_{\text{global}}; R | V_j)}_{\text{\textbf{Contextual Information Gain}}}.
    \label{eq:contextual_gain}
\end{equation}
This gain term quantifies the new information about relevance that the global context provides, given the local region. It is non-zero if global context helps disambiguate local content.
The fusion function $f(V_j, V_{\text{global}}) = V_j + V_{\text{global}}$ creates the vector $Z_j$. By DPI, this fusion is also a bottleneck:
\begin{equation}
    I(Z_j; R) = I(V_j + V_{\text{global}}; R) \le I(V_j, V_{\text{global}}; R).
\end{equation}
The objective of this fusion is to craft a compact vector $Z_j$ that is more informative than the local vector $V_j$ alone, by capturing a significant portion of the contextual gain. The net improvement in information for region $j$ is:
\begin{equation}
    \Delta I_j = I(Z_j; R) - I(V_j; R).
\end{equation}
The success of \ourmethod~relies on this fusion being effective, \textit{i.e.,} ensuring $\Delta I_j > 0$. This holds when the fusion operation successfully encodes the contextual information gain. The simple vector addition serves as a parameter-free, computationally efficient mechanism to achieve this. The final representation $\mathbf{D}_{\text{\ourmethod}} = \{Z_j\}_{j=1}^k$ is thus a set of contextually-refined, disentangled vectors that more effectively preserve query-relevant information, providing a superior solution to the VDR compression problem.

% See more theoretical analysis in Appendix \ref{app:more_theoretical_analysis}.
See more theoretical analysis in \autoref{app:more_theoretical_analysis}.

\section{Experiment}
\label{sec:experiment}

\subsection{Experimental Setup}
\label{sec:experimental_setup}

\begin{figure*}[!ht]
  \centering
  \includegraphics[width=0.8\linewidth]{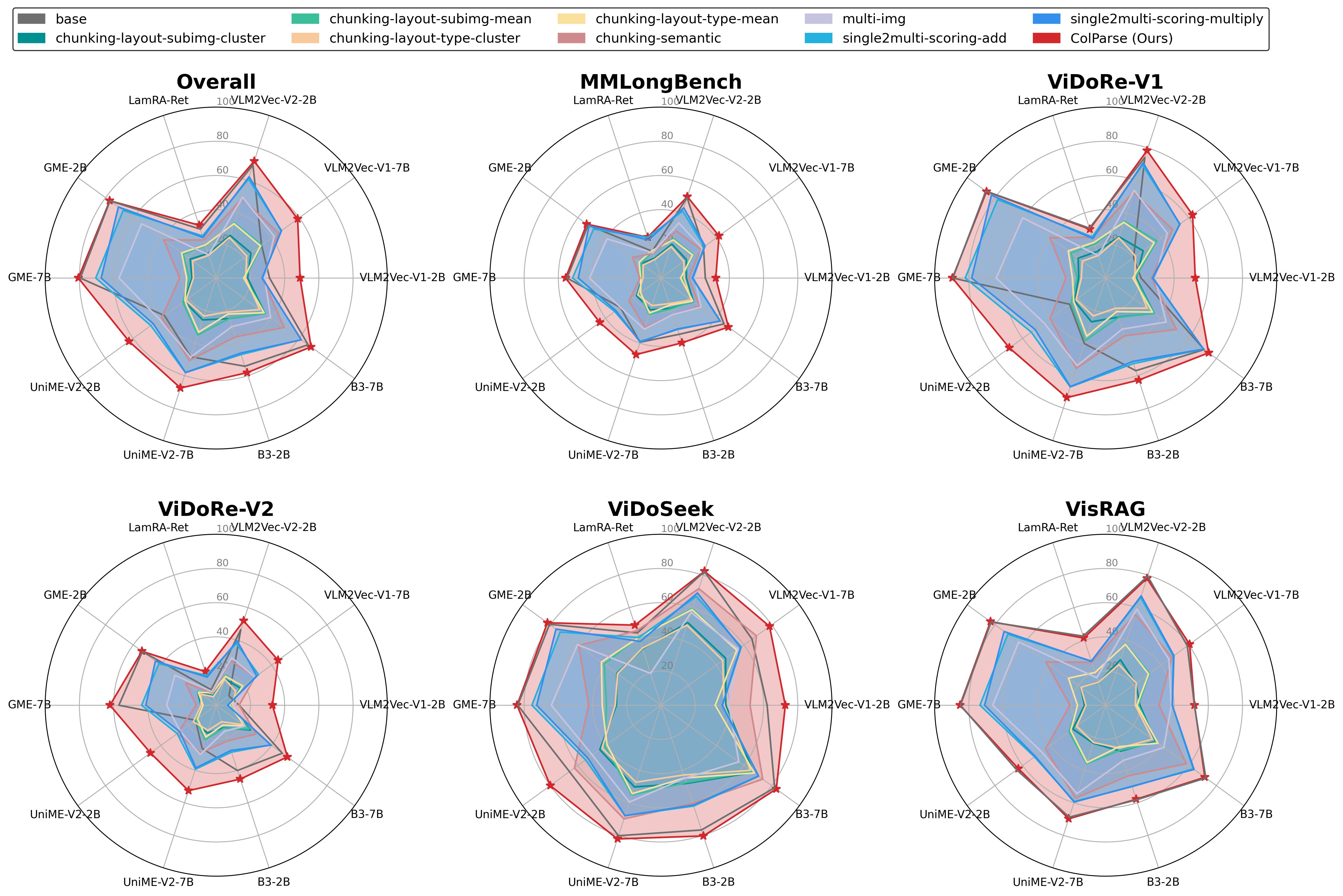}
  % \vspace{-2em}
  \caption{The performance comparison (evaluated by nDCG@5) between \ourmethod and baselines on five VDR benchmarks across ten mainstream single-vector multimodal retrieval models. Refer to \autoref{tab:full_results_mmlongbench_vidorev1} and \autoref{tab:full_results_vidorev2_vidoseek_visrag} for detailed result records due to the space limit.}
   \label{fig:main_result_visualization}
   \vspace{-1em}
\end{figure*}

\paragraph{Benchmarks and Evaluation.}
To ensure a comprehensive evaluation, we assess the performance of \ourmethod across five mainstream VDR benchmark suites, encompassing a total of 24 diverse datasets: \textbf{ViDoRe-V1} \cite{faysse2024colpali}, \textbf{ViDoRe-V2} \cite{mace2025vidorev2}, \textbf{VisRAG} \cite{yu2024visrag}, \textbf{ViDoSeek} \cite{wang2025vidorag}, and \textbf{MMLongBench} \cite{ma2024mmlongbench}. We validate the versatility and plug-and-play nature of our framework by applying it to ten prominent single-vector retrieval models: \textbf{VLM2Vec-V1-2B/7B} \cite{jiang2024vlm2vec}, \textbf{VLM2Vec-V2-2B} \cite{meng2025vlm2vec}, \textbf{LamRA-Ret} \cite{liu2025lamra}, \textbf{GME-2B/7B} \cite{zhang2024gme}, \textbf{UniME-V2-2B/7B} \cite{gu2025unimev2}, \textbf{B3-2B/7B} \cite{thirukovalluru2025breaking}. See details of benchmarks and models in Appendix \ref{app:benchmark_details} and \ref{app:model_details}, respectively. In alignment with established VDR practices \cite{wasserman2025real,Quentin2025vidorev3}, we use the Normalized Discounted Cumulative Gain at 5 (nDCG@5) as the evaluation metric.
\vspace{-4mm}
\paragraph{Baselines.}
To demonstrate the superiority of \ourmethod, we compare it against a diverse set of baselines organized into five distinct categories. 
\textbf{\ding{182} Base:} This category represents the original performance of the ten single-vector models without any multi-vector adaptation. 
\textbf{\ding{183} Multi-img:} In this approach, all sub-images parsed from a document are fed simultaneously into the base models, leveraging their native support for multi-image inputs to generate a single representative vector.
\textbf{\ding{184} Chunking-layout:} This category explores strategies that operate directly on the final layer of token embeddings from the base model. Guided by the layout parsing results, these tokens are chunked and aggregated, while any tokens outside the parsed bounding boxes are discarded. We evaluate four variants: \texttt{type-cluster} (tokens of the same content type are merged via semantic clustering), \texttt{type-mean} (tokens of the same content type are merged via mean pooling), \texttt{subimg-cluster} (tokens from the same sub-image region are clustered), and \texttt{subimg-mean} (tokens from the same sub-image region are pooled).
\textbf{\ding{185} Chunking-semantic:} In contrast to layout-guided methods, this baseline performs hierarchical clustering on the entire set of final-layer tokens to generate a fixed number of vectors (defaulting to 10), operating agnostically to the document's layout structure.
\textbf{\ding{186} Single2multi:} This category mimics \ourmethod by encoding each parsed sub-image separately but employs alternative scoring mechanisms instead of our vector fusion. The two variants are \texttt{scoring-add} and \texttt{scoring-multiply}, where the final document score is computed by either adding or multiplying the individual query-sub\_img similarity scores with the global\_img-sub\_img similarity scores, respectively.

\paragraph{Implementation Details.}
To ensure a fair comparison and reproducibility, our entire evaluation pipeline is built upon the MMEB codebase\footnote{\url{https://github.com/TIGER-AI-Lab/VLM2Vec}}. We employ MinerU2.5\footnote{\url{https://github.com/opendatalab/MinerU}} \cite{niu2025mineru25} as the unified document parsing model across all experiments due to its state-of-the-art performance and efficiency (See details of MinerU2.5 in Appendix \ref{app:mineru25_details}). For \ourmethod, we choose $\alpha$ ranging from 0.1 to 0.9 with an interval of 0.1, and we select the optimal hyperparameter for each base model to report the final results (validated in Section \ref{sec:hyperparameter_analysis}). Experiments were conducted on a cluster of NVIDIA A100 (80G) GPUs. The complete codebase will be made publicly available upon acceptance.

\subsection{Experimental Analysis}
\label{sec:experimental_analysis}

\subsubsection{Main Result}
\label{sec:main_result}

\textbf{\ourmethod consistently outperforms both single-vector base models and existing multi-vector optimization baselines across diverse benchmarks.} 
As shown in \autoref{tab:full_results_mmlongbench_vidorev1} and \autoref{tab:full_results_vidorev2_vidoseek_visrag} (Appendix \ref{app:main_results}), \ourmethod achieves a remarkable average nDCG@5 gain of 31.64 points for VLM2Vec-V1-2B and 42.69 points for its 7B counterpart on the ViDoRe-V1 benchmark. 
\autoref{fig:main_result_visualization} further visualizes this superiority, where the red envelope representing \ourmethod consistently forms the outermost boundary across all ten mainstream models. 
This suggests that layout-informed sub-image representations capture critical fine-grained details that are typically ``diluted'' in monolithic global embeddings.

\textbf{The framework demonstrates exceptional versatility and robustness as a training-free, plug-and-play module for various VLM-based embeddings.} 
Across ten distinct models including VLM2Vec, GME, UniME, and B3, \ourmethod consistently yields performance improvements regardless of the model's architecture or parameter scale. 
Notably, even for the high-performing GME-7B model, \ourmethod maintains state-of-the-art results while other optimization strategies like semantic chunking (\texttt{c-sem}) cause a drastic performance drop from 89.36 to 23.21 on ViDoRe-V1. 
This universality implies that layout awareness is a fundamental, model-agnostic enhancement for visual document understanding that can be unlocked at representation level.

\textbf{Layout-informed decomposition is substantially more effective for multi-vector construction than traditional token-level chunking or clustering.} 
Quantitative results reveal that baselines like \texttt{cl-t-c} (type-clustering) and \texttt{cl-s-m} (sub-image mean pooling) often lead to significant performance degradation; for instance, VLM2Vec-V2-2B's performance plummets from 74.16 to 24.85 when using token-level clustering. 
In contrast, \ourmethod boosts the same model to 78.41 by preserving the visual integrity of semantic regions rather than aggregating abstract token embeddings. 
This phenomenon indicates that maintaining the raw visual-semantic alignment within parsed regions is superior to post-hoc heuristic aggregation of late-stage features.

\textbf{\ourmethod exhibits superior efficacy in handling complex, long-form documents that require multi-hop reasoning.} 
On the MMLongBench dataset (\autoref{tab:full_results_mmlongbench_vidorev1}), \ourmethod elevates the average nDCG@5 of VLM2Vec-V1-2B from 25.93 to 32.07 and UniME-V2-2B from 29.31 to 44.21, outperforming all other compression baselines. 
This performance leap is particularly evident in the cases which require cross-page information locating. 
We speculate that by intelligently segmenting pages into key layout components (\textit{e.g.,} tables, figures), \ourmethod reduces the ``signal-to-noise'' ratio during the late-interaction phase, allowing the model to focus on the most semantically salient regions.

\textbf{The global-local fusion strategy is critical for providing necessary contextual grounding to isolated semantic regions.} 
Comparative analysis with the \texttt{single2multi} (\texttt{s2m-add/mul}) baselines in \autoref{tab:full_results_vidorev2_vidoseek_visrag} shows \ourmethod consistently leads on challenging tasks like VisRAG and ViDoSeek, achieving 51.96 on VisRAG with VLM2Vec-V1-2B compared to only 38.94 for \texttt{s2m-add}. 
This significant gap highlights local sub-images alone often lack the holistic context (\textit{e.g.,} a table without its preceding paragraph's context) needed for accurate retrieval. 
The synergy achieved through weighted fusion allows \ourmethod to retain both regional specificity and page-level semantics.

\subsubsection{Variant Study}
\label{sec:variant_study}

\begin{figure}[!t]
  \centering
  \includegraphics[width=0.9\linewidth]{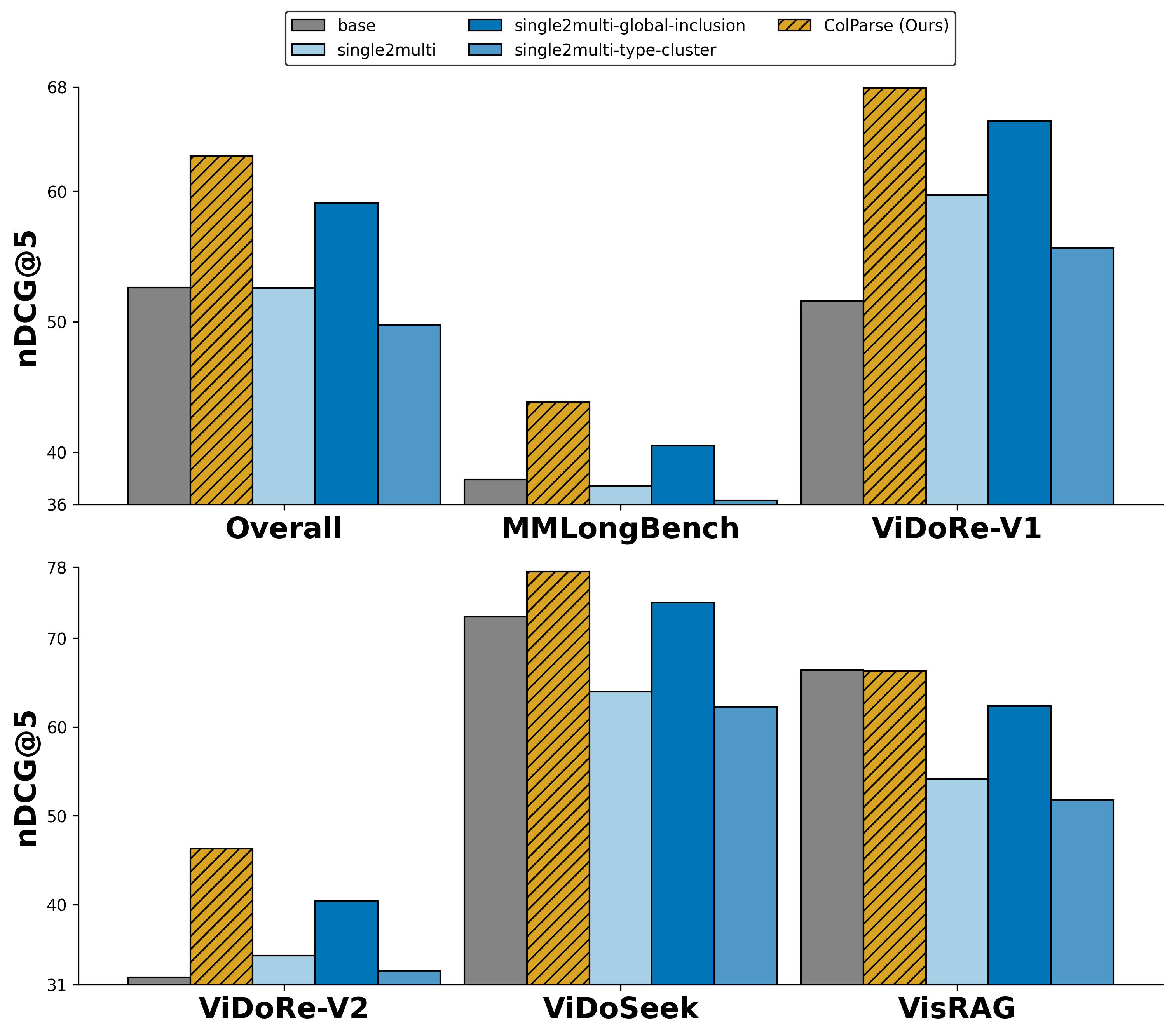}
  % \vspace{-1em}
  \caption{Variant study of \ourmethod and its variants. }
   \label{fig:variant_bar}
   \vspace{-6mm}
\end{figure}

We evaluate three specific variants: (i) \textbf{single2multi} (\texttt{s2m}) variant serves as the baseline layout-decomposed representation, which extracts and encodes sub-images as independent vectors without any global information. (ii) \textbf{single2multi-type-cluster} (\texttt{s2m-t-c}) variant extends \texttt{s2m} by aggregating all sub-image embeddings of the same semantic category into a single representative type-level vector to further compress the representation. (iii) \textbf{single2multi-global-inclusion} (\texttt{s2m-g-i}) variant appends the original holistic page-level embedding to the \texttt{s2m} sub-image set as a separate global vector to provide page-wide context.

\textbf{Synergistic global-local fusion is significantly more effective than simple global vector inclusion for contextualizing layout-aware representations.} As shown in \autoref{fig:variant_bar}, \ourmethod consistently outperforms the \texttt{s2m-g-i} variant across all ten base models and benchmarks, for instance, achieving a gain of 2.44 points in nDCG@5 over \texttt{s2m-g-i} for VLM2Vec-V1-2B on ViDoRe-V1. This superiority is further visualized in the radar plots of \autoref{fig:variant_result_visualization} (Appendix \ref{app:variant_study}), where \ourmethod (red envelope) consistently covers the largest area, particularly in dense tasks like VisRAG where it leads \texttt{s2m-g-i} by over 2 points on the 7B model. This suggests that element-wise fusion serves as a deeper conditioning mechanism than simple inclusion, allowing local features to be fundamentally ``re-weighted'' by the global semantic environment of the document.

See more analysis in Appendix \ref{app:variant_study} due to the space limit.

\begin{figure}[!t]
  \centering
  \includegraphics[width=0.8\linewidth]{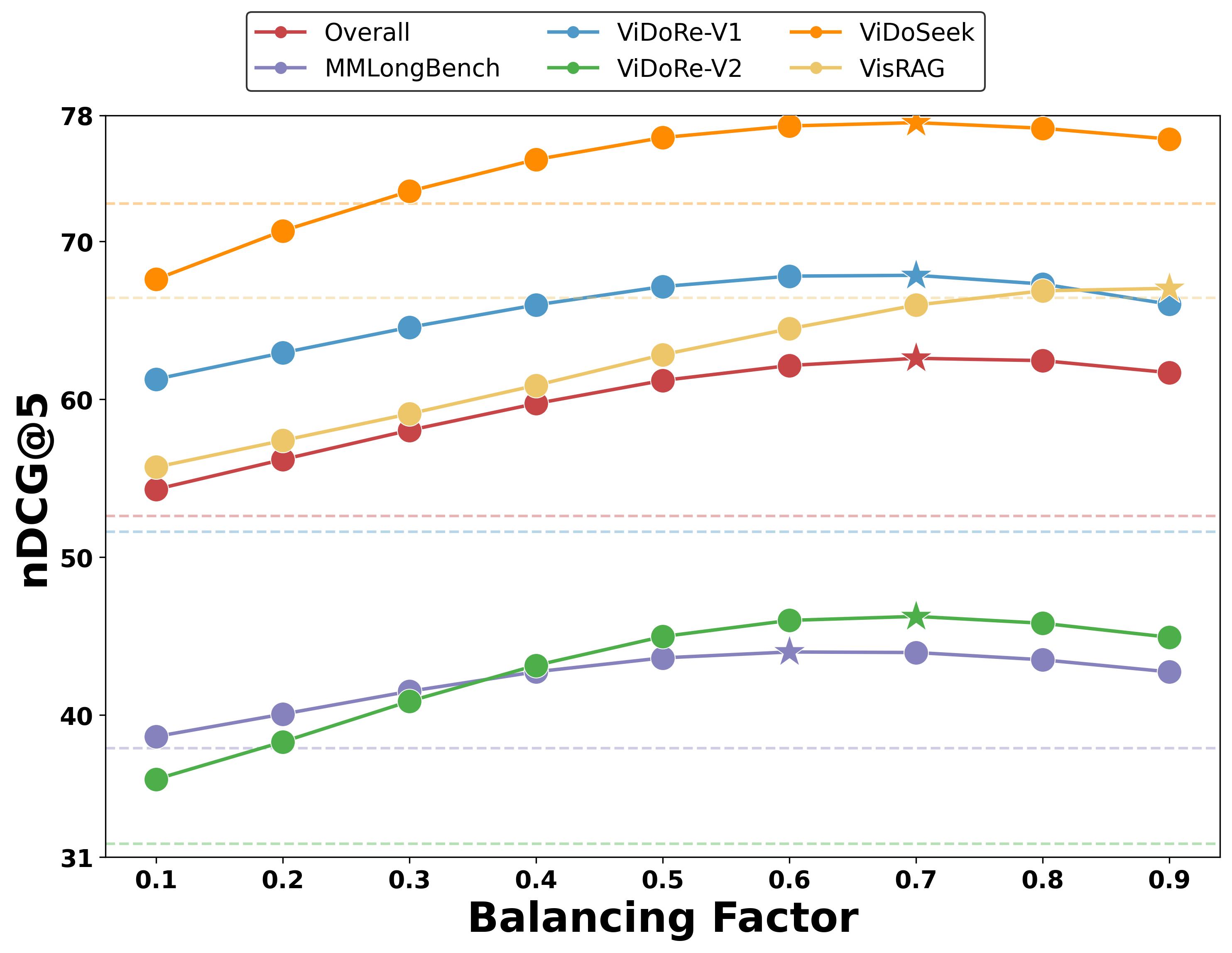}
  % \vspace{-1em}
  \caption{The comparison of the \textit{average} performance of \ourmethod across different balancing factors. The dash lines refer to the base results; and the star points refer to the best-performing balancing factors. See \autoref{fig:balancing_factor_all_line} for the model-level comparisons.}
   \label{fig:balancing_factor_average_line}
   \vspace{-6mm}
\end{figure}

\subsubsection{Hyperparameter Analysis}
\label{sec:hyperparameter_analysis}

\paragraph{Effect of balancing factor.} 
We investigate the sensitivity of \ourmethod to the balancing factor $\alpha$, as illustrated in \autoref{fig:balancing_factor_average_line} (details in Appendix \ref{app:hyperparameter_balancing_factor}). 
First, the retrieval performance is robust across the entire range of $\alpha \in [0.1, 0.9]$, consistently surpassing the single-vector baseline for all tested models. 
For instance, in VLM2Vec-V1-2B, even the lowest $\alpha$ of $0.1$ achieves an overall score of $35.97$, which is still a substantial improvement over the $31.18$ base result. 
Second, the performance typically follows a convex trajectory, with the optimal balance point generally leaning towards the global context to effectively ground the isolated layout components. 
Quantitatively, most models reach their peak performance within the range of $\alpha \in [0.6, 0.8]$. 
Finally, a moderate synergistic fusion is essential, as either excessive local specificity or excessive global dominance leads to sub-optimal results. 
In B3-7B, performance steadily climbs from $62.09$ at $\alpha=0.1$ to its peak of $68.51$ at $\alpha=0.7$, before exhibiting a slight decline as the representation becomes overly dominated by the global vector at $\alpha=0.9$.

\begin{figure}[!t]
  \centering
  \includegraphics[width=0.8\linewidth]{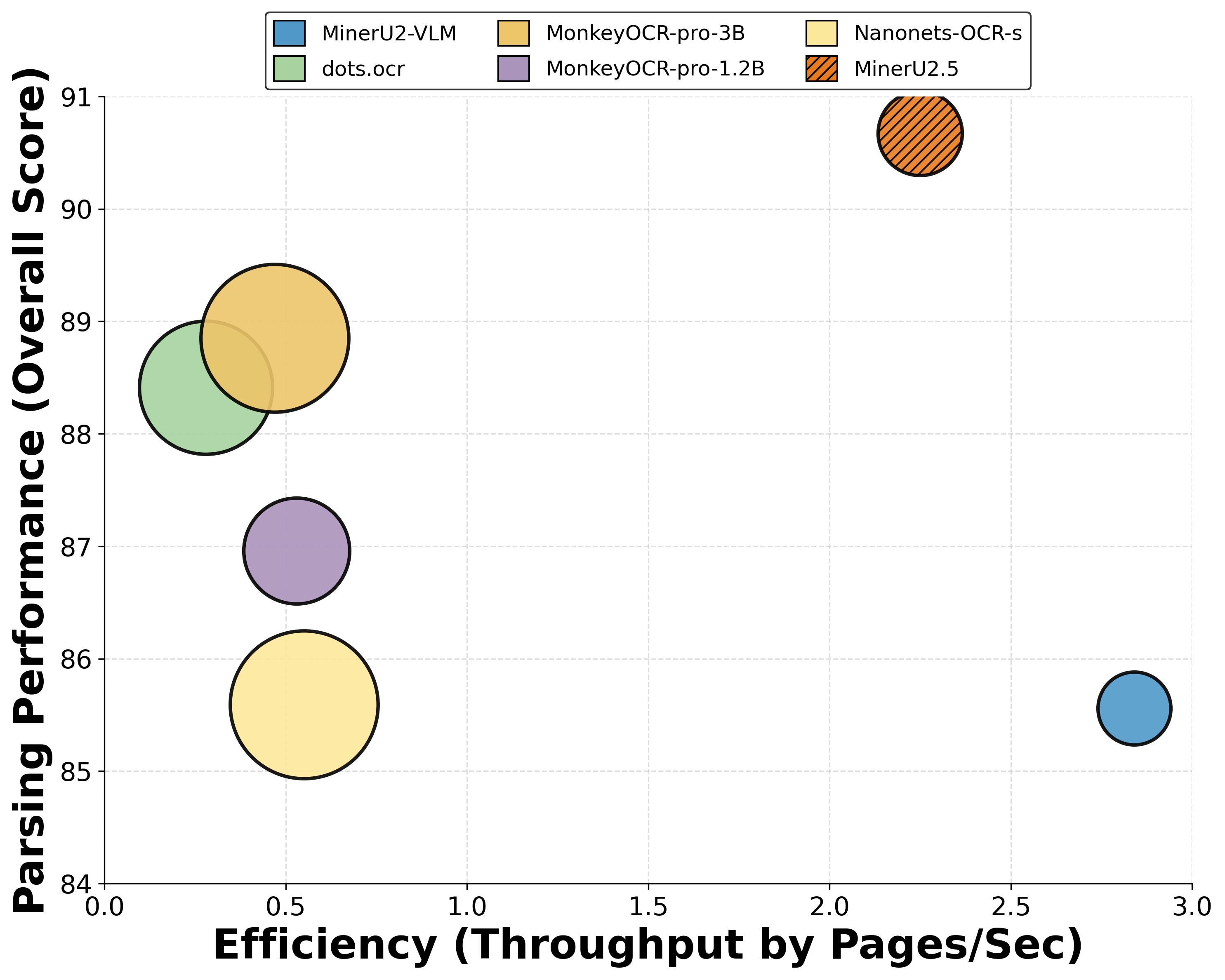}
  \vspace{-1mm}
  \caption{Comparison of MinerU2.5 against its counterparts. The y-axis represents the overall score of OmniDocBench \cite{ouyang2025omnidocbench}, the x-axis shows end-to-end throughput (Pages/Sec), and bubble size indicates the parameter size.}
   \label{fig:mineru25_comparison}
   \vspace{-2mm}
\end{figure}

\paragraph{Effect of document parsing model.} 
We evaluate the capability of various document parsing models and conclude that MinerU2.5 provides the optimal balance between parsing fidelity and inference efficiency, as shown in \autoref{fig:mineru25_comparison} (Details in Appendix \ref{app:hyperparameter_document_parsing_model}). 
First, MinerU2.5 demonstrates superior parsing accuracy across all semantic and structural dimensions compared to existing specialized VLMs. 
Quantitatively, it achieves a state-of-the-art Overall score of 90.67 on OmniDocBench \cite{ouyang2025omnidocbench}, significantly outperforming the best baseline MonkeyOCR-pro-3B (88.85) and maintaining the lowest error rates in both text (0.047) and reading order (0.044) recognition. 
Second, MinerU2.5 achieves industrial-grade throughput, ensuring the practicality of the \ourmethod\ pipeline for large-scale document corpora. 
For example, it delivers an end-to-end processing speed of 2.25 Pages/sec, which is $4\times$ faster than high-parameter models like Nanonets-OCR-s (0.55 Pages/sec). 
Consequently, we select MinerU2.5 as our unified parser as it sits at the optimal Pareto front, offering the highest parsing quality while sustaining a remarkable throughput.

\subsubsection{Efficiency Analysis}
\label{sec:efficiency_analysis}

\begin{table}[!t]
  \centering
  \caption{Efficiency analysis of \ourmethod on the best performing model GME-7B and its multi-vector counterpart (w/ original setting). \textbf{Performance} denotes the overall score of MMEB-visdoc, \textbf{Storage} refers to the number of vectors stored per document, and \textbf{Latency} represents the average encoding time per document. * denotes the multi-vector model is trained w/ aligned configuration. {\color{RedOrange}Orange} arrows denote better and {\color{BlueGreen}blue} ones denote worse.}
  \label{tab:efficiency_comparison}
  % \vspace{-0.8em}
  \renewcommand\tabcolsep{8pt}
  \renewcommand\arraystretch{1}
  \footnotesize 
  \begin{tabular}{l|ccc} 
    \Xhline{1.2pt}
    \rowcolor{CadetBlue!20} 
    \textbf{Model} & \textbf{Performance} & \textbf{Storage} & \textbf{Latency} \\ 
    \Xhline{1pt}
    \textbf{GME-7b} & 79.50 & 1.00 & 0.30 \\
    \rowcolor{gray!10}
    \textbf{+\ourmethod} & 80.61\red{1.11} & 5.90\blueup{4.9} & 0.81\blueup{0.51} \\
    \textbf{ColQwen\textsuperscript{*}} & 80.02\red{0.52} & 768.00\blueup{767} & 0.41\blueup{0.11} \\
    \Xhline{1.2pt}
  \end{tabular}
  \vspace{-6mm}
\end{table}

To ensure a fair comparison, we evaluate \ourmethod\ using GME-7B, the best-performing single-vector model, and ColQwen, a multi-vector baseline with an aligned architecture and training configuration. 
We conclude that our paradigm achieves superior retrieval performance while drastically slashing the storage overhead inherent in traditional multi-vector models. 
Specifically, as shown in \autoref{tab:efficiency_comparison}, \ourmethod\ yields an overall score of 80.61 on MMEB-visdoc, outperforming the multi-vector counterpart ColQwen (80.02) while requiring only 5.9 vectors per page (See per-dataset records in Appendix \ref{app:efficiency_analysis}), a massive storage reduction of over 99\% compared to ColQwen's 768 vectors. 
Furthermore, although \ourmethod\ introduces a marginal increase in encoding latency, it remains a highly practical solution for real-world deployment. 
While the per-document latency rises to 0.81s, it is still significantly lower than the average 7s delay of conventional OCR-based pipelines, and the offline parsing stage can be optimized through parallel processing to mitigate indexing overhead.

\subsubsection{Case Study}
\label{sec:case_study}

\begin{figure}[!h]
  \centering
  \includegraphics[width=0.9\linewidth]{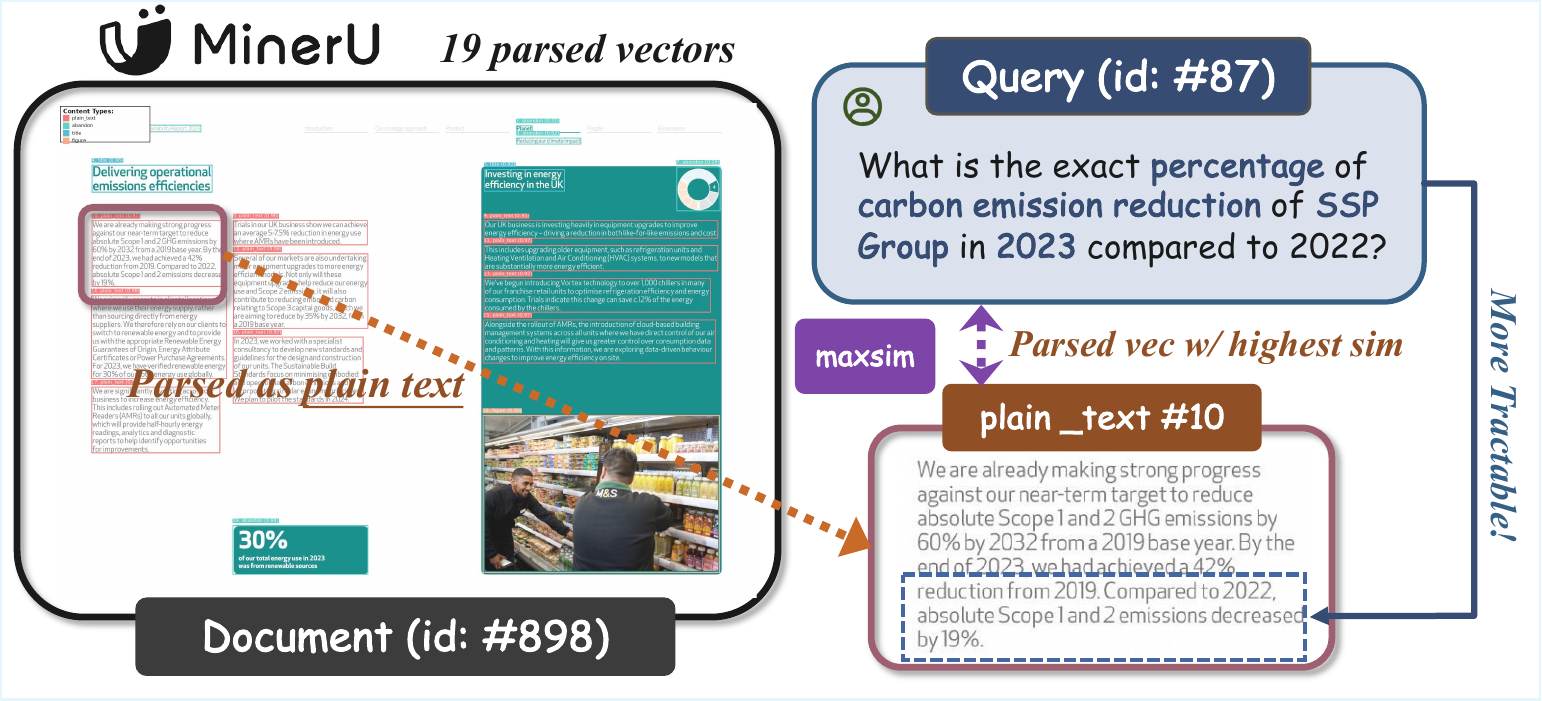}
  \vspace{-2mm}
  \caption{The illustration of a representative case.}
   \label{fig:colparse_case_main_text}
   \vspace{-4mm}
\end{figure}

We conduct a case study in Figure~\ref{fig:colparse_case_main_text} to illustrate the interpretability of \ourmethod beyond its superior retrieval performance. When handling a query requiring specific details, such as carbon emission reduction percentages, \ourmethod not only retrieves the correct document page but also pinpoints the exact parsed sub-vector (\textit{e.g.,} a specific text block) that yields the highest \texttt{MaxSim} score. This layout-informed granularity enables fine-grained information back-tracing, allowing the system to present the specific evidence region directly to the user. Such explainability is highly valuable in practical industrial scenarios, such as financial auditing or legal review, where the ability to verify the source of information is as critical as retrieval accuracy.

\section{Conclusion}
\label{sec:conclusion}
\vspace{-2mm}
In this paper, we introduced \ourmethod, a novel layout-informed paradigm designed to overcome the critical storage efficiency bottleneck in multi-vector VDR. 
Our framework uniquely generates a compact multi-vector representation by fusing layout-aware sub-image embeddings from a document parser with a holistic global vector. 
Extensive experiments demonstrated that \ourmethod achieves over 95\% storage compression while consistently improving retrieval performance across base models and datasets. 
Ultimately, \ourmethod charts a new course for the field, establishing that a deep understanding of document structure is critical for practical multimodal information systems.

\clearpage
\section*{Impact Statement}

\paragraph{Ethical Considerations.}
We believe that our proposed \ourmethod framework raises no new ethical concerns. Its motivation is to advance the efficiency and performance of VDR systems in a principled and resource-conscious manner. By leveraging existing document parsing technologies to create more compact and interpretable representations, our method promotes responsible AI development, adhering to established ethical guidelines in information retrieval research without relying on sensitive or proprietary data.

\paragraph{Societal Implications.}
\ourmethod introduces a new paradigm for multimodal information retrieval by shifting from storage-intensive, grid-based representations to a lightweight, layout-informed approach. It fundamentally resolves the critical conflict between fine-grained retrieval accuracy and the practical storage costs of large-scale deployment. By reducing storage requirements by over 95\% while simultaneously enhancing performance, \ourmethod significantly lowers the barrier for deploying state-of-the-art visual document understanding systems. This has the potential to democratize access to advanced information retrieval in diverse domains, including academic research, enterprise knowledge management, and digital archives. Furthermore, its inherent interpretability, which links retrieval results to specific document components, fosters greater transparency and user trust in AI-powered information systems.

% In the unusual situation where you want a paper to appear in the
% references without citing it in the main text, use \nocite
% \nocite{langley00}

\bibliography{colparse}
\bibliographystyle{icml2026}

%%%%%%%%%%%%%%%%%%%%%%%%%%%%%%%%%%%%%%%%%%%%%%%%%%%%%%%%%%%%%%%%%%%%%%%%%%%%%%%
%%%%%%%%%%%%%%%%%%%%%%%%%%%%%%%%%%%%%%%%%%%%%%%%%%%%%%%%%%%%%%%%%%%%%%%%%%%%%%%
% APPENDIX
%%%%%%%%%%%%%%%%%%%%%%%%%%%%%%%%%%%%%%%%%%%%%%%%%%%%%%%%%%%%%%%%%%%%%%%%%%%%%%%
%%%%%%%%%%%%%%%%%%%%%%%%%%%%%%%%%%%%%%%%%%%%%%%%%%%%%%%%%%%%%%%%%%%%%%%%%%%%%%%
\newpage
\clearpage
\appendix
% \onecolumn

\section{Algorithm Workflow}
\label{app:algo_workflow}

We formalize the complete workflow of our proposed \ourmethod framework in two distinct algorithms. 
\textbf{Algorithm~\ref{alg:offline_indexing}} details the offline indexing process, where \ourmethod{} generates a highly compact set of document embeddings through its sequential three-stage process.
Subsequently, \textbf{Algorithm~\ref{alg:online_retrieval}} illustrates the online retrieval phase, where the final relevance score is efficiently computed via a \texttt{MaxSim} operation using this compressed set of embeddings.

\begin{algorithm}[!ht]
\caption{The Offline Indexing Process of \ourmethod}\label{alg:offline_indexing}
\SetKwComment{Comment}{\# }{}
\SetCommentSty{textrm}

\Input{
    A document image $d \in \mathbb{R}^{H \times W \times 3}$; \\
    A document parser model $\Psi_{\text{parse}}$; \\
    A single-vector encoder $\Phi_{\text{enc}}: \mathbb{R}^{H' \times W' \times 3} \to \mathbb{R}^D$
}
\Output{
    A compact multi-vector representation $\mathbf{D}_{\text{\ourmethod}} \subset \mathbb{R}^{k \times D}$
}
\BlankLine

\tcc{\textcolor{blue}{Stage 1: Layout-Informed Document Parsing}}
$[\{b_j, c_j\}]_{j=1}^k \leftarrow \Psi_{\text{parse}}(d)$ \tcp*{Get $k$ bboxes and content types}
$\mathcal{S}_d \leftarrow \emptyset$\;
\For{$j \leftarrow 1$ \KwTo $k$}{
    $s_j \leftarrow \text{Crop}(d, b_j)$ \tcp*{Crop doc image $d$ using bbox $b_j$}
    $\mathcal{S}_d \leftarrow \mathcal{S}_d \cup \{s_j\}$\;
}
\BlankLine

\tcc{\textcolor{blue}{Stage 2: Dual-Stream Encoding}}
$\mathbf{D}_{\text{local}} \leftarrow \emptyset$\;
\For{each sub-image $s_j \in \mathcal{S}_d$}{
    $\mathbf{v}_{\text{local}}^{(j)} \leftarrow \Phi_{\text{enc}}(s_j)$ \tcp*{Encode local region}
    $\mathbf{D}_{\text{local}} \leftarrow \mathbf{D}_{\text{local}} \cup \{\mathbf{v}_{\text{local}}^{(j)}\}$\;
}
$\mathbf{v}_{\text{global}} \leftarrow \Phi_{\text{enc}}(d)$ \tcp*{Encode entire page for global context}
\BlankLine

\tcc{\textcolor{blue}{Stage 3: Global-Local Fusion}}
$\mathbf{D}_{\text{\ourmethod}} \leftarrow \emptyset$\;
\For{each local vector $\mathbf{v}_{\text{local}}^{(j)} \in \mathbf{D}_{\text{local}}$}{
    $\mathbf{d}_{\text{fused}}^{(j)} \leftarrow \mathbf{v}_{\text{local}}^{(j)} + \mathbf{v}_{\text{global}}$ \tcp*{Fuse by element-wise addition}
    $\mathbf{D}_{\text{\ourmethod}} \leftarrow \mathbf{D}_{\text{\ourmethod}} \cup \{\mathbf{d}_{\text{fused}}^{(j)}\}$\;
}
\BlankLine

\KwRet $\mathbf{D}_{\text{\ourmethod}}$\;
\end{algorithm}

\begin{algorithm}[!ht]
\caption{The Online Retrieval Process with \ourmethod}\label{alg:online_retrieval}
\SetKwComment{Comment}{\# }{}
\SetCommentSty{textrm}

\Input{
    A textual query $q$; \\
    A pre-computed compact representation $\mathbf{D}_{\text{\ourmethod}} = \{\mathbf{d}_{\text{fused}}^{(j)}\}_{j=1}^k$; \\
    An encoder $\Phi_{\text{enc}}$
}
\Output{
    The relevance score $s_{\text{\ourmethod}}(q, d)$
}
\BlankLine

\tcc{\textcolor{blue}{Step 1: Encode Query}}
$\mathbf{Q} \leftarrow \Phi_{\text{enc}}(q)$ \tcp*{Encode $q$ into $N_q$ token vectors $\{\mathbf{q}_i\}$}
\BlankLine

\tcc{\textcolor{blue}{Step 2: Late-Interaction Scoring (\texttt{MaxSim})}}
$score \leftarrow 0$\;
\For{each query vector $\mathbf{q}_i \in \mathbf{Q}$}{
    $max\_sim \leftarrow -\infty$\;
    \For{each fused document vector $\mathbf{d}_{\text{fused}}^{(j)} \in \mathbf{D}_{\text{\ourmethod}}$}{
        $sim \leftarrow {\mathbf{q}_i}^\top \mathbf{d}_{\text{fused}}^{(j)}$ \tcp*{Assuming L2-normalized vectors}
        $max\_sim \leftarrow \max(max\_sim, sim)$\;
    }
    $score \leftarrow score + max\_sim$ \tcp*{Aggregate max similarity}
}
\BlankLine

\KwRet $score$\;
\end{algorithm}

\section{More Theoretical Analysis}
\label{app:more_theoretical_analysis}

This section provides a detailed theoretical exposition of the concepts introduced in Section~\ref{sec:method}, grounding the \ourmethod~framework in fundamental principles of information theory.

\subsection{Information-Theoretic Preliminaries}

We begin by defining the core concepts used in our analysis.

\begin{definition}[Mutual Information]
The mutual information $I(X; Y)$ between two random variables $X$ and $Y$ measures their mutual dependence. It is defined as:
\begin{equation}
    I(X; Y) = \sum_{x \in \mathcal{X}} \sum_{y \in \mathcal{Y}} p(x, y) \log \frac{p(x, y)}{p(x)p(y)}.
\end{equation}
where $p(x, y)$ is the joint probability distribution, and $p(x)$ and $p(y)$ are the marginal distributions. $I(X; Y) = 0$ if and only if $X$ and $Y$ are independent.
\end{definition}

\begin{definition}[Conditional Mutual Information]
The conditional mutual information $I(X; Y | Z)$ measures the mutual information between $X$ and $Y$ given that a third variable $Z$ is known:
\begin{equation}
    I(X; Y | Z) = \mathbb{E}_{z \sim p(z)} [I(X; Y | Z=z)].
\end{equation}
\end{definition}

\begin{theorem}[Chain Rule for Mutual Information]
For a set of random variables $\{X_1, \dots, X_n\}$ and another variable $Y$, the chain rule states:
\begin{equation}
    I(X_1, \dots, X_n; Y) = \sum_{i=1}^{n} I(X_i; Y | X_1, \dots, X_{i-1}).
\end{equation}
This rule is fundamental for decomposing the information content of a complex system.
\end{theorem}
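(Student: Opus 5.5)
I will prove the chain rule by induction on $n$, using only the definitions of mutual information and conditional mutual information given just above. The key identity is the two-variable case,
\begin{equation*}
I(X_1, X_2; Y) = I(X_1; Y) + I(X_2; Y \mid X_1),
\end{equation*}
and the general statement follows by grouping.

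\textbf{Step 1: the two-variable case.} I will write $I(X_1,X_2;Y)$ as the expectation of $\log \frac{p(x_1,x_2,y)}{p(x_1,x_2)p(y)}$ under $p(x_1,x_2,y)$. Multiplying and dividing inside the logarithm by $p(x_1)p(x_1,y)$ factors the ratio as
\begin{equation*}
\frac{p(x_1,y)}{p(x_1)p(y)} \cdot \frac{p(x_2,y\mid x_1)}{p(x_2\mid x_1)\,p(y\mid x_1)}.
\end{equation*}
The logarithm of the product splits into a sum of two expectations. In the first, $x_2$ can be marginalized out, which gives exactly $I(X_1;Y)$. The second is $\mathbb{E}_{x_1\sim p(x_1)}\bigl[I(X_2;Y\mid X_1=x_1)\bigr]$, which is $I(X_2;Y\mid X_1)$ by the definition of conditional mutual information.

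\textbf{Step 2: induction.} Assume the formula holds for $n-1$ variables. Apply Step 1 with $X_1$ replaced by the block $(X_1,\dots,X_{n-1})$ and $X_2$ replaced by $X_n$. This gives
\begin{equation*}
I(X_1,\dots,X_n;Y) = I(X_1,\dots,X_{n-1};Y) + I(X_n;Y\mid X_1,\dots,X_{n-1}).
\end{equation*}
Expanding the first term by the inductive hypothesis yields the stated sum. The base case $n=1$ is trivial, since the conditioning set is empty.

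\textbf{Main obstacle: well-definedness.} The algebra itself is routine; the care lies in making sure every term is defined. The factorization in Step 1 is only valid where $p(x_1,x_2,y)>0$, so I will restrict all sums to the support and adopt the convention $0\log 0 = 0$. I will also implicitly assume discrete variables with finite mutual information, as the paper's definition does. Under these assumptions, splitting the expectation of a sum into a sum of expectations is legitimate.
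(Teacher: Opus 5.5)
Your proof is correct and complete. The factorization of the log-ratio is valid on the support of $p(x_1,x_2,y)$. The two resulting expectations are exactly $I(X_1;Y)$ and $I(X_2;Y\mid X_1)$ under the paper's definition of conditional mutual information as an average over $x_1$. The induction that treats $(X_1,\dots,X_{n-1})$ as a single variable is also right.

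The paper itself gives no proof of this statement. It lists the chain rule among standard information-theoretic preliminaries and uses it as a black box, so there is no argument of its own to compare with.

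For context, the usual textbook proof writes $I(X_1,\dots,X_n;Y)=H(X_1,\dots,X_n)-H(X_1,\dots,X_n\mid Y)$ and applies the entropy chain rule to both terms. Your direct log-likelihood-ratio route never needs the entropies to be finite, so it also covers countably infinite alphabets.

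To make the splitting step fully rigorous in that case, use a standard bound: for any two probability mass functions $p,q$ on the same alphabet, $\sum_{x:\,p(x)<q(x)} p(x)\log\frac{q(x)}{p(x)}\le\sum_x q(x)\le 1$, by $\log t\le t-1$. So the negative part of each of your two log-ratio terms has expectation at most $1$. Each piece therefore has a well-defined expectation in $(-\infty,\infty]$, and both linearity and the marginalization over $x_2$ are justified. Finiteness of $I(X_1,\dots,X_n;Y)$ then forces both pieces to be finite.
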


\begin{theorem}[Data Processing Inequality (DPI)]
\label{thm:dpi}
For any Markov chain of random variables $X \to Y \to Z$, where $Z$ is conditionally independent of $X$ given $Y$, the following inequality holds:
\begin{equation}
    I(X; Z) \le I(X; Y) \text{ and } I(X; Z) \le I(Y; Z).
\end{equation}
This theorem formalizes the notion that post-processing (the step from $Y$ to $Z$) cannot increase information about the original source $X$.
\end{theorem}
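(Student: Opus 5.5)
The plan is to derive the Data Processing Inequality from the chain rule for mutual information (the preceding theorem), together with the nonnegativity of conditional mutual information. We work with discrete variables, matching the definitions above. We use two facts. First, $I(X;Y\mid Z)\ge 0$, since it is an average of ordinary mutual informations, each of which is a Kullback--Leibler divergence and hence nonnegative. Second, by the definition of conditional mutual information, the Markov property $X \to Y \to Z$ (that is, $p(x,z\mid y)=p(x\mid y)p(z\mid y)$) is equivalent to $I(X;Z\mid Y)=0$.

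\textbf{Step 1.} We apply the chain rule to $I(X;Y,Z)$ in both orders:
\begin{equation*}
I(X;Y,Z) = I(X;Y) + I(X;Z\mid Y) = I(X;Z) + I(X;Y\mid Z).
\end{equation*}
Substituting $I(X;Z\mid Y)=0$ from the Markov property gives $I(X;Y) = I(X;Z) + I(X;Y\mid Z) \ge I(X;Z)$. This is the first inequality.

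\textbf{Step 2.} For the second inequality $I(X;Z)\le I(Y;Z)$, we note that the Markov chain reverses: $Z \to Y \to X$ also holds, because conditional independence of $X$ and $Z$ given $Y$ is symmetric in $X$ and $Z$. Running Step 1 with the roles of $X$ and $Z$ swapped gives $I(Z;X)\le I(Z;Y)$. Since mutual information is symmetric, as is clear from its defining formula, this is the claim.

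\textbf{Main obstacle.} The argument is routine. The only point needing care is the justification that $I(X;Y\mid Z)\ge 0$ and the equivalence between the Markov condition and vanishing conditional mutual information. For these we would invoke Gibbs' inequality, i.e.\ nonnegativity of KL divergence via Jensen applied to $-\log$, pointwise in $z$, and then average over $p(z)$. We would also remark that the same argument extends to general, non-discrete variables via densities, as long as the mutual informations involved are finite.
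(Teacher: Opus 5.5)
Your proof is correct. It is the standard textbook argument: expand $I(X;Y,Z)$ by the chain rule in both orders, use the Markov property to get $I(X;Z\mid Y)=0$, drop $I(X;Y\mid Z)\ge 0$, and reverse the chain to get the second inequality. The paper gives no proof of this statement; it lists the DPI as a standard preliminary next to the definitions and the chain rule, so there is no competing route to compare against.

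One small point of bookkeeping: the paper's chain rule puts the joint variable in the first argument. Your Step 1 expansion of $I(X;Y,Z)$ therefore already uses the symmetry of (conditional) mutual information, which you only cite explicitly in Step 2. This is harmless.

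Your Step 2 also makes explicit which variable must sit in the middle of the chain, and this exposes a slip in the paper's later use of the theorem. For $Z_j=V_j+V_{\text{global}}$, the chain that actually holds is $R\to(V_j,V_{\text{global}})\to Z_j$, since $Z_j$ is a deterministic function of the pair. Your first inequality applied to that chain gives $I(Z_j;R)\le I(V_j,V_{\text{global}};R)$. The chain the paper writes, $(V_j,V_{\text{global}})\to Z_j\to R$, is not valid in general, and if it held it would yield the reverse inequality.
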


\subsection{The Information Bottleneck (IB) Principle in VDR}
As stated in Section~\ref{sec:theorectical_foundation}, the VDR compression task can be framed as an IB problem \cite{tishby2000information}. The objective is to find a compressed representation $Z$ of a document $D$ that maximizes information about a relevance variable $R$, while minimizing information about the source $D$ itself.

\paragraph{Proof of Intractability.} The IB Lagrangian (Eq.~\ref{eq:ib_lagrangian} in the main text) requires computing an expectation over the distribution of all possible queries, $P(Q)$.
\begin{equation}
    \mathcal{L}(Z) = I(Z; D) - \beta \int_{q \in \mathcal{Q}} P(q) I(Z; R(D, q)) dq.
\end{equation}
Since $P(Q)$ is unknown and potentially infinite at the time of document indexing, this objective cannot be directly optimized. Therefore, practical methods must rely on principled approximations or surrogates for this ideal objective. \ourmethod~provides such a surrogate.

\subsection{Justification for Structural Disentanglement}
\ourmethod's parsing stage, $\Psi_{\text{parse}}(D) = \{S_1, \dots, S_k\}$, is justified by the Semantic Concentration Axiom. We now provide a more formal justification.

\begin{axiom}[Semantic Concentration]
\label{axiom:concentration}
For a given query $Q=q$, there exists a primary semantic region $S_{j^*} \in \{S_j\}$ that contains almost all the information required to determine relevance. The remaining regions $S_{\neg j^*} = \{S_j\}_{j \neq j^*}$ provide negligible additional information.
\begin{equation}
    I(S_{\neg j^*}; R | S_{j^*}, Q=q) \approx 0.
\end{equation}
\end{axiom}

\begin{proof}[Justification]
This axiom is an empirical assumption about the nature of user queries and documents. For a query ``What were the revenues in Q3 2023?'', the answer is almost certainly contained entirely within a single financial table. Information in other regions (e.g., the abstract, a methodology figure) is conditionally irrelevant once the correct table is identified.
\end{proof}

\begin{corollary}[Information Equivalence of Decomposed Representation]
Under the Semantic Concentration Axiom, the mutual information between the entire document and the relevance variable is approximately equal to the maximum information contained in any single semantic region.
\begin{equation}
    I(D; R) \approx \max_{j \in \{1, \dots, k\}} I(S_j; R).
\end{equation}
\end{corollary}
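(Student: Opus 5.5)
The plan is a sandwich argument: show that $I(D;R)$ is, up to the slack in Axiom~\ref{axiom:concentration}, bounded both above and below by $\max_j I(S_j;R)$. I work conditionally on a fixed query $Q=q$, so that $j^*=j^*(q)$ is well defined. At the end I note how the conditional statements combine into the statement as written.

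\textbf{Upper bound.} The parsing output is a deterministic function of $D$, so $I(D;R)=I(S_1,\dots,S_k;R)$, as already used in the main text. Apply the Chain Rule for Mutual Information, ordering the variables with $S_{j^*}$ first:
\begin{equation*}
I(S_1,\dots,S_k;R)=I(S_{j^*};R)+I(S_{\neg j^*};R\mid S_{j^*}).
\end{equation*}
The second term is approximately $0$ by Axiom~\ref{axiom:concentration}. This gives $I(D;R)\approx I(S_{j^*};R)\le \max_j I(S_j;R)$.

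\textbf{Lower bound.} For every $j$, $R\to D\to S_j$ is a Markov chain because $S_j=\mathrm{Crop}(D,b_j)$ is a function of $D$. The Data Processing Inequality (Theorem~\ref{thm:dpi}) then gives $I(S_j;R)\le I(D;R)$, and hence $\max_j I(S_j;R)\le I(D;R)$. Together with the upper bound, $I(S_{j^*};R)$ and $\max_j I(S_j;R)$ are both squeezed between $I(S_{j^*};R)$ and $I(S_{j^*};R)+\varepsilon$, where $\varepsilon:=I(S_{\neg j^*};R\mid S_{j^*},Q=q)$. So the approximation holds with additive error at most $\varepsilon$. A side consequence is that $j^*$ is, up to $\varepsilon$, an argmax.

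\textbf{Main obstacle.} The main obstacle is making ``$\approx$'' and the dependence on $q$ precise. Here is how I would handle it:
\begin{itemize}
\item I would state the axiom quantitatively, as $I(S_{\neg j^*};R\mid S_{j^*},Q=q)\le\varepsilon$, and prove the corollary pointwise in $q$ with explicit error $\varepsilon$.
\item If an unconditional version is wanted, I would average over $q$. This yields $I(D;R\mid Q)\le \mathbb{E}_q\bigl[\max_j I(S_j;R\mid Q=q)\bigr]+\varepsilon$. The maximum has to stay inside the expectation, because $j^*$ varies with the query; that placement is exactly what justifies keeping all $k$ vectors rather than only one.
\item One subtlety needs to be flagged: the DPI step requires $b_j$ to be determined by $D$. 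This holds because the parser is deterministic.
\end{itemize}
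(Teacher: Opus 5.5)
Your core step is the paper's: order the chain rule with $S_{j^*}$ first and discard $I(S_{\neg j^*};R\mid S_{j^*},Q=q)$ via Axiom~\ref{axiom:concentration}. Beyond that you improve on the paper in two ways.

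\begin{itemize}
\item The paper stops at $I(D;R\mid Q=q)\approx I(S_{j^*};R\mid Q=q)$ and never connects the primary region to the maximum over $j$. Your DPI lower bound $\max_j I(S_j;R)\le I(D;R)$ supplies that link. It shows $j^*$ is an $\varepsilon$-argmax rather than one by fiat, and it gives an explicit additive error.
\item The paper then averages over $Q$ and invokes $\mathbb{E}[\max_j X_j]\ge\max_j\mathbb{E}[X_j]$. That inequality points the wrong way for moving the maximum outside the expectation. Your conclusion that the unconditional form only holds as $\mathbb{E}_q[\max_j I(S_j;R\mid Q=q)]$ is the correct reading. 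It is also the form that actually motivates keeping all $k$ vectors.
\end{itemize}

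The one step that fails as you justify it is $I(D;R)=I(S_1,\dots,S_k;R)$. That the parse is a deterministic function of $D$ gives only $I(S_1,\dots,S_k;R)\le I(D;R)$. This is the same DPI direction you use for your lower bound, whereas your upper bound needs the reverse inequality.

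Because the $S_j$ are functions of $D$, you have exactly $I(D;R)=I(S_1,\dots,S_k;R)+I(D;R\mid S_1,\dots,S_k)$. The last term vanishes only if the parsed regions are sufficient for $R$. That is a separate premise: the paper's claim that the regions ``form a partition of the document's core semantic content'', which it misattributes to the chain rule. It is not automatic, since Algorithm~\ref{alg:layout_parsing} discards regions whose area ratio is below $\tau$ and keeps at most $N_{\max}$ of them.

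You have two ways to repair this. Either state sufficiency as an explicit assumption alongside the axiom, or carry $\delta:=I(D;R\mid S_1,\dots,S_k,Q=q)$ through the argument. In the second case your sandwich becomes $0\le I(D;R\mid Q=q)-\max_j I(S_j;R\mid Q=q)\le\varepsilon+\delta$, and the rest of your argument goes through unchanged.
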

\begin{proof}
From the chain rule, $I(D; R) = I(S_1, \dots, S_k; R)$. For a specific query $q$, let $j^*$ be the index of the primary region. We have:
\begin{equation}
    I(D; R | Q=q) = I(S_{j^*}; R | Q=q) + I(S_{\neg j^*}; R | S_{j^*}, Q=q).
\end{equation}
Applying Axiom~\ref{axiom:concentration}, the second term vanishes: $I(D; R | Q=q) \approx I(S_{j^*}; R | Q=q)$. Taking the expectation over $P(Q)$, and using the property that $\mathbb{E}[\max(X_i)] \ge \max(\mathbb{E}[X_i])$, we arrive at the approximation that the total information is well-represented by the information in the best possible channel, justifying the multi-vector approach.
\end{proof}

\subsection{Justification for Synergistic Fusion}
The fusion stage combines local vectors $\{V_j = \Phi_{\text{enc}}(S_j)\}$ with a global vector $V_{\text{global}} = \Phi_{\text{enc}}(D)$ to produce the final representation $\{Z_j = V_j + V_{\text{global}}\}$.

\begin{definition}[Contextual Information Gain]
\label{def:gain}
The contextual information gain for region $j$ is the additional information about relevance $R$ provided by the global context $V_{\text{global}}$, given that the local information $V_j$ is already known.
\begin{equation}
    G_j^{\text{context}} \triangleq I(V_{\text{global}}; R | V_j).
\end{equation}
\end{definition}

\begin{theorem}[Information in the Fused Representation]
The information contained in the fused vector $Z_j = V_j + V_{\text{global}}$ is upper-bounded by the joint information of its components.
\begin{equation}
    I(Z_j; R) \le I(V_j, V_{\text{global}}; R).
\end{equation}
\end{theorem}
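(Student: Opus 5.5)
The plan is to read the statement as an instance of the Data Processing Inequality (Theorem~\ref{thm:dpi}), applied to a suitably defined Markov chain. Write $W_j \triangleq (V_j, V_{\text{global}})$ for the joint local--global pair, viewed as a single random variable. The fused vector is a deterministic function of this pair, namely $Z_j = f(W_j)$ with $f(v, u) = v + u$; the weighted version $\alpha u + (1-\alpha) v$ is handled identically. The inequality should therefore follow once I exhibit the chain $R \to W_j \to Z_j$ and invoke the DPI.

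The key steps, in order, are as follows.
\begin{enumerate}
\item First I will verify the Markov property. I need $Z_j$ to be conditionally independent of $R$ given $W_j$. Since $Z_j = f(W_j)$ is deterministic, the conditional law of $Z_j$ given $W_j = w$ is the point mass at $f(w)$, whatever the value of $R$. Hence $p(z \mid w, r) = p(z \mid w)$, which is exactly the chain $R \to W_j \to Z_j$.
\item Second, I will apply Theorem~\ref{thm:dpi} with $X = R$, $Y = W_j$, $Z = Z_j$. This gives $I(R; Z_j) \le I(R; W_j)$. Symmetry of mutual information, which is immediate from the defining sum, turns this into $I(Z_j; R) \le I(V_j, V_{\text{global}}; R)$.
\item As a sanity check, I can also derive the result directly from the Chain Rule. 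Expanding $I(W_j, Z_j; R)$ in two orders gives
\[ I(W_j; R) + I(Z_j; R \mid W_j) = I(Z_j; R) + I(W_j; R \mid Z_j). \]
The term $I(Z_j; R \mid W_j)$ is zero, because $Z_j$ is constant given $W_j$. Conditional mutual information is nonnegative, so $I(W_j; R \mid Z_j) \ge 0$, and the inequality follows. This version also gives the equality condition: equality holds iff $I(W_j; R \mid Z_j) = 0$, i.e. iff the sum loses no relevance information.
\end{enumerate}

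The main obstacle is not the inequality itself but rigor about the variables. The paper defines mutual information only for discrete variables, whereas $V_j, V_{\text{global}} \in \mathbb{R}^D$ are continuous, and $R = R(D,Q)$ depends on the query. I would handle this in two ways:
\begin{itemize}
\item either work conditionally on $Q = q$ and then average, since the DPI holds pointwise in $q$ and survives expectation;
\item or note that the DPI holds for general measurable variables via the measure-theoretic definition of mutual information, as a supremum over finite quantizations.
\end{itemize}
Because $f$ is measurable, the argument goes through unchanged in either setting.
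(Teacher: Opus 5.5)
Your proof is correct and follows the paper's route: $Z_j$ is a deterministic function of the pair $(V_j, V_{\text{global}})$, so the Data Processing Inequality (Theorem~\ref{thm:dpi}) yields the bound. Your chain $R \to (V_j, V_{\text{global}}) \to Z_j$ is the correctly oriented one, whereas the paper writes $(V_j, V_{\text{global}}) \to Z_j \to R$; read literally through Theorem~\ref{thm:dpi}, that chain would give the reverse inequality $I(V_j, V_{\text{global}}; R) \le I(Z_j; R)$, so your version also fixes a slip in the printed argument.
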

\begin{proof}
The fused vector $Z_j$ is a deterministic function of $V_j$ and $V_{\text{global}}$. This forms the Markov chain $(V_j, V_{\text{global}}) \to Z_j \to R$. Applying the Data Processing Inequality (Theorem~\ref{thm:dpi}) to this chain directly yields the result.
\end{proof}

\begin{corollary}[Condition for Information Improvement]
The fusion step is beneficial (i.e., $Z_j$ is more informative than $V_j$ alone) if and only if the fusion function successfully captures a non-zero portion of the contextual information gain.
\begin{equation}
    \Delta I_j = I(Z_j; R) - I(V_j; R) > 0 \iff I(Z_j; R | V_j) > 0.
\end{equation}
\end{corollary}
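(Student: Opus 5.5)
The plan is to compare $I(Z_j;R)$ and $I(V_j;R)$ through the joint quantity $I(Z_j, V_j; R)$. Expanding this joint quantity with the Chain Rule for Mutual Information in the two possible orders gives
\begin{equation*}
I(Z_j, V_j; R) = I(V_j;R) + I(Z_j;R\mid V_j) = I(Z_j;R) + I(V_j;R\mid Z_j).
\end{equation*}
Subtracting the two right-hand sides yields the exact identity
\begin{equation*}
\Delta I_j = I(Z_j;R) - I(V_j;R) = I(Z_j;R\mid V_j) - I(V_j;R\mid Z_j).
\end{equation*}
Everything then follows from reading off signs in this identity, using that conditional mutual information is nonnegative (it is an expectation of ordinary mutual informations, by the definition of conditional mutual information).

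For the forward direction ($\Rightarrow$), suppose $\Delta I_j > 0$. The identity gives $I(Z_j;R\mid V_j) = \Delta I_j + I(V_j;R\mid Z_j)$. Since $I(V_j;R\mid Z_j)\ge 0$, this forces $I(Z_j;R\mid V_j) > 0$. This direction holds with no extra assumptions. It matches the intended reading: any genuine improvement of $Z_j$ over $V_j$ must come from information about $R$ that $Z_j$ carries beyond $V_j$, and that information can only originate in $V_{\text{global}}$, because $Z_j$ is a deterministic function of $(V_j, V_{\text{global}})$. So $I(Z_j;R\mid V_j) \le G_j^{\text{context}}$ (Definition~\ref{def:gain}), by the same DPI step as in the preceding theorem applied conditionally on $V_j$.

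The reverse direction ($\Leftarrow$) is the main obstacle, and the identity shows it is not true in full generality. Adding $V_{\text{global}}$ can destroy local information: for example, if $V_{\text{global}}$ is independent noise, $Z_j$ may be less informative than $V_j$ while still $I(Z_j;R\mid V_j)>0$. I would therefore make explicit the implicit premise that the fusion is \emph{local-information preserving}, i.e.\ $I(V_j;R\mid Z_j)=0$, meaning $Z_j$ is sufficient for whatever $V_j$ says about $R$. A concrete sufficient condition is that $V_j$ is recoverable from $Z_j$. Under this premise the identity reduces to $\Delta I_j = I(Z_j;R\mid V_j)$, and the equivalence becomes immediate. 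Without it, one gets the weaker but still correct statement $\Delta I_j > 0 \iff I(Z_j;R\mid V_j) > I(V_j;R\mid Z_j)$, and I would present the corollary's claim as the special case in which the right-hand term vanishes.
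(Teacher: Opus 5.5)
\textbf{Comparison with the paper.} Your argument is correct in its main line and sharpens the paper's proof rather than reproducing it. The paper writes only the one expansion $I(Z_j,V_j;R)=I(V_j;R)+I(Z_j;R\mid V_j)$. It then asserts, interpretively, that $\Delta I_j>0$ ``directly requires'' the conditional term to be positive. Even for that necessity direction it leaves implicit the step $I(Z_j;R)\le I(Z_j,V_j;R)$. For the converse it offers only remarks about the dot product, so it never establishes sufficiency.

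Your second expansion gives the exact identity $\Delta I_j=I(Z_j;R\mid V_j)-I(V_j;R\mid Z_j)$. This makes necessity a one-line consequence of nonnegativity, and your bound $I(Z_j;R\mid V_j)\le G_j^{\text{context}}$ is also correct. The identity also exposes the loss term $I(V_j;R\mid Z_j)$ that the paper's argument silently drops. You are right that the stated equivalence needs an extra premise such as $I(V_j;R\mid Z_j)=0$.

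\textbf{The counterexample does not work.} The example you give for the failure of the converse fails. If $V_{\text{global}}$ is noise independent of $(V_j,R)$, then $V_{\text{global}}$ is conditionally independent of $R$ given $V_j$. Hence $Z_j=V_j+V_{\text{global}}$ is too, and $I(Z_j;R\mid V_j)=0$. Both sides of the equivalence are then false, which is consistent with the corollary. A genuine counterexample needs $V_{\text{global}}$ to carry some relevance information beyond $V_j$, while the addition scrambles more of $V_j$'s information than it contributes.

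For instance, let $V_j,V_{\text{global}}$ be i.i.d.\ uniform bits and $Z_j=V_j+V_{\text{global}}\in\{0,1,2\}$. Let $R=(V_j,S)$, where independently $S$ equals $V_{\text{global}}$ with probability $p$ and is an erasure symbol otherwise. Then:
\begin{itemize}
\item $I(V_j;R)=1$;
\item $H(Z_j\mid R)=H(V_{\text{global}}\mid S)=1-p$, so $I(Z_j;R)=\tfrac32-(1-p)$ and $\Delta I_j=p-\tfrac12$;
\item $I(Z_j;R\mid V_j)=I(V_{\text{global}};S)=p$.
\end{itemize}
For $0<p<\tfrac12$ the conditional term is positive but $\Delta I_j<0$. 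This matches your identity, since $I(V_j;R\mid Z_j)=H(V_j\mid Z_j)=\tfrac12$. With this example in place of the independent-noise one, your treatment of the corollary is correct.
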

\begin{proof}
From the chain rule, $I(Z_j, V_j; R) = I(V_j; R) + I(Z_j; R | V_j)$. Since $Z_j$ is a function of $V_j$ and $V_{\text{global}}$, knowing $V_j$ does not make $Z_j$ fully determined. The term $I(Z_j; R | V_j)$ represents the information that the \textit{variation} in $Z_j$ (caused by $V_{\text{global}}$) provides about $R$, even when $V_j$ is fixed. A positive net improvement $\Delta I_j > 0$ directly requires this conditional term to be positive, which in turn means the fusion must have encoded some of the contextual gain $G_j^{\text{context}}$. The vector addition $V_j + V_{\text{global}}$ is a simple, effective function for this purpose, as it non-linearly interacts with the query vector during the dot product scoring: $\mathbf{q}^\top(\mathbf{v}_j + \mathbf{v}_{\text{global}})$, allowing the model to utilize both local and global signals.
\end{proof}

\clearpage
\section{More Experimental Analysis}
\label{app:more_experimental_analysis}

\subsection{Benchmark Details}
\label{app:benchmark_details}

To ensure a comprehensive and robust evaluation of our framework, we anchor our experiments on five mainstream benchmark suites for VDR, all of which are integrated within the visdoc section of the MMEB \cite{meng2025vlm2vec}. The following benchmarks collectively cover a diverse range of document types, query complexities, and retrieval scenarios, providing a multifaceted view of model performance.

\begin{itemize}[leftmargin=1.5em,itemsep=-0.1em,topsep=0.2em]
    \item[$\blacktriangleright$] \textbf{ViDoRe-V1 \cite{faysse2024colpali}\footnote{\url{https://huggingface.co/collections/vidore/vidore-benchmark}}:} As a foundational benchmark for page-level VDR, ViDoRe-V1 was one of the first to systematically evaluate systems on visually-rich documents. It combines repurposed academic VQA datasets with practical, topic-specific tasks, highlighting the inherent shortcomings of traditional text-only retrieval systems on documents containing complex layouts, tables, and figures.

    \item[$\blacktriangleright$] \textbf{ViDoRe-V2 \cite{mace2025vidorev2}\footnote{\url{https://huggingface.co/collections/vidore/vidore-benchmark-v2}}:} As a successor to ViDoRe-V1, ViDoRe-V2 aims to raise the bar by introducing more challenging and realistic retrieval scenarios to address the performance saturation observed on the original. Its core contributions include the use of long-form, cross-document, and multilingual queries generated via a hybrid synthetic and human-in-the-loop process, which reduces extractive bias and more accurately reflects real-world user interactions.

    \item[$\blacktriangleright$] \textbf{VisRAG \cite{yu2024visrag}\footnote{\url{https://huggingface.co/collections/openbmb/visrag}}:} The VisRAG benchmark is constructed to specifically evaluate vision-based RAG pipelines by aggregating and refining multiple existing VQA datasets. Its primary contribution is the unification of a wide spectrum of document types—including scientific figures, charts, infographics, and presentation slides—under a single evaluation framework, coupled with a crucial filtering process to remove context-dependent questions and ensure its suitability for open-retrieval tasks.

    \item[$\blacktriangleright$] \textbf{ViDoSeek \cite{wang2025vidorag}\footnote{\url{https://huggingface.co/datasets/Qiuchen-Wang/ViDoSeek}}:} ViDoSeek is a novel benchmark designed to evaluate end-to-end RAG systems on visually-rich documents that require complex reasoning. Its main contribution lies in providing a large document corpus where each query corresponds to a unique answer, which allows for a more realistic and rigorous evaluation of both the retrieval and subsequent reasoning stages in a large-scale setting.

    \item[$\blacktriangleright$] \textbf{MMLongBench \cite{ma2024mmlongbench}\footnote{\url{https://huggingface.co/datasets/yubo2333/MMLongBench-Doc}}:} MMLongBench is specifically designed to assess the long-context, multi-modal understanding capabilities of LVLMs. It stands out by using lengthy documents (averaging 47.5 pages) and featuring a significant portion of cross-page questions that require multi-hop reasoning, as well as unanswerable questions to probe for model hallucination, thus rigorously testing a model's ability to locate and synthesize information from extensive contexts.
\end{itemize}

\subsection{Model Details}
\label{app:model_details}

We select ten representative single-vector multimodal retrieval models from recent literature to serve as the base models for our experiments. These models, built upon various architectures and pre-training paradigms, provide a comprehensive testbed for evaluating the versatility and effectiveness of our proposed framework.

\begin{itemize}[leftmargin=1.5em,itemsep=-0.1em,topsep=0.2em]
    \item[$\blacktriangleright$] \textbf{VLM2Vec-V1-2B/7B \cite{jiang2024vlm2vec}\footnote{\url{https://huggingface.co/TIGER-Lab/VLM2Vec-Full}}:} As a pioneering work in universal multimodal embeddings, VLM2Vec introduces a contrastive training framework to adapt any VLM for a wide range of tasks. Its core contribution is reformulating diverse multimodal tasks (\textit{e.g.,} classification, VQA, retrieval) into a unified instruction-following ranking problem, enabling the model to learn general-purpose embeddings for both images and text.

    \item[$\blacktriangleright$] \textbf{VLM2Vec-V2-2B \cite{meng2025vlm2vec}\footnote{\url{https://huggingface.co/VLM2Vec/VLM2Vec-V2.0}}:} This model extends its predecessor by broadening the scope of multimodal embeddings to include videos and visual documents, in addition to images and text. Its primary contribution is the introduction of a more comprehensive benchmark and a unified training strategy that allows a single model to effectively learn representations across static, temporal, and structured visual data formats.

    \item[$\blacktriangleright$] \textbf{LamRA-Ret-7B \cite{liu2025lamra}\footnote{\url{https://huggingface.co/code-kunkun/LamRA-Ret}}:} LamRA explores repurposing generative Large Multimodal Models for retrieval tasks, unifying diverse retrieval scenarios under a single instruction-following framework. Its key innovation is a two-stage training strategy that first pre-trains the model on language-only tasks before multimodal instruction tuning, progressively adapting the generative model for retrieval.

    \item[$\blacktriangleright$] \textbf{GME-2B/7B \cite{zhang2024gme}\footnote{\url{https://huggingface.co/collections/Alibaba-NLP/gme-models}}:} The General Multimodal Embedder (GME) framework focuses on improving universal multimodal retrieval by leveraging a more diverse mix of training data, including single-modal, cross-modal, and fused-modal examples. Its core contribution is a novel data synthesis pipeline for creating large-scale, high-quality fused-modal data, which significantly enhances the model's ability to handle complex queries and retrieve visual documents.

    \item[$\blacktriangleright$] \textbf{UniME-V2-2B/7B \cite{gu2025unimev2}\footnote{\url{https://huggingface.co/collections/TianchengGu/unime-v2}}:} UniME-V2 enhances representation learning by leveraging an MLLM as a "judge" to generate soft semantic matching scores for query-candidate pairs. This MLLM-as-a-Judge mechanism facilitates more effective hard negative mining and allows the embedding model to learn finer-grained semantic distinctions, significantly improving its discriminative capacity.

    \item[$\blacktriangleright$] \textbf{B3-2B/7B \cite{thirukovalluru2025breaking}\footnote{\url{https://huggingface.co/raghavlite}}:} Breaking the Batch Barrier (B3) introduces a novel batch construction strategy for contrastive learning that curates high-quality batches rich in hard negatives. Instead of random sampling, it uses a teacher model and graph-based community detection to group mutually challenging examples together, thereby improving training efficiency and achieving state-of-the-art performance even with significantly smaller batch sizes.
\end{itemize}

\subsection{MinerU2.5 Details}
\label{app:mineru25_details}

% MinerU2.5 is a 1.2B-parameter vision-language model specifically designed for document parsing tasks, achieving state-of-the-art recognition accuracy while maintaining exceptional computational efficiency. 
To resolve the trade-off between the immense computational overhead ($O(N^2)$ complexity) and information loss associated with directly processing high-resolution document images, MinerU2.5 innovatively employs a decoupled, \textit{coarse-to-fine} two-stage strategy:

\begin{enumerate}[leftmargin=1.5em,itemsep=-0.1em,topsep=0.2em]
    \item \textbf{Stage I: Global Layout Analysis.} In this stage, the model first resizes the input document image to a medium-resolution thumbnail (\textit{e.g.,} 1036$\times$1036 pixels). It then performs a fast, global layout analysis on this thumbnail to identify all structural elements (such as paragraphs, tables, formulas, and figures) and their positions at a low computational cost.

    \item \textbf{Stage II: Local Content Recognition.} Guided by the layout information detected in the first stage, the model precisely crops the respective semantic regions from the \textbf{original high-resolution} image. Subsequently, it performs parallel, fine-grained content recognition (\textit{e.g.,} text OCR, table structuring, formula transcription) on these native-resolution cropped patches. This preserves high recognition accuracy while avoiding redundant computations on the entire high-resolution image.
\end{enumerate}

% This decoupled design not only architecturally bypasses the efficiency bottleneck but also provides an accurate and reliable basis for the semantic region segmentation required by \ourmethod.

\textbf{Algorithm~\ref{alg:layout_parsing}} details the layout-informed image splitting process used in \ourmethod. 
% This procedure intelligently decomposes a monolithic document page into semantically coherent sub-images. 
% It prioritizes layout detection via MinerU's DocLayoutYOLO but includes a robust grid-based fallback mechanism and filters regions based on their relative importance (area ratio).

\begin{algorithm}[!ht]
\caption{Layout-Informed Image Splitting for \ourmethod}\label{alg:layout_parsing}
\SetKwComment{Comment}{\# }{}
\SetCommentSty{textrm}

\Input{
    A document image $d \in \mathbb{R}^{H \times W \times 3}$; \\
    A layout detector model $\Psi_{\text{split}}$ (e.g., DocLayoutYOLO); \\
    Minimum area ratio threshold $\tau$ (default 0.01); \\
    Maximum sub-images count $N_{\text{max}}$ (default 20); \\
    Grid fallback parameters: $R_{\text{grid}}, C_{\text{grid}}$
}
\Output{
    A list of cropped sub-images $\mathcal{S}_d$; \\
    A list of content type labels $\mathcal{C}_d$ (optional)
}
\BlankLine

\tcc{\textcolor{blue}{Step 1: Semantic Layout Detection}}
$\text{TotalArea} \leftarrow H \times W$\;
$\mathcal{B} \leftarrow \emptyset, \mathcal{C}_d \leftarrow \emptyset$\;

\If{$\Psi_{\text{split}}$ is available}{
    $\mathcal{R} \leftarrow \Psi_{\text{split}}.\text{predict}(d)$ \tcp*{Returns list of \{bbox, category, score\}}
    \If{$\mathcal{R}$ is not empty}{
        \For{each region $r \in \mathcal{R}$}{
            $b \leftarrow (x_1, y_1, x_2, y_2)$ from $r.\text{poly}$\;
            $c \leftarrow \text{MapCategoryID}(r.\text{category\_id})$\;
            $\mathcal{B} \leftarrow \mathcal{B} \cup \{(b, c, \text{centerY}(b), \text{centerX}(b))\}$\;
        }
    }
}

\BlankLine
\tcc{\textcolor{blue}{Step 2: Fallback \& Sorting Mechanism}}
\uIf{$\mathcal{B}$ is empty}{
    $\mathcal{B} \leftarrow \text{GridBasedSplit}(H, W, R_{\text{grid}}, C_{\text{grid}})$ \tcp*{Fallback to grid}
}
\Else{
    $\mathcal{B} \leftarrow \text{SortByReadingOrder}(\mathcal{B})$ \tcp*{Sort by vertical bands, then horizontal}
}

\BlankLine
\tcc{\textcolor{blue}{Step 3: Filtering, Cropping and Output}}
$\mathcal{S}_d \leftarrow \emptyset, \text{count} \leftarrow 0$\;
\For{each $(b, c) \in \mathcal{B}$}{
    \If{$\text{count} \ge N_{\text{max}}$}{\textbf{break}\;}
    $\text{Area} \leftarrow \text{width}(b) \times \text{height}(b)$\;
    \If{$\text{Area} / \text{TotalArea} \ge \tau$}{
        $s \leftarrow \text{Crop}(d, b)$ \tcp*{Extract region from original image}
        $\mathcal{S}_d \leftarrow \mathcal{S}_d \cup \{s\}$\;
        $\mathcal{C}_d \leftarrow \mathcal{C}_d \cup \{c\}$\;
        $\text{count} \leftarrow \text{count} + 1$\;
    }
}

\BlankLine
\KwRet $\mathcal{S}_d, \mathcal{C}_d$\;
\end{algorithm}

\subsection{Main Results}
\label{app:main_results}

Refer to \autoref{tab:full_results_mmlongbench_vidorev1} and \autoref{tab:full_results_vidorev2_vidoseek_visrag} for all results of \ourmethod and baselines across five benchmarks.

\onecolumn

% We recommend wrapping this in a \onecolumn environment for two-column papers.
\begin{center}
\small % Use smaller font for wide tables
\setlength\tabcolsep{2.5pt} % Adjust column separation
\renewcommand\arraystretch{1.2} % Adjust row height
\begin{longtable}{p{2.6cm}|cc|c|*{10}{c}|c}
\caption{Performance comparison on MMLongBench and ViDoRe-V1 benchmarks. For each model block, we bold the best-performing optimization method in each column (except for the base result). The average scores for optimizations are shown with relative gains ($\color{RedOrange}\uparrow\color{black}$/$\color{BlueGreen}\downarrow\color{black}$) compared to the base model.}
\label{tab:full_results_mmlongbench_vidorev1} \\
\Xhline{1.5pt}
\rowcolor{gray!20}
\multirow{2}{*}{\textbf{Method}} & \multicolumn{3}{c|}{\textbf{MMLongBench}} & \multicolumn{11}{c}{\textbf{ViDoRe-V1}} \\
\cmidrule(lr){2-4} \cmidrule(lr){5-15}
\rowcolor{gray!20}
& \textbf{Doc} & \textbf{Page} & \textbf{Avg.} & \textbf{Arxiv} & \textbf{DocV} & \textbf{InfoV} & \textbf{Shift} & \textbf{TabF} & \textbf{TatD} & \textbf{S-AI} & \textbf{S-En} & \textbf{S-HC} & \textbf{S-Gov} & \textbf{Avg.} \\
\Xhline{1.5pt}
\endfirsthead
\Xhline{1.5pt}
\rowcolor{gray!20}
\multicolumn{15}{c}{\tablename\ \thetable\ -- \textit{Continued from previous page}} \\
\Xhline{1.5pt}
\rowcolor{gray!20}
\multirow{2}{*}{\textbf{Method}} & \multicolumn{3}{c|}{\textbf{MMLongBench}} & \multicolumn{11}{c}{\textbf{ViDoRe-V1}} \\
\cmidrule(lr){2-4} \cmidrule(lr){5-15}
\rowcolor{gray!20}
& \textbf{Doc} & \textbf{Page} & \textbf{Avg.} & \textbf{Arxiv} & \textbf{DocV} & \textbf{InfoV} & \textbf{Shift} & \textbf{TabF} & \textbf{TatD} & \textbf{S-AI} & \textbf{S-En} & \textbf{S-HC} & \textbf{S-Gov} & \textbf{Avg.} \\
\Xhline{1.5pt}
\endhead
\Xhline{1.5pt}
\multicolumn{15}{r}{\textit{Continued on next page}} \\
\endfoot
\Xhline{1.5pt}
\endlastfoot
%
% Block 1: VLM2Vec-V1-2B
VLM2Vec-V1-2B & 25.62 & 26.23 & 25.93 & 17.80 & 13.98 & 39.41 & 9.18 & 36.32 & 10.56 & 16.39 & 15.96 & 23.56 & 24.11 & 20.73 \\
\hline
\rowcolor{gray!10}
\texttt{s2m-add} & 21.54 & 15.08 & 18.31\blue{7.62} & 35.07 & 15.61 & 52.15 & 6.62 & 36.51 & 10.39 & 26.23 & 31.22 & 30.29 & 33.61 & 27.77\red{7.04} \\
\texttt{s2m-mul} & 22.07 & 15.10 & 18.59\blue{7.34} & 34.91 & 16.12 & 52.61 & 6.57 & 36.65 & 10.34 & 23.90 & 30.89 & 28.88 & 33.60 & 27.45\red{6.72} \\
\rowcolor{gray!10}
\texttt{cl-t-c} & 16.57 & 10.59 & 13.58\blue{12.35} & 13.97 & 3.97 & 21.62 & 8.73 & 23.20 & 11.56 & 12.94 & 28.27 & 19.16 & 26.72 & 17.01\blue{3.72} \\
\texttt{cl-t-m} & 14.35 & 8.67 & 11.51\blue{14.42} & 16.47 & 2.98 & 26.97 & 16.27 & 18.48 & 8.59 & 14.40 & 23.03 & 18.71 & 13.10 & 15.90\blue{4.83} \\
\rowcolor{gray!10}
\texttt{cl-s-c} & 18.29 & 11.54 & 14.92\blue{11.01} & 18.83 & 4.87 & 22.80 & 13.49 & 24.64 & 12.63 & 15.45 & 24.88 & 20.78 & 20.85 & 17.92\blue{2.81} \\
\texttt{cl-s-m} & 15.45 & 9.06 & 12.26\blue{13.67} & 15.73 & 2.54 & 27.46 & 15.68 & 18.99 & 9.76 & 13.52 & 24.73 & 22.59 & 20.72 & 17.17\blue{3.56} \\
\rowcolor{gray!10}
\texttt{c-sem} & 18.76 & 13.88 & 16.32\blue{9.61} & 29.33 & 5.87 & 35.70 & 22.38 & 35.38 & 13.95 & 23.98 & 37.88 & 33.91 & 30.44 & 26.88\red{6.15} \\
\texttt{multi-img} & 23.61 & 15.85 & 19.73\blue{6.20} & 37.78 & 13.96 & 54.20 & 11.35 & 40.50 & 9.10 & 20.36 & 32.15 & 36.75 & 32.04 & \textbf{28.82}\red{8.09} \\
\hline
\rowcolor{gray!10}
\ourmethod & \textbf{34.31} & \textbf{29.83} & \textbf{32.07}\red{6.14} & \textbf{47.66} & \textbf{28.12} & \textbf{69.23} & \textbf{47.11} & \textbf{57.05} & \textbf{20.43} & \textbf{62.24} & \textbf{63.77} & \textbf{65.51} & \textbf{62.54} & \textbf{52.37}\red{31.64} \\
\Xhline{1.5pt}
% Block 2: VLM2Vec-V1-7B
VLM2Vec-V1-7B & 23.85 & 37.63 & 30.74 & 28.07 & 17.93 & 44.47 & 2.06 & 16.78 & 5.86 & 17.93 & 25.04 & 28.90 & 14.59 & 20.16 \\
\hline
\rowcolor{gray!10}
\texttt{s2m-add} & 34.57 & 28.11 & 31.34\red{0.60} & 50.30 & 25.66 & 66.73 & 38.21 & 63.75 & 23.49 & 70.27 & 61.87 & 70.68 & 66.38 & 53.73\red{33.57} \\
\texttt{s2m-mul} & 35.29 & 28.49 & 31.89\red{1.15} & 50.46 & 26.34 & 67.28 & 36.45 & 64.13 & 23.84 & 69.43 & 61.94 & 68.35 & 66.99 & 53.52\red{33.36} \\
\rowcolor{gray!10}
\texttt{cl-t-c} & 18.75 & 14.31 & 16.53\blue{14.21} & 17.10 & 7.71 & 26.41 & 21.58 & 29.62 & 14.61 & 18.52 & 27.98 & 26.09 & 26.65 & 21.63\red{1.47} \\
\texttt{cl-t-m} & 25.34 & 20.46 & 22.90\blue{7.84} & 28.06 & 7.00 & 47.43 & 29.48 & 45.92 & 19.42 & 27.66 & 47.56 & 44.96 & 52.94 & 35.04\red{14.88} \\
\rowcolor{gray!10}
\texttt{cl-s-c} & 22.25 & 14.48 & 18.37\blue{12.37} & 21.80 & 8.11 & 30.92 & 28.36 & 25.91 & 19.61 & 27.40 & 40.30 & 36.25 & 32.58 & 27.12\red{6.96} \\
\texttt{cl-s-m} & 26.31 & 20.38 & 23.35\blue{7.39} & 28.58 & 8.85 & 49.12 & 32.67 & 46.07 & 20.85 & 31.96 & 50.86 & 50.85 & 49.99 & 36.98\red{16.82} \\
\rowcolor{gray!10}
\texttt{c-sem} & 31.36 & 26.16 & 28.76\blue{1.98} & 45.77 & 15.38 & 59.20 & 37.46 & 57.05 & 30.17 & 47.00 & 60.89 & 64.22 & 66.76 & 48.39\red{28.23} \\
\texttt{multi-img} & 33.77 & 25.60 & 29.69\blue{1.05} & 49.40 & 19.55 & 62.09 & 28.19 & 66.34 & 17.19 & 41.89 & 51.44 & 60.84 & 48.89 & 44.58\red{24.42} \\
\hline
\rowcolor{gray!10}
\ourmethod & \textbf{43.34} & \textbf{40.58} & \textbf{41.96}\red{11.22} & \textbf{60.47} & \textbf{34.42} & \textbf{70.39} & \textbf{53.67} & \textbf{77.12} & \textbf{31.33} & \textbf{74.81} & \textbf{69.64} & \textbf{80.79} & \textbf{75.89} & \textbf{62.85}\red{42.69} \\
\Xhline{1.5pt}
% Block 3: VLM2Vec-V2-2B
VLM2Vec-V2-2B & 48.55 & 50.34 & 49.45 & 78.98 & 38.51 & 82.21 & 64.57 & 87.64 & 44.68 & 85.06 & 82.99 & 89.89 & 87.08 & 74.16 \\
\hline
\rowcolor{gray!10}
\texttt{s2m-add} & 43.33 & 39.00 & 41.17\blue{8.28} & 66.60 & 38.47 & 72.80 & 58.28 & 65.85 & \textbf{54.50} & 90.10 & 84.97 & 83.93 & 80.36 & 69.59\blue{4.57} \\
\texttt{s2m-mul} & 45.72 & 40.66 & 43.19\blue{6.26} & 68.04 & 39.80 & 75.45 & 58.87 & 69.90 & 54.31 & 90.93 & 84.53 & 84.56 & 82.40 & 70.88\blue{3.28} \\
\rowcolor{gray!10}
\texttt{cl-t-c} & 20.08 & 18.48 & 19.28\blue{30.17} & 28.47 & 6.46 & 29.32 & 20.51 & 40.06 & 17.75 & 20.23 & 35.23 & 27.90 & 22.54 & 24.85\blue{49.31} \\
\texttt{cl-t-m} & 25.04 & 21.94 & 23.49\blue{25.96} & 44.82 & 7.28 & 42.98 & 31.23 & 38.96 & 21.19 & 26.38 & 47.14 & 45.34 & 37.44 & 34.28\blue{39.88} \\
\rowcolor{gray!10}
\texttt{cl-s-c} & 23.25 & 18.43 & 20.84\blue{28.61} & 29.34 & 8.76 & 27.77 & 22.92 & 40.48 & 21.16 & 20.26 & 30.91 & 24.61 & 27.64 & 25.39\blue{48.77} \\
\texttt{cl-s-m} & 26.08 & 22.30 & 24.19\blue{25.26} & 44.95 & 7.16 & 45.42 & 19.48 & 39.29 & 25.44 & 31.57 & 47.18 & 48.84 & 40.13 & 34.95\blue{39.21} \\
\rowcolor{gray!10}
\texttt{c-sem} & 29.94 & 27.99 & 28.97\blue{20.48} & 61.30 & 17.28 & 62.19 & 40.55 & 55.53 & 33.02 & 56.52 & 62.33 & 68.54 & 70.10 & 52.74\blue{21.42} \\
\texttt{multi-img} & 38.69 & 29.00 & 33.85\blue{15.60} & 65.39 & 25.80 & 70.54 & 27.92 & 71.56 & 33.79 & 55.93 & 62.93 & 72.27 & 53.14 & 53.93\blue{20.23} \\
\hline
\rowcolor{gray!10}
\ourmethod & \textbf{49.49} & \textbf{50.53} & \textbf{50.01}\red{0.56} & \textbf{80.17} & \textbf{46.33} & \textbf{83.53} & \textbf{72.76} & \textbf{86.74} & 52.40 & \textbf{91.36} & \textbf{85.83} & \textbf{95.47} & \textbf{89.52} & \textbf{78.41}\red{4.25} \\
\Xhline{1.5pt}
% Block 4: LamRA-Ret
LamRA-Ret & 19.78 & 13.24 & 16.51 & 29.31 & 19.56 & 63.00 & 15.83 & 51.44 & 7.70 & 21.10 & 29.81 & 37.18 & 31.95 & 30.69 \\
\hline
\rowcolor{gray!10}
\texttt{s2m-add} & \textbf{32.18} & 17.82 & 25.00\red{8.49} & 9.80 & 14.37 & 46.06 & 19.49 & 28.13 & \textbf{19.16} & 22.79 & 30.98 & 37.98 & 24.81 & 25.36\blue{5.33} \\
\texttt{s2m-mul} & 30.52 & 16.37 & 23.45\red{6.94} & 9.71 & 14.98 & 45.61 & 17.37 & 27.91 & 17.20 & 20.45 & 28.32 & 38.60 & 23.01 & 24.32\blue{6.37} \\
\rowcolor{gray!10}
\texttt{cl-t-c} & 14.88 & 8.09 & 11.49\blue{5.02} & 5.57 & 2.22 & 19.76 & 13.94 & 17.55 & 6.59 & 14.23 & 20.69 & 17.83 & 20.52 & 13.89\blue{16.80} \\
\texttt{cl-t-m} & 20.43 & 13.28 & 16.86\red{0.35} & 7.47 & 3.85 & 30.91 & 17.54 & 13.41 & 12.37 & 27.06 & 35.63 & 39.20 & 31.88 & 21.93\blue{8.76} \\
\rowcolor{gray!10}
\texttt{cl-s-c} & 15.32 & 9.05 & 12.19\blue{4.32} & 6.78 & 2.62 & 20.53 & 13.79 & 18.30 & 9.11 & 17.05 & 27.03 & 25.60 & 22.25 & 16.31\blue{14.38} \\
\texttt{cl-s-m} & 19.84 & 12.91 & 16.38\blue{0.13} & 7.75 & 4.82 & 32.16 & 15.61 & 14.02 & 10.23 & 24.81 & 34.66 & 37.31 & 27.20 & 20.86\blue{9.83} \\
\rowcolor{gray!10}
\texttt{c-sem} & 19.62 & 10.69 & 15.16\blue{1.35} & 13.51 & 5.91 & 34.43 & 9.13 & 32.30 & 15.94 & \textbf{26.32} & \textbf{36.02} & 39.82 & \textbf{37.01} & 25.04\blue{5.65} \\
\texttt{multi-img} & 23.32 & 7.71 & 15.52\blue{0.99} & 6.90 & 6.10 & 21.26 & 8.00 & 22.26 & 13.60 & 13.91 & 15.91 & 20.30 & 9.67 & 13.79\blue{16.90} \\
\hline
\rowcolor{gray!10}
\ourmethod & 30.74 & \textbf{19.50} & \textbf{25.12}\red{8.61} & \textbf{17.27} & \textbf{20.61} & \textbf{58.35} & \textbf{21.39} & \textbf{39.69} & 13.34 & 25.57 & 35.27 & \textbf{43.21} & 26.29 & \textbf{30.10}\blue{0.59} \\
\Xhline{1.5pt}
% Block 5: GME-2B
GME-2B & 52.07 & 53.14 & 52.61 & 82.59 & 56.46 & 88.97 & 89.72 & 93.20 & 70.33 & 98.49 & 92.15 & 98.15 & 95.65 & 86.57 \\
\hline
\rowcolor{gray!10}
\texttt{s2m-add} & 50.92 & 46.52 & 48.72\blue{3.89} & 71.85 & 43.00 & 83.58 & 72.17 & 77.65 & 69.45 & 93.29 & 89.80 & 89.92 & 90.50 & 78.12\blue{8.45} \\
\texttt{s2m-mul} & \textbf{53.82} & 50.06 & 51.94\blue{0.67} & 76.87 & 47.23 & 85.49 & 81.53 & 84.77 & \textbf{74.63} & 95.72 & \textbf{93.07} & 93.85 & 92.18 & 82.53\blue{4.04} \\
\rowcolor{gray!10}
\texttt{cl-t-c} & 15.54 & 10.26 & 12.90\blue{39.71} & 14.96 & 4.70 & 25.32 & 16.93 & 24.97 & 12.01 & 12.02 & 19.02 & 19.26 & 19.34 & 16.85\blue{69.72} \\
\texttt{cl-t-m} & 17.95 & 13.61 & 15.78\blue{36.83} & 30.20 & 3.58 & 33.07 & 30.13 & 31.16 & 16.11 & 22.27 & 33.99 & 36.32 & 33.28 & 27.01\blue{59.56} \\
\rowcolor{gray!10}
\texttt{cl-s-c} & 16.03 & 12.59 & 14.31\blue{38.30} & 15.90 & 5.88 & 22.23 & 26.42 & 26.16 & 15.36 & 20.67 & 18.52 & 25.26 & 22.00 & 19.84\blue{66.73} \\
\texttt{cl-s-m} & 16.39 & 12.39 & 14.39\blue{38.22} & 31.79 & 4.01 & 30.55 & 19.79 & 31.61 & 14.18 & 24.92 & 30.34 & 34.24 & 28.50 & 24.99\blue{61.58} \\
\rowcolor{gray!10}
\texttt{c-sem} & 23.78 & 17.15 & 20.47\blue{32.14} & 45.19 & 11.49 & 51.57 & 25.34 & 42.78 & 27.26 & 51.26 & 44.73 & 49.30 & 53.58 & 40.25\blue{46.32} \\
\texttt{multi-img} & 45.10 & 32.87 & 38.99\blue{13.62} & 67.63 & 27.89 & 74.78 & 47.02 & 78.34 & 43.20 & 59.99 & 63.76 & 73.68 & 63.19 & 59.95\blue{26.62} \\
\hline
\rowcolor{gray!10}
\ourmethod & 53.06 & \textbf{54.24} & \textbf{53.65}\red{1.04} & \textbf{82.39} & \textbf{54.11} & \textbf{88.93} & \textbf{88.59} & \textbf{92.33} & 70.65 & \textbf{97.75} & 92.30 & \textbf{97.91} & \textbf{96.10} & \textbf{86.11}\blue{0.46} \\
\Xhline{1.5pt}
% Block 6: GME-7B
GME-7B & 54.01 & 55.80 & 54.91 & 87.59 & 56.05 & 91.96 & 94.25 & 93.72 & 76.26 & 99.63 & 95.45 & 99.63 & 99.06 & 89.36 \\
\hline
\rowcolor{gray!10}
\texttt{s2m-add} & 53.57 & 49.76 & 51.67\blue{3.24} & 75.91 & 46.41 & 85.01 & 80.64 & 83.47 & 74.66 & 95.72 & 92.93 & 94.35 & 92.17 & 82.13\blue{7.23} \\
\texttt{s2m-mul} & 50.73 & 45.95 & 48.34\blue{6.57} & 72.29 & 43.38 & 83.23 & 72.85 & 78.28 & 69.64 & 92.92 & 90.23 & 90.05 & 90.17 & 78.30\blue{11.06} \\
\rowcolor{gray!10}
\texttt{cl-t-c} & 12.53 & 8.39 & 10.46\blue{44.45} & 7.32 & 5.16 & 17.36 & 18.86 & 16.81 & 12.83 & 13.28 & 16.18 & 20.30 & 13.97 & 14.21\blue{75.15} \\
\texttt{cl-t-m} & 14.64 & 9.03 & 11.84\blue{43.07} & 6.78 & 3.80 & 26.36 & 24.67 & 17.82 & 15.52 & 19.12 & 26.65 & 24.74 & 28.60 & 19.41\blue{69.95} \\
\rowcolor{gray!10}
\texttt{cl-s-c} & 13.08 & 8.93 & 11.01\blue{43.90} & 7.25 & 3.85 & 16.83 & 17.08 & 18.68 & 13.51 & 19.17 & 20.53 & 19.15 & 21.64 & 15.77\blue{73.59} \\
\texttt{cl-s-m} & 13.62 & 7.93 & 10.78\blue{44.13} & 6.88 & 4.73 & 22.77 & 18.40 & 18.43 & 10.43 & 16.80 & 21.03 & 26.40 & 20.86 & 16.67\blue{72.69} \\
\rowcolor{gray!10}
\texttt{c-sem} & 15.42 & 9.22 & 12.32\blue{42.59} & 15.02 & 6.13 & 27.52 & 18.27 & 36.39 & 22.53 & 20.40 & 28.91 & 30.81 & 26.13 & 23.21\blue{66.15} \\
\texttt{multi-img} & 47.50 & 36.01 & 41.76\blue{13.15} & 72.48 & 33.71 & 78.84 & 45.50 & 84.88 & 45.65 & 64.59 & 68.82 & 75.90 & 69.50 & 63.99\blue{25.37} \\
\hline
\rowcolor{gray!10}
\ourmethod & \textbf{54.96} & \textbf{56.51} & \textbf{55.74}\red{0.83} & \textbf{87.35} & \textbf{57.91} & \textbf{90.76} & \textbf{95.35} & \textbf{95.44} & \textbf{75.92} & \textbf{99.63} & \textbf{94.67} & \textbf{99.63} & \textbf{98.89} & \textbf{89.56}\red{0.20} \\
\Xhline{1.5pt}
% Block 7: UniME-V2-2B
UniME-V2-2B & 18.52 & 40.10 & 29.31 & 36.52 & 12.43 & 42.41 & 14.09 & 51.11 & 7.39 & 20.23 & 32.96 & 24.21 & 19.25 & 26.06 \\
\hline
\rowcolor{gray!10}
\texttt{s2m-add} & 36.66 & 30.03 & 33.35\red{4.04} & 50.78 & 25.04 & 58.61 & 37.71 & 54.90 & 36.67 & 68.00 & 68.50 & 69.21 & 68.39 & 53.78\red{27.72} \\
\texttt{s2m-mul} & 36.06 & 28.97 & 32.52\red{3.21} & 50.76 & 23.93 & 58.23 & 32.85 & 54.90 & 34.94 & 63.01 & 64.67 & 66.31 & 61.07 & 51.07\red{25.01} \\
\rowcolor{gray!10}
\texttt{cl-t-c} & 19.45 & 14.04 & 16.75\blue{12.56} & 19.03 & 7.04 & 25.79 & 22.55 & 30.85 & 14.57 & 13.25 & 21.82 & 32.30 & 26.67 & 21.39\blue{4.67} \\
\texttt{cl-t-m} & 16.70 & 12.63 & 14.67\blue{14.64} & 21.88 & 4.30 & 33.79 & 26.88 & 20.47 & 9.65 & 15.83 & 36.40 & 34.18 & 27.10 & 23.05\blue{3.01} \\
\rowcolor{gray!10}
\texttt{cl-s-c} & 19.23 & 16.02 & 17.63\blue{11.68} & 19.53 & 6.89 & 23.75 & 21.96 & 28.29 & 15.97 & 17.34 & 31.06 & 33.58 & 27.73 & 22.61\blue{3.45} \\
\texttt{cl-s-m} & 17.77 & 13.19 & 15.48\blue{13.83} & 23.67 & 5.33 & 36.60 & 32.03 & 20.17 & 11.46 & 19.46 & 37.16 & 36.87 & 26.76 & 24.95\blue{1.11} \\
\rowcolor{gray!10}
\texttt{c-sem} & 25.11 & 20.89 & 23.00\blue{6.31} & 38.24 & 13.31 & 51.27 & 38.76 & 40.04 & 21.25 & 33.42 & 51.86 & 60.03 & 52.80 & 40.10\red{14.04} \\
\texttt{multi-img} & 33.99 & 24.39 & 29.19\blue{0.12} & 50.79 & 19.08 & 60.61 & 30.15 & 58.91 & 24.36 & 43.08 & 46.76 & 61.06 & 50.79 & 44.56\red{18.50} \\
\hline
\rowcolor{gray!10}
\ourmethod & \textbf{44.22} & \textbf{44.19} & \textbf{44.21}\red{14.90} & \textbf{62.39} & \textbf{37.69} & \textbf{73.33} & \textbf{72.35} & \textbf{77.45} & \textbf{38.83} & \textbf{82.50} & \textbf{75.80} & \textbf{89.35} & \textbf{85.84} & \textbf{69.55}\red{43.49} \\
\Xhline{1.5pt}
% Block 8: UniME-V2-7B
UniME-V2-7B & 33.19 & 45.72 & 39.46 & 63.23 & 24.91 & 65.25 & 11.16 & 41.54 & 14.18 & 41.89 & 40.56 & 57.44 & 42.78 & 40.29 \\
\hline
\rowcolor{gray!10}
\texttt{s2m-add} & 40.28 & 37.61 & 38.95\blue{0.51} & 55.17 & 32.79 & 69.82 & 58.54 & 65.47 & \textbf{45.46} & 84.23 & 81.72 & 87.38 & 89.16 & 66.97\red{26.68} \\
\texttt{s2m-mul} & 40.57 & 38.14 & 39.36\blue{0.10} & 55.29 & 33.72 & 70.38 & 56.55 & 65.82 & 44.48 & 84.10 & 81.55 & 86.00 & \textbf{89.89} & 66.78\red{26.49} \\
\rowcolor{gray!10}
\texttt{cl-t-c} & 20.22 & 13.70 & 16.96\blue{22.50} & 24.63 & 5.79 & 29.62 & 27.36 & 21.81 & 15.81 & 17.79 & 34.01 & 27.97 & 28.61 & 23.34\blue{16.95} \\
\texttt{cl-t-m} & 23.20 & 19.90 & 21.55\blue{17.91} & 34.15 & 5.21 & 49.47 & 34.87 & 32.02 & 18.91 & 35.36 & 47.97 & 43.55 & 56.22 & 35.77\blue{4.52} \\
\rowcolor{gray!10}
\texttt{cl-s-c} & 20.91 & 18.25 & 19.58\blue{19.88} & 23.98 & 8.28 & 28.30 & 26.59 & 28.51 & 21.08 & 25.63 & 32.59 & 30.66 & 43.46 & 26.91\blue{13.38} \\
\texttt{cl-s-m} & 24.81 & 19.71 & 22.26\blue{17.20} & 35.09 & 6.60 & 49.60 & 36.77 & 33.75 & 20.75 & 37.45 & 48.31 & 55.85 & 60.28 & 38.45\blue{1.84} \\
\rowcolor{gray!10}
\texttt{c-sem} & 31.71 & 29.01 & 30.36\blue{9.10} & 59.37 & 19.84 & 64.17 & 42.06 & 50.69 & 34.28 & 70.63 & 66.71 & 72.14 & 75.85 & 55.57\red{15.28} \\
\texttt{multi-img} & 34.79 & 28.05 & 31.42\blue{8.04} & 53.99 & 23.06 & 67.47 & 40.81 & 70.58 & 30.36 & 52.38 & 65.49 & 72.19 & 62.65 & 53.90\red{13.61} \\
\hline
\rowcolor{gray!10}
\ourmethod & \textbf{45.90} & \textbf{48.26} & \textbf{47.08}\red{7.62} & \textbf{64.78} & \textbf{37.43} & \textbf{78.51} & \textbf{76.74} & \textbf{81.47} & 43.69 & \textbf{89.32} & \textbf{82.68} & \textbf{92.74} & 88.13 & \textbf{73.55}\red{33.26} \\
\Xhline{1.5pt}
% Block 9: B3-2B
B3-2B & 37.10 & 32.07 & 34.59 & 57.00 & 29.38 & 68.09 & 48.31 & 71.55 & 18.09 & 74.13 & 64.64 & 75.44 & 63.13 & 56.98 \\
\hline
\rowcolor{gray!10}
\texttt{s2m-add} & 35.86 & 27.36 & 31.61\blue{2.98} & 47.93 & 24.57 & 64.85 & 39.99 & 50.16 & \textbf{33.16} & 66.24 & 63.90 & 69.75 & 65.45 & 52.60\blue{4.38} \\
\texttt{s2m-mul} & 35.66 & 26.97 & 31.32\blue{3.27} & 47.89 & 24.47 & 64.82 & 35.42 & 49.59 & 32.43 & 63.95 & 63.09 & 66.79 & 64.47 & 51.29\blue{5.69} \\
\rowcolor{gray!10}
\texttt{cl-t-c} & 17.88 & 13.56 & 15.72\blue{18.87} & 19.64 & 4.53 & 32.73 & 10.60 & 20.15 & 9.10 & 12.68 & 26.80 & 30.88 & 18.64 & 18.58\blue{38.40} \\
\texttt{cl-t-m} & 18.36 & 14.95 & 16.66\blue{17.93} & 18.38 & 5.34 & 33.44 & 13.13 & 19.91 & 12.42 & 14.05 & 32.81 & 30.84 & 26.31 & 20.66\blue{36.32} \\
\rowcolor{gray!10}
\texttt{cl-s-c} & 20.70 & 16.46 & 18.58\blue{16.01} & 16.92 & 5.67 & 34.77 & 16.79 & 22.56 & 13.04 & 22.41 & 34.47 & 37.31 & 36.75 & 24.07\blue{32.91} \\
\texttt{cl-s-m} & 20.41 & 17.63 & 19.02\blue{15.57} & 19.74 & 4.72 & 35.90 & 17.36 & 20.74 & 15.77 & 19.76 & 38.43 & 36.44 & 33.89 & 24.28\blue{32.70} \\
\rowcolor{gray!10}
\texttt{c-sem} & 23.80 & 21.20 & 22.50\blue{12.09} & 34.98 & 11.15 & 51.26 & 21.98 & 37.45 & 19.41 & 29.16 & 49.03 & 54.62 & 46.73 & 35.58\blue{21.40} \\
\texttt{multi-img} & 28.93 & 16.01 & 22.47\blue{12.12} & 41.03 & 13.38 & 46.20 & 20.46 & 46.89 & 10.26 & 31.74 & 38.83 & 32.44 & 32.80 & 31.40\blue{25.58} \\
\hline
\rowcolor{gray!10}
\ourmethod & \textbf{42.06} & \textbf{37.60} & \textbf{39.83}\red{5.24} & \textbf{56.47} & \textbf{30.91} & \textbf{66.69} & \textbf{67.42} & \textbf{69.33} & 29.42 & \textbf{79.88} & \textbf{72.67} & \textbf{83.24} & \textbf{71.41} & \textbf{62.74}\red{5.76} \\
\Xhline{1.5pt}
% Block 10: B3-7B
B3-7B & 46.09 & 45.10 & 45.60 & 68.95 & 43.38 & 79.86 & 66.56 & 84.12 & 37.06 & 81.01 & 81.25 & 88.57 & 81.30 & 71.21 \\
\hline
\rowcolor{gray!10}
\texttt{s2m-add} & 44.95 & 40.38 & 42.67\blue{2.93} & 59.11 & 38.45 & 75.42 & 69.63 & 70.95 & 51.71 & \textbf{88.55} & 81.68 & 86.10 & 86.34 & 70.79\blue{0.42} \\
\texttt{s2m-mul} & 45.11 & 40.61 & 42.86\blue{2.74} & 59.50 & 38.63 & 75.74 & 69.25 & 71.18 & \textbf{51.72} & 87.36 & 82.07 & 85.17 & 86.23 & 70.69\blue{0.52} \\
\rowcolor{gray!10}
\texttt{cl-t-c} & 23.96 & 19.73 & 21.85\blue{23.75} & 25.53 & 8.05 & 47.76 & 13.93 & 32.90 & 19.17 & 29.29 & 44.67 & 45.02 & 33.61 & 29.99\blue{41.22} \\
\texttt{cl-t-m} & 24.79 & 21.29 & 23.04\blue{22.56} & 31.72 & 9.21 & 52.63 & 18.92 & 28.84 & 22.25 & 31.48 & 47.83 & 51.71 & 43.89 & 33.85\blue{37.36} \\
\rowcolor{gray!10}
\texttt{cl-s-c} & 25.66 & 21.29 & 23.48\blue{22.12} & 24.40 & 11.69 & 48.70 & 23.29 & 33.17 & 25.56 & 34.54 & 42.09 & 55.93 & 52.12 & 35.15\blue{36.06} \\
\texttt{cl-s-m} & 25.52 & 20.72 & 23.12\blue{22.48} & 24.78 & 11.31 & 49.68 & 26.51 & 32.71 & 26.20 & 37.63 & 46.29 & 51.53 & 44.47 & 35.11\blue{36.10} \\
\rowcolor{gray!10}
\texttt{c-sem} & 29.88 & 25.98 & 27.93\blue{17.67} & 56.08 & 24.05 & 65.81 & 25.04 & 52.18 & 33.15 & 59.78 & 57.76 & 67.14 & 70.35 & 51.13\blue{20.08} \\
\texttt{multi-img} & 35.37 & 22.45 & 28.91\blue{16.69} & 52.60 & 22.38 & 56.80 & 27.96 & 65.24 & 16.05 & 53.57 & 47.96 & 43.69 & 51.39 & 43.76\blue{27.45} \\
\hline
\rowcolor{gray!10}
\ourmethod & \textbf{49.11} & \textbf{48.39} & \textbf{48.75}\red{3.15} & \textbf{67.68} & \textbf{42.17} & \textbf{79.02} & \textbf{78.06} & \textbf{81.64} & 47.60 & 85.17 & \textbf{82.04} & \textbf{92.00} & \textbf{88.73} & \textbf{74.41}\red{3.20} \\
\Xhline{1.5pt}
\end{longtable}
\end{center}

\twocolumn
% \clearpage

\onecolumn

% We recommend wrapping this in a \onecolumn environment for two-column papers.
\begin{center}
\tiny % Use smaller font for wide tables
\setlength\tabcolsep{4.0pt} % Adjust column separation
\renewcommand\arraystretch{1.2} % Adjust row height
\begin{longtable}{p{2.5cm}|cccc|c|cc|c|cccccc|c}
\caption{Performance comparison on ViDoRe-V2, ViDoSeek, and VisRAG benchmarks. For each model block, we bold the best-performing optimization method in each column (except for the base result). The average scores for optimizations are shown with relative gains ($\color{RedOrange}\uparrow\color{black}$/$\color{BlueGreen}\downarrow\color{black}$) compared to the base model.}
\label{tab:full_results_vidorev2_vidoseek_visrag} \\
\Xhline{1.5pt}
\rowcolor{gray!20}
\multirow{2}{*}{\textbf{Method}} & \multicolumn{5}{c|}{\textbf{ViDoRe-V2}} & \multicolumn{3}{c|}{\textbf{ViDoSeek}} & \multicolumn{7}{c}{\textbf{VisRAG}} \\
\cmidrule(lr){2-6} \cmidrule(lr){7-9} \cmidrule(lr){10-16}
\rowcolor{gray!20}
& \textbf{Bio-L} & \textbf{Eco-R} & \textbf{ESG-H} & \textbf{ESG-M} & \textbf{Avg.} & \textbf{Doc} & \textbf{Page} & \textbf{Avg.} & \textbf{Arxiv} & \textbf{Chart} & \textbf{InfoV} & \textbf{MP-Doc} & \textbf{Plot} & \textbf{Slide} & \textbf{Avg.} \\
\Xhline{1.5pt}
\endfirsthead
\Xhline{1.5pt}
\rowcolor{gray!20}
\multicolumn{16}{c}{\tablename\ \thetable\ -- \textit{Continued from previous page}} \\
\Xhline{1.5pt}
\rowcolor{gray!20}
\multirow{2}{*}{\textbf{Method}} & \multicolumn{5}{c|}{\textbf{ViDoRe-V2}} & \multicolumn{3}{c|}{\textbf{ViDoSeek}} & \multicolumn{7}{c}{\textbf{VisRAG}} \\
\cmidrule(lr){2-6} \cmidrule(lr){7-9} \cmidrule(lr){10-16}
\rowcolor{gray!20}
& \textbf{Bio-L} & \textbf{Eco-R} & \textbf{ESG-H} & \textbf{ESG-M} & \textbf{Avg.} & \textbf{Doc} & \textbf{Page} & \textbf{Avg.} & \textbf{Arxiv} & \textbf{Chart} & \textbf{InfoV} & \textbf{MP-Doc} & \textbf{Plot} & \textbf{Slide} & \textbf{Avg.} \\
\Xhline{1.5pt}
\endhead
\Xhline{1.5pt}
\multicolumn{16}{r}{\textit{Continued on next page}} \\
\endfoot
\Xhline{1.5pt}
\endlastfoot
%
% Block 1: VLM2Vec-V1-2B
VLM2Vec-V1-2B & 6.88 & 14.15 & 12.25 & 20.54 & 13.46 & 56.40 & 67.73 & 62.07 & 41.68 & 58.21 & 70.79 & 42.74 & 23.83 & 74.07 & 51.89 \\
\hline
\rowcolor{gray!10}
\texttt{s2m-add} & 10.40 & 8.59 & 4.68 & 3.46 & 6.78\blue{6.68} & 40.78 & 31.23 & 36.01\blue{26.06} & 28.94 & 44.79 & 59.76 & 30.86 & 9.67 & 59.59 & 38.94\blue{12.95} \\
\texttt{s2m-mul} & 10.00 & 8.40 & 5.09 & 3.51 & 6.75\blue{6.71} & 40.63 & 31.00 & 35.82\blue{26.25} & 28.85 & 44.64 & 60.48 & 30.55 & 9.83 & 59.52 & 38.98\blue{12.91} \\
\rowcolor{gray!10}
\texttt{cl-t-c} & 14.86 & 14.49 & 4.14 & 4.48 & 9.49\blue{3.97} & 42.17 & 31.65 & 36.91\blue{25.16} & 13.56 & 18.52 & 25.66 & 13.11 & 1.41 & 31.91 & 17.36\blue{34.53} \\
\texttt{cl-t-m} & 13.34 & 4.60 & 8.78 & 5.15 & 7.97\blue{5.49} & 36.71 & 26.80 & 31.76\blue{30.31} & 8.29 & 18.27 & 37.74 & 11.85 & 1.13 & 32.46 & 18.29\blue{33.60} \\
\rowcolor{gray!10}
\texttt{cl-s-c} & 16.38 & 10.03 & 4.93 & 4.64 & 9.00\blue{4.46} & 43.89 & 31.95 & 37.92\blue{24.15} & 13.03 & 27.01 & 28.61 & 18.82 & 1.54 & 31.31 & 20.05\blue{31.84} \\
\texttt{cl-s-m} & 13.46 & 6.50 & 6.89 & 3.74 & 7.65\blue{5.81} & 37.20 & 26.86 & 32.03\blue{30.04} & 8.21 & 22.04 & 37.37 & 12.65 & 1.15 & 33.31 & 19.12\blue{32.77} \\
\rowcolor{gray!10}
\texttt{c-sem} & 20.32 & 11.01 & 11.86 & 10.44 & 13.41\blue{0.05} & 56.19 & 47.96 & 52.08\blue{9.99} & 16.53 & 44.39 & 43.88 & 24.77 & 3.86 & 54.32 & 31.29\blue{20.60} \\
\texttt{multi-img} & 15.59 & 14.90 & 5.70 & 5.95 & 10.54\blue{2.92} & 45.24 & 34.26 & 39.75\blue{22.32} & 30.20 & 41.53 & 60.40 & 29.53 & 8.44 & 60.10 & 38.37\blue{13.52} \\
\hline
\rowcolor{gray!10}
\ourmethod & \textbf{30.33} & \textbf{29.55} & \textbf{33.21} & \textbf{38.33} & \textbf{32.86}\red{19.40} & \textbf{75.23} & \textbf{70.19} & \textbf{72.71}\red{10.64} & \textbf{38.18} & \textbf{60.09} & \textbf{69.44} & \textbf{48.29} & \textbf{18.83} & \textbf{76.95} & \textbf{51.96}\red{0.07} \\
\Xhline{1.5pt}
% Block 2: VLM2Vec-V1-7B
VLM2Vec-V1-7B & 4.93 & 13.74 & 6.82 & 11.27 & 9.19 & 54.26 & 77.39 & 65.83 & 52.58 & 69.83 & 71.43 & 52.86 & 34.24 & 73.22 & 59.03 \\
\hline
\rowcolor{gray!10}
\texttt{s2m-add} & 29.49 & 38.26 & 31.73 & 22.80 & 30.57\red{21.38} & 62.50 & 53.64 & 58.07\blue{7.76} & 45.56 & 48.98 & 68.39 & 49.01 & 11.01 & 70.61 & 48.93\blue{10.10} \\
\texttt{s2m-mul} & 28.79 & 37.40 & 29.66 & 21.43 & 29.32\red{20.13} & 62.34 & 52.73 & 57.54\blue{8.29} & 45.29 & 50.21 & 69.10 & 49.54 & 11.02 & 70.82 & 49.33\blue{9.70} \\
\rowcolor{gray!10}
\texttt{cl-t-c} & 17.52 & 18.22 & 11.68 & 8.82 & 14.06\red{4.87} & 49.02 & 40.25 & 44.64\blue{21.19} & 15.12 & 21.70 & 36.61 & 19.07 & 2.22 & 37.89 & 22.10\blue{36.93} \\
\texttt{cl-t-m} & 22.08 & 14.23 & 24.74 & 15.61 & 19.17\red{9.98} & 58.38 & 51.13 & 54.76\blue{11.07} & 18.75 & 30.49 & 58.40 & 22.24 & 3.90 & 52.32 & 31.02\blue{28.01} \\
\rowcolor{gray!10}
\texttt{cl-s-c} & 19.74 & 22.03 & 14.36 & 12.74 & 17.22\red{8.03} & 51.57 & 41.97 & 46.77\blue{19.06} & 15.24 & 17.35 & 36.59 & 24.24 & 1.79 & 35.59 & 21.80\blue{37.23} \\
\texttt{cl-s-m} & 23.08 & 16.08 & 20.53 & 14.47 & 18.54\red{9.35} & 58.59 & 50.84 & 54.72\blue{11.11} & 18.76 & 31.65 & 58.91 & 24.90 & 3.10 & 52.63 & 31.66\blue{27.37} \\
\rowcolor{gray!10}
\texttt{c-sem} & 29.88 & 31.92 & 30.24 & 20.34 & 28.10\red{18.91} & 72.03 & 66.00 & 69.02\red{3.19} & 39.77 & 57.94 & 63.94 & 41.41 & 14.39 & 69.66 & 47.85\blue{11.18} \\
\texttt{multi-img} & 30.91 & 39.93 & 23.80 & 17.07 & 27.93\red{18.74} & 62.03 & 49.71 & 55.87\blue{9.96} & 45.32 & 44.84 & 64.95 & 40.71 & 10.55 & 67.95 & 45.72\blue{13.31} \\
\hline
\rowcolor{gray!10}
\ourmethod & \textbf{42.63} & \textbf{42.89} & \textbf{50.55} & \textbf{42.86} & \textbf{44.73}\red{35.54} & \textbf{78.34} & \textbf{78.61} & \textbf{78.48}\red{12.65} & \textbf{54.43} & \textbf{70.30} & \textbf{69.27} & \textbf{58.49} & \textbf{33.46} & \textbf{77.98} & \textbf{60.66}\red{1.63} \\
\Xhline{1.5pt}
% Block 3: VLM2Vec-V2-2B
VLM2Vec-V2-2B & 44.45 & 45.77 & 48.77 & 46.98 & 46.49 & 80.88 & 83.68 & 82.28 & 77.38 & 82.30 & 86.27 & 71.60 & 66.96 & 92.04 & 79.43 \\
\hline
\rowcolor{gray!10}
\texttt{s2m-add} & 41.02 & 49.68 & 41.29 & 20.26 & 38.06\blue{8.43} & 69.01 & 64.83 & 66.92\blue{15.36} & 62.37 & 73.06 & 78.41 & 75.52 & 22.67 & 85.73 & 66.29\blue{13.14} \\
\texttt{s2m-mul} & 42.43 & 50.96 & 42.90 & 21.91 & 39.55\blue{6.94} & 71.66 & 66.41 & 69.04\blue{13.24} & 63.30 & 73.50 & 79.84 & 76.41 & 23.36 & 87.29 & 67.28\blue{12.15} \\
\rowcolor{gray!10}
\texttt{cl-t-c} & 19.84 & 21.50 & 12.92 & 9.75 & 16.00\blue{30.49} & 48.48 & 49.99 & 49.24\blue{33.04} & 25.86 & 22.73 & 34.54 & 19.89 & 10.59 & 36.62 & 25.04\blue{54.39} \\
\texttt{cl-t-m} & 25.10 & 18.29 & 15.42 & 12.37 & 17.80\blue{28.69} & 57.69 & 59.96 & 58.83\blue{23.45} & 37.64 & 42.56 & 56.47 & 24.76 & 11.54 & 51.93 & 37.48\blue{41.95} \\
\rowcolor{gray!10}
\texttt{cl-s-c} & 23.22 & 18.36 & 10.47 & 12.71 & 16.19\blue{30.30} & 49.73 & 51.97 & 50.85\blue{31.43} & 28.13 & 27.93 & 34.99 & 25.08 & 10.77 & 40.66 & 27.93\blue{51.50} \\
\texttt{cl-s-m} & 27.68 & 18.68 & 17.65 & 9.93 & 18.49\blue{28.00} & 57.31 & 61.50 & 59.41\blue{22.87} & 37.55 & 43.07 & 55.65 & 26.56 & 11.74 & 52.27 & 37.81\blue{41.62} \\
\rowcolor{gray!10}
\texttt{c-sem} & 36.74 & 31.11 & 20.49 & 21.25 & 27.40\blue{19.09} & 71.91 & 71.47 & 71.69\blue{10.59} & 53.17 & 61.34 & 67.94 & 53.96 & 23.78 & 74.81 & 55.83\blue{23.60} \\
\texttt{multi-img} & 30.01 & 45.94 & 23.58 & 17.56 & 29.27\blue{17.22} & 63.17 & 50.95 & 57.06\blue{25.22} & 62.83 & 61.38 & 75.92 & 54.34 & 24.62 & 75.43 & 59.09\blue{20.34} \\
\hline
\rowcolor{gray!10}
\ourmethod & \textbf{50.06} & \textbf{53.76} & \textbf{57.41} & \textbf{46.40} & \textbf{51.91}\red{5.42} & \textbf{80.94} & \textbf{83.87} & \textbf{82.41}\red{0.13} & \textbf{77.18} & \textbf{78.05} & \textbf{84.37} & \textbf{78.07} & \textbf{58.74} & \textbf{91.95} & \textbf{78.06}\blue{1.37} \\
\Xhline{1.5pt}
% Block 4: LamRA-Ret
LamRA-Ret & 10.75 & 9.65 & 6.32 & 11.18 & 9.48 & 60.17 & 28.81 & 44.49 & 11.17 & 63.50 & 59.78 & 33.57 & 29.42 & 57.59 & 42.51 \\
\hline
\rowcolor{gray!10}
\texttt{s2m-add} & 12.36 & \textbf{21.34} & 18.02 & 21.30 & 18.26\red{8.78} & 55.39 & 27.91 & 41.65\blue{2.84} & 2.75 & 33.31 & 44.51 & 28.61 & 4.86 & 49.08 & 27.19\blue{15.32} \\
\texttt{s2m-mul} & 9.91 & 17.36 & 21.00 & 20.88 & 17.29\red{7.81} & 52.53 & 25.95 & 39.24\blue{5.25} & 2.75 & 33.53 & 43.10 & 28.28 & 5.19 & 49.10 & 26.99\blue{15.52} \\
\rowcolor{gray!10}
\texttt{cl-t-c} & 6.69 & 7.44 & 5.85 & 1.05 & 5.26\blue{4.22} & 39.52 & 30.63 & 35.08\blue{9.41} & 4.26 & 19.69 & 21.94 & 9.90 & 1.53 & 26.54 & 13.98\blue{28.53} \\
\texttt{cl-t-m} & 7.75 & 10.84 & 8.20 & 3.97 & 7.69\blue{1.79} & 44.87 & 38.69 & 41.78\blue{2.71} & 5.46 & 22.85 & 38.15 & 17.45 & 2.13 & 35.03 & 20.18\blue{22.33} \\
\rowcolor{gray!10}
\texttt{cl-s-c} & 7.57 & 6.57 & 6.90 & 1.97 & 5.75\blue{3.73} & 38.41 & 30.98 & 34.70\blue{9.79} & 5.19 & 17.67 & 21.50 & 11.02 & 1.52 & 25.81 & 13.79\blue{28.72} \\
\texttt{cl-s-m} & 8.55 & 10.98 & 7.70 & 4.07 & 7.83\blue{1.65} & 45.50 & 38.86 & 42.18\blue{2.31} & 5.45 & 23.56 & 37.71 & 17.17 & 2.24 & 35.58 & 20.29\blue{22.22} \\
\rowcolor{gray!10}
\texttt{c-sem} & 8.67 & 9.41 & 8.69 & 4.43 & 7.80\blue{1.68} & 53.12 & 38.51 & 45.82\red{1.33} & \textbf{9.07} & 40.44 & 41.90 & 18.39 & 3.92 & 44.45 & 26.36\blue{16.15} \\
\texttt{multi-img} & 3.36 & 10.48 & 13.43 & 6.24 & 8.38\blue{1.10} & 28.18 & 10.96 & 19.57\blue{24.92} & 1.22 & 23.90 & 35.05 & 16.83 & 5.71 & 18.51 & 16.87\blue{25.64} \\
\hline
\rowcolor{gray!10}
\ourmethod & \textbf{15.81} & 17.65 & \textbf{23.75} & \textbf{25.48} & \textbf{20.67}\red{11.19} & \textbf{58.55} & \textbf{39.99} & \textbf{49.27}\red{4.78} & 5.91 & \textbf{60.76} & \textbf{54.91} & \textbf{38.77} & \textbf{25.27} & \textbf{62.34} & \textbf{41.33}\blue{1.18} \\
\Xhline{1.5pt}
% Block 5: GME-2B
GME-2B & 54.25 & 50.65 & 59.44 & 49.15 & 53.37 & 81.44 & 79.62 & 80.53 & 81.37 & 81.70 & 91.31 & 85.03 & 63.81 & 93.60 & 82.80 \\
\hline
\rowcolor{gray!10}
\texttt{s2m-add} & 48.33 & 46.46 & 40.98 & 30.02 & 41.45\blue{11.92} & 82.20 & 63.50 & 72.85\blue{7.68} & 70.43 & 77.10 & 85.04 & 80.94 & 20.83 & 89.96 & 70.72\blue{12.08} \\
\texttt{s2m-mul} & 51.15 & 49.41 & 46.69 & 28.65 & 43.98\blue{9.39} & 83.64 & 67.99 & 75.82\blue{4.71} & 74.25 & 77.91 & 88.07 & 84.75 & 22.40 & 92.99 & 73.40\blue{9.40} \\
\rowcolor{gray!10}
\texttt{cl-t-c} & 17.04 & 18.20 & 2.88 & 5.79 & 10.98\blue{42.39} & 34.72 & 28.03 & 31.38\blue{49.15} & 14.40 & 18.17 & 30.07 & 14.03 & 4.30 & 26.48 & 17.91\blue{64.89} \\
\texttt{cl-t-m} & 18.60 & 12.72 & 12.39 & 7.72 & 12.86\blue{40.51} & 45.17 & 40.84 & 43.01\blue{37.52} & 25.41 & 31.73 & 44.53 & 16.56 & 2.77 & 40.18 & 26.86\blue{55.94} \\
\rowcolor{gray!10}
\texttt{cl-s-c} & 17.47 & 19.49 & 10.88 & 5.99 & 13.46\blue{39.91} & 34.42 & 29.29 & 31.86\blue{48.67} & 15.10 & 14.91 & 29.78 & 15.49 & 3.92 & 28.86 & 18.01\blue{64.79} \\
\texttt{cl-s-m} & 21.66 & 11.82 & 14.37 & 5.63 & 13.37\blue{40.00} & 42.98 & 38.99 & 40.99\blue{39.54} & 24.76 & 30.32 & 45.48 & 17.41 & 2.60 & 39.35 & 26.65\blue{56.15} \\
\rowcolor{gray!10}
\texttt{c-sem} & 24.51 & 22.71 & 22.72 & 18.57 & 22.13\blue{31.24} & 65.58 & 53.29 & 59.44\blue{21.09} & 38.80 & 50.88 & 52.09 & 35.27 & 18.64 & 63.25 & 43.16\blue{39.64} \\
\texttt{multi-img} & 32.00 & 41.67 & 25.86 & 19.83 & 29.84\blue{23.53} & 73.00 & 47.58 & 60.29\blue{20.24} & 67.87 & 67.77 & 81.98 & 58.54 & 22.85 & 77.89 & 62.82\blue{19.98} \\
\hline
\rowcolor{gray!10}
\ourmethod & \textbf{55.14} & \textbf{52.08} & \textbf{56.24} & \textbf{51.36} & \textbf{53.71}\red{0.34} & \textbf{83.93} & \textbf{80.16} & \textbf{82.05}\red{1.52} & \textbf{81.65} & \textbf{83.87} & \textbf{91.06} & \textbf{85.89} & \textbf{63.17} & \textbf{93.76} & \textbf{83.23}\red{0.43} \\
\Xhline{1.5pt}
% Block 6: GME-7B
GME-7B & 53.66 & 54.34 & 65.38 & 54.32 & 56.93 & 83.21 & 84.18 & 83.70 & 87.20 & 82.32 & 92.92 & 88.89 & 63.36 & 94.81 & 84.92 \\
\hline
\rowcolor{gray!10}
\texttt{s2m-add} & 50.40 & 47.49 & 47.63 & 29.02 & 43.64\blue{13.29} & 83.22 & 67.67 & 75.45\blue{8.25} & 73.94 & 79.30 & 87.84 & 84.63 & 22.35 & 92.79 & 73.48\blue{11.44} \\
\texttt{s2m-mul} & 48.04 & 47.85 & 39.12 & 29.60 & 41.15\blue{15.78} & 82.30 & 63.00 & 72.65\blue{11.05} & 70.46 & 77.68 & 84.96 & 80.38 & 21.12 & 90.08 & 70.78\blue{14.14} \\
\rowcolor{gray!10}
\texttt{cl-t-c} & 10.12 & 12.81 & 8.12 & 4.38 & 8.86\blue{48.07} & 29.66 & 23.82 & 26.74\blue{56.96} & 3.79 & 11.21 & 21.97 & 11.20 & 2.44 & 18.30 & 11.49\blue{73.43} \\
\texttt{cl-t-m} & 9.38 & 10.36 & 6.42 & 4.63 & 7.70\blue{49.23} & 35.86 & 32.84 & 34.35\blue{49.35} & 3.56 & 21.05 & 35.14 & 12.97 & 1.45 & 24.43 & 16.43\blue{68.49} \\
\rowcolor{gray!10}
\texttt{cl-s-c} & 8.58 & 16.36 & 5.41 & 5.20 & 8.89\blue{48.04} & 28.62 & 23.40 & 26.01\blue{57.69} & 5.46 & 11.14 & 23.14 & 11.35 & 1.54 & 21.57 & 12.37\blue{72.55} \\
\texttt{cl-s-m} & 10.84 & 9.96 & 6.62 & 2.80 & 7.56\blue{49.37} & 34.06 & 29.69 & 31.88\blue{51.82} & 3.93 & 22.24 & 33.74 & 13.14 & 1.42 & 23.76 & 16.37\blue{68.55} \\
\rowcolor{gray!10}
\texttt{c-sem} & 16.30 & 13.79 & 8.34 & 8.42 & 11.71\blue{45.22} & 49.08 & 35.41 & 42.25\blue{41.45} & 7.40 & 29.57 & 32.02 & 15.54 & 5.76 & 35.26 & 20.93\blue{63.99} \\
\texttt{multi-img} & 29.32 & 39.47 & 30.32 & 16.76 & 28.97\blue{27.96} & 75.23 & 52.91 & 64.07\blue{19.63} & 72.68 & 70.18 & 85.20 & 63.27 & 23.75 & 81.21 & 66.05\blue{18.87} \\
\hline
\rowcolor{gray!10}
\ourmethod & \textbf{62.40} & \textbf{59.73} & \textbf{68.43} & \textbf{57.90} & \textbf{62.12}\red{5.19} & \textbf{84.30} & \textbf{83.94} & \textbf{84.12}\red{0.42} & \textbf{87.12} & \textbf{83.94} & \textbf{92.84} & \textbf{89.62} & \textbf{62.53} & \textbf{94.97} & \textbf{85.17}\red{0.25} \\
\Xhline{1.5pt}
% Block 7: UniME-V2-2B
UniME-V2-2B & 9.50 & 15.78 & 15.51 & 19.03 & 14.96 & 54.24 & 77.98 & 66.11 & 61.19 & 65.48 & 76.76 & 59.65 & 45.25 & 77.50 & 64.31 \\
\hline
\rowcolor{gray!10}
\texttt{s2m-add} & 30.82 & 38.23 & 23.31 & 20.59 & 28.24\red{13.28} & 58.58 & 50.31 & 54.45\blue{11.66} & 45.94 & 55.05 & 67.28 & 55.13 & 12.43 & 72.62 & 51.41\blue{12.90} \\
\texttt{s2m-mul} & 29.67 & 34.92 & 21.62 & 19.02 & 26.31\red{11.35} & 56.92 & 48.58 & 52.75\blue{13.36} & 45.21 & 55.60 & 66.84 & 53.32 & 12.49 & 71.71 & 50.86\blue{13.45} \\
\rowcolor{gray!10}
\texttt{cl-t-c} & 24.01 & 15.49 & 14.30 & 12.37 & 16.54\red{1.58} & 46.27 & 38.93 & 42.60\blue{23.51} & 18.00 & 24.42 & 29.87 & 16.38 & 4.20 & 33.42 & 21.05\blue{43.26} \\
\texttt{cl-t-m} & 21.67 & 10.06 & 13.15 & 9.43 & 13.58\blue{1.38} & 42.79 & 35.18 & 38.99\blue{27.12} & 17.18 & 27.33 & 42.30 & 14.99 & 5.25 & 44.29 & 25.22\blue{39.09} \\
\rowcolor{gray!10}
\texttt{cl-s-c} & 23.56 & 15.34 & 12.00 & 12.62 & 15.88\red{0.92} & 48.96 & 39.41 & 44.19\blue{21.92} & 19.80 & 24.93 & 32.56 & 21.55 & 4.33 & 38.90 & 23.68\blue{40.63} \\
\texttt{cl-s-m} & 22.75 & 13.02 & 12.48 & 10.46 & 14.68\blue{0.28} & 43.51 & 37.60 & 40.56\blue{25.55} & 17.20 & 29.53 & 42.92 & 16.67 & 5.00 & 47.36 & 26.45\blue{37.86} \\
\rowcolor{gray!10}
\texttt{c-sem} & 35.10 & 27.35 & 21.50 & 28.81 & 28.19\red{13.23} & 64.50 & 60.58 & 62.54\blue{3.57} & 32.65 & 51.23 & 57.19 & 40.32 & 14.91 & 64.66 & 43.49\blue{20.82} \\
\texttt{multi-img} & 31.08 & 38.51 & 16.09 & 18.02 & 25.93\red{10.97} & 56.06 & 43.06 & 49.56\blue{16.55} & 46.47 & 46.80 & 65.91 & 42.99 & 11.82 & 63.81 & 46.30\blue{18.01} \\
\hline
\rowcolor{gray!10}
\ourmethod & \textbf{44.52} & \textbf{45.86} & \textbf{48.15} & \textbf{51.48} & \textbf{47.50}\red{32.54} & \textbf{80.50} & \textbf{79.46} & \textbf{79.98}\red{13.87} & \textbf{58.51} & \textbf{63.17} & \textbf{74.52} & \textbf{66.58} & \textbf{35.57} & \textbf{80.21} & \textbf{63.09}\blue{1.22} \\
\Xhline{1.5pt}
% Block 8: UniME-V2-7B
UniME-V2-7B & 26.77 & 23.69 & 24.68 & 31.17 & 26.58 & 78.25 & 82.25 & 80.25 & 60.60 & 79.43 & 80.61 & 64.94 & 45.35 & 82.17 & 68.85 \\
\hline
\rowcolor{gray!10}
\texttt{s2m-add} & 41.25 & \textbf{54.38} & 36.91 & 24.88 & 39.36\red{12.78} & 71.49 & 64.32 & 67.91\blue{12.34} & 51.03 & 64.77 & 75.82 & 67.21 & 15.38 & 81.12 & 59.22\blue{9.63} \\
\texttt{s2m-mul} & 41.49 & 53.84 & 35.49 & 23.31 & 38.53\red{11.95} & 71.69 & 64.24 & 67.97\blue{12.28} & 50.89 & 65.45 & 76.45 & 67.54 & 15.32 & 81.68 & 59.56\blue{9.29} \\
\rowcolor{gray!10}
\texttt{cl-t-c} & 22.76 & 16.17 & 8.68 & 14.14 & 15.44\blue{11.14} & 49.38 & 45.31 & 47.35\blue{32.90} & 19.44 & 24.63 & 35.22 & 17.75 & 4.44 & 36.13 & 22.94\blue{45.91} \\
\texttt{cl-t-m} & 27.93 & 14.34 & 19.03 & 16.99 & 19.57\blue{7.01} & 55.94 & 52.84 & 54.39\blue{25.86} & 27.05 & 38.47 & 62.09 & 20.81 & 7.64 & 55.56 & 35.27\blue{33.58} \\
\rowcolor{gray!10}
\texttt{cl-s-c} & 24.75 & 19.62 & 12.41 & 11.69 & 17.12\blue{9.46} & 52.10 & 48.38 & 50.24\blue{30.01} & 20.58 & 21.04 & 34.39 & 23.49 & 5.53 & 35.42 & 23.41\blue{45.44} \\
\texttt{cl-s-m} & 30.78 & 20.40 & 18.14 & 15.62 & 21.24\blue{5.34} & 56.69 & 53.63 & 55.16\blue{25.09} & 28.73 & 37.84 & 62.65 & 23.01 & 6.78 & 57.14 & 36.03\blue{32.82} \\
\rowcolor{gray!10}
\texttt{c-sem} & 37.96 & 29.51 & 27.33 & 21.95 & 29.19\red{2.61} & 71.01 & 68.82 & 69.92\blue{10.33} & 51.40 & 65.80 & 72.53 & 54.42 & 21.98 & 76.81 & 57.16\blue{11.69} \\
\texttt{multi-img} & 35.99 & 45.82 & 24.74 & 14.71 & 30.32\red{3.74} & 65.89 & 53.21 & 59.55\blue{20.70} & 51.79 & 62.63 & 73.67 & 48.06 & 17.25 & 73.00 & 54.40\blue{14.45} \\
\hline
\rowcolor{gray!10}
\ourmethod & \textbf{54.95} & 50.07 & \textbf{54.92} & \textbf{50.14} & \textbf{52.52}\red{25.94} & \textbf{81.16} & \textbf{83.32} & \textbf{82.24}\red{1.99} & \textbf{61.90} & \textbf{77.80} & \textbf{78.41} & \textbf{71.89} & \textbf{44.43} & \textbf{84.68} & \textbf{69.85}\red{1.00} \\
\Xhline{1.5pt}
% Block 9: B3-2B
B3-2B & 38.41 & 31.80 & 45.23 & 45.10 & 40.14 & 78.56 & 74.87 & 76.72 & 51.75 & 66.86 & 70.43 & 45.73 & 36.69 & 77.81 & 58.21 \\
\hline
\rowcolor{gray!10}
\texttt{s2m-add} & 32.14 & \textbf{40.97} & 22.62 & 19.74 & 28.87\blue{11.27} & 67.40 & 56.61 & 62.01\blue{14.71} & 43.01 & 51.27 & 68.01 & 50.92 & 13.40 & 73.78 & 50.07\blue{8.14} \\
\texttt{s2m-mul} & 30.87 & 39.31 & 21.36 & 19.69 & 27.81\blue{12.33} & 66.73 & 56.12 & 61.43\blue{15.29} & 42.74 & 49.75 & \textbf{68.21} & 51.27 & 13.31 & 73.79 & 49.85\blue{8.36} \\
\rowcolor{gray!10}
\texttt{cl-t-c} & 11.90 & 13.53 & 9.61 & 5.70 & 10.19\blue{29.95} & 44.40 & 41.39 & 42.90\blue{33.82} & 14.13 & 37.31 & 41.30 & 17.72 & 3.32 & 44.50 & 26.38\blue{31.83} \\
\texttt{cl-t-m} & 15.16 & 14.20 & 13.12 & 6.61 & 12.27\blue{27.87} & 44.92 & 44.57 & 44.75\blue{31.97} & 14.12 & 32.08 & 42.61 & 19.91 & 1.06 & 43.03 & 25.47\blue{32.74} \\
\rowcolor{gray!10}
\texttt{cl-s-c} & 16.03 & 20.23 & 10.10 & 8.92 & 13.82\blue{26.32} & 48.54 & 47.73 & 48.14\blue{28.58} & 13.09 & 36.05 & 44.30 & 24.60 & 2.76 & 48.77 & 28.26\blue{29.95} \\
\texttt{cl-s-m} & 16.43 & 16.68 & 14.29 & 9.21 & 14.15\blue{25.99} & 48.08 & 47.08 & 47.58\blue{29.14} & 13.51 & 35.91 & 44.17 & 25.47 & 1.20 & 45.45 & 27.62\blue{30.59} \\
\rowcolor{gray!10}
\texttt{c-sem} & 25.26 & 25.23 & 18.89 & 18.29 & 21.92\blue{18.22} & 60.79 & 59.97 & 60.38\blue{16.34} & 32.30 & 54.90 & 61.44 & 39.77 & 10.80 & 61.17 & 43.40\blue{14.81} \\
\texttt{multi-img} & 21.83 & 23.38 & 7.29 & 11.37 & 15.97\blue{24.17} & 52.38 & 37.41 & 44.90\blue{31.82} & 38.52 & 41.26 & 53.67 & 24.33 & 7.97 & 37.56 & 33.89\blue{24.32} \\
\hline
\rowcolor{gray!10}
\ourmethod & \textbf{45.03} & 39.23 & \textbf{48.20} & \textbf{49.30} & \textbf{45.44}\red{5.30} & \textbf{79.98} & \textbf{80.79} & \textbf{80.39}\red{3.67} & \textbf{51.00} & \textbf{62.94} & 67.06 & \textbf{53.86} & \textbf{32.02} & \textbf{80.01} & \textbf{57.82}\blue{0.39} \\
\Xhline{1.5pt}
% Block 10: B3-7B
B3-7B & 47.29 & 44.81 & 50.84 & 48.05 & 47.75 & 82.07 & 82.26 & 82.17 & 65.83 & 76.77 & 84.54 & 68.55 & 52.86 & 85.75 & 72.38 \\
\hline
\rowcolor{gray!10}
\texttt{s2m-add} & 44.47 & 48.61 & 35.86 & 28.82 & 39.44\blue{8.31} & 75.93 & 65.24 & 70.59\blue{11.58} & 56.95 & 69.80 & 78.61 & 69.19 & 19.74 & 85.71 & 63.33\blue{9.05} \\
\texttt{s2m-mul} & 44.79 & \textbf{49.78} & 35.45 & 28.68 & 39.68\blue{8.07} & 76.17 & 64.83 & 70.50\blue{11.67} & 57.17 & 70.52 & 79.24 & 70.67 & 20.17 & 85.50 & 63.88\blue{8.50} \\
\rowcolor{gray!10}
\texttt{cl-t-c} & 18.50 & 22.50 & 20.46 & 15.85 & 19.33\blue{28.42} & 66.30 & 63.26 & 64.78\blue{17.39} & 18.07 & 43.88 & 52.94 & 23.57 & 5.85 & 58.35 & 33.78\blue{38.60} \\
\texttt{cl-t-m} & 22.98 & 22.71 & 22.21 & 14.16 & 20.52\blue{27.23} & 67.41 & 67.43 & 67.42\blue{14.75} & 23.92 & 43.10 & 62.54 & 28.35 & 7.43 & 61.50 & 37.81\blue{34.57} \\
\rowcolor{gray!10}
\texttt{cl-s-c} & 20.76 & 26.27 & 28.71 & 22.95 & 24.67\blue{23.08} & 68.09 & 65.59 & 66.84\blue{15.33} & 16.71 & 47.07 & 53.44 & 31.71 & 7.56 & 60.01 & 36.08\blue{36.30} \\
\texttt{cl-s-m} & 22.17 & 26.43 & 26.15 & 19.85 & 23.65\blue{24.10} & 67.85 & 64.05 & 65.95\blue{16.22} & 19.23 & 43.79 & 53.45 & 35.24 & 8.50 & 61.19 & 36.90\blue{35.48} \\
\rowcolor{gray!10}
\texttt{c-sem} & 32.31 & 30.23 & 33.27 & 18.99 & 28.70\blue{19.05} & 75.30 & 71.83 & 73.57\blue{8.60} & 46.32 & 67.80 & 75.28 & 54.93 & 24.73 & 80.26 & 58.22\blue{14.16} \\
\texttt{multi-img} & 26.48 & 33.14 & 11.25 & 11.91 & 20.70\blue{27.05} & 64.84 & 47.83 & 56.34\blue{25.83} & 51.50 & 46.37 & 63.64 & 34.80 & 11.94 & 44.09 & 42.06\blue{30.32} \\
\hline
\rowcolor{gray!10}
\ourmethod & \textbf{53.72} & 49.50 & \textbf{52.40} & \textbf{50.15} & \textbf{51.44}\red{3.69} & \textbf{83.15} & \textbf{83.60} & \textbf{83.38}\red{1.21} & \textbf{65.97} & \textbf{75.92} & \textbf{80.19} & \textbf{73.87} & \textbf{48.05} & \textbf{86.13} & \textbf{71.69}\blue{0.69} \\
\Xhline{1.5pt}
\end{longtable}
\end{center}

\twocolumn
\clearpage

\subsection{Variant Study}
\label{app:variant_study}

\subsubsection{Algorithm Workflow}

\textbf{Algorithm~\ref{alg:variant_indexing}} presents a unified offline indexing framework for the three variants. After shared layout parsing and dual-stream encoding (Stages 1--2), Stage 3 diverges based on the specified variant type: \texttt{single2multi} retains raw sub-image vectors; \texttt{type\_cluster} aggregates vectors by semantic content types via averaging; \texttt{global\_inclusion} appends the full-document global vector to the local set. 

\begin{algorithm}[!ht]
\caption{Integrated Offline Indexing for \ourmethod Variants}\label{alg:variant_indexing}
\SetKwComment{Comment}{\# }{}
\SetCommentSty{textrm}

\Input{
    A document image $d \in \mathbb{R}^{H \times W \times 3}$; \\
    A document parser model $\Psi_{\text{parse}}$; \\
    A single-vector encoder $\Phi_{\text{enc}}: \mathbb{R}^{H' \times W' \times 3} \to \mathbb{R}^D$; \\
    Mode $M \in \{\texttt{s2m, s2m-t-c, s2m-g-i}\}$
}
\Output{
    A multi-vector representation $\mathbf{D}_{\text{variant}}$
}
\BlankLine

\tcc{\textcolor{blue}{Stage 1: Layout-Informed Document Parsing}}
$[\{b_j, c_j\}]_{j=1}^k \leftarrow \Psi_{\text{parse}}(d)$ \tcp*{Get $k$ bboxes and content types}
$\mathcal{S}_d \leftarrow \emptyset$\;
\For{$j \leftarrow 1$ \KwTo $k$}{
    $s_j \leftarrow \text{Crop}(d, b_j)$ \tcp*{Extract sub-image for each layout component}
    $\mathcal{S}_d \leftarrow \mathcal{S}_d \cup \{s_j\}$\;
}
\BlankLine

\tcc{\textcolor{blue}{Stage 2: Regional Encoding}}
$\mathbf{D}_{\text{local}} \leftarrow \emptyset$\;
\For{each sub-image $s_j \in \mathcal{S}_d$}{
    $\mathbf{v}_{\text{local}}^{(j)} \leftarrow \Phi_{\text{enc}}(s_j)$ \tcp*{Independent regional encoding}
    $\mathbf{D}_{\text{local}} \leftarrow \mathbf{D}_{\text{local}} \cup \{\mathbf{v}_{\text{local}}^{(j)}\}$\;
}

\BlankLine

\tcc{\textcolor{blue}{Stage 3: Variant-specific Representation Construction}}
\uIf{$M = \texttt{s2m}$}{
    $\mathbf{D}_{\text{variant}} \leftarrow \mathbf{D}_{\text{local}}$ \tcp*{Standard layout-decomposed set}
}
\ElseIf{$M = \texttt{s2m-t-c}$}{
    $\mathbf{D}_{\text{variant}} \leftarrow \emptyset$\;
    $\mathcal{T} \leftarrow \text{Unique}(\{c_1, \dots, c_k\})$ \tcp*{Identify unique content types}
    \For{each type $t \in \mathcal{T}$}{
        $\mathbf{v}_{\text{avg}}^{(t)} \leftarrow \text{Mean}(\{\mathbf{v}_{\text{local}}^{(j)} \mid c_j = t\})$ \tcp*{Cluster and average by type}
        $\mathbf{D}_{\text{variant}} \leftarrow \mathbf{D}_{\text{variant}} \cup \{\mathbf{v}_{\text{avg}}^{(t)}\}$\;
    }
}
\ElseIf{$M = \texttt{s2m-g-i}$}{
    $\mathbf{v}_{\text{global}} \leftarrow \Phi_{\text{enc}}(d)$ \tcp*{Encode original page for context}
    $\mathbf{D}_{\text{variant}} \leftarrow \mathbf{D}_{\text{local}} \cup \{\mathbf{v}_{\text{global}}\}$ \tcp*{Append global vector to the set}
}

\BlankLine
\KwRet $\mathbf{D}_{\text{variant}}$\;
\end{algorithm}

\subsubsection{More Analysis}

Due to the limited space of main text, we leave the radar plots of performance comparison between \ourmethod and its variants in \autoref{fig:variant_result_visualization}.

\textbf{The introduction of global page-level context is indispensable for resolving semantic ambiguities within isolated layout components.} Quantitative results in \autoref{tab:variant_results_mmlongbench_vidorev1} demonstrate that adding global context---even via simple inclusion (\texttt{s2m-g-i})---dramatically elevates performance over the local-only \texttt{single2multi} baseline, lifting the VLM2Vec-V1-2B score on ViDoRe-V1 from 34.39 to 49.93. \autoref{fig:variant_bar} highlights that this gap is most pronounced in benchmarks requiring holistic understanding, where local sub-images like tables or charts often lack the necessary contextual headers found elsewhere on the page. We hypothesize that the global vector acts as a ``semantic anchor'' that provides the overarching topic of the document, which is essential for the late-interaction mechanism to accurately align specific query tokens with relevant sub-regions.

\textbf{Maintaining the individual spatial and semantic integrity of layout components is superior to heuristic type-level clustering.} As evidenced by the performance trends in \autoref{tab:variant_results_mmlongbench_vidorev1} and the comparative bars in \autoref{fig:variant_bar}, the \texttt{s2m-t-c} variant typically results in a performance regression compared to the standard \texttt{single2multi}, such as the 1.56-point drop for VLM2Vec-V1-2B on ViDoRe-V1 benchmark. This trend is echoed across the radar charts in \autoref{fig:variant_result_visualization}, where the type-clustered variants consistently exhibit the narrowest performance profiles. This indicates that spatial locality is a vital semantic carrier in visual documents; by collapsing multiple distinct components into a single type-level average, the model loses the fine-grained resolution required for the \texttt{MaxSim} operator to distinguish between specific relevant and irrelevant regions of the same type.

\begin{figure*}[!ht]
  \centering
  \includegraphics[width=1\linewidth]{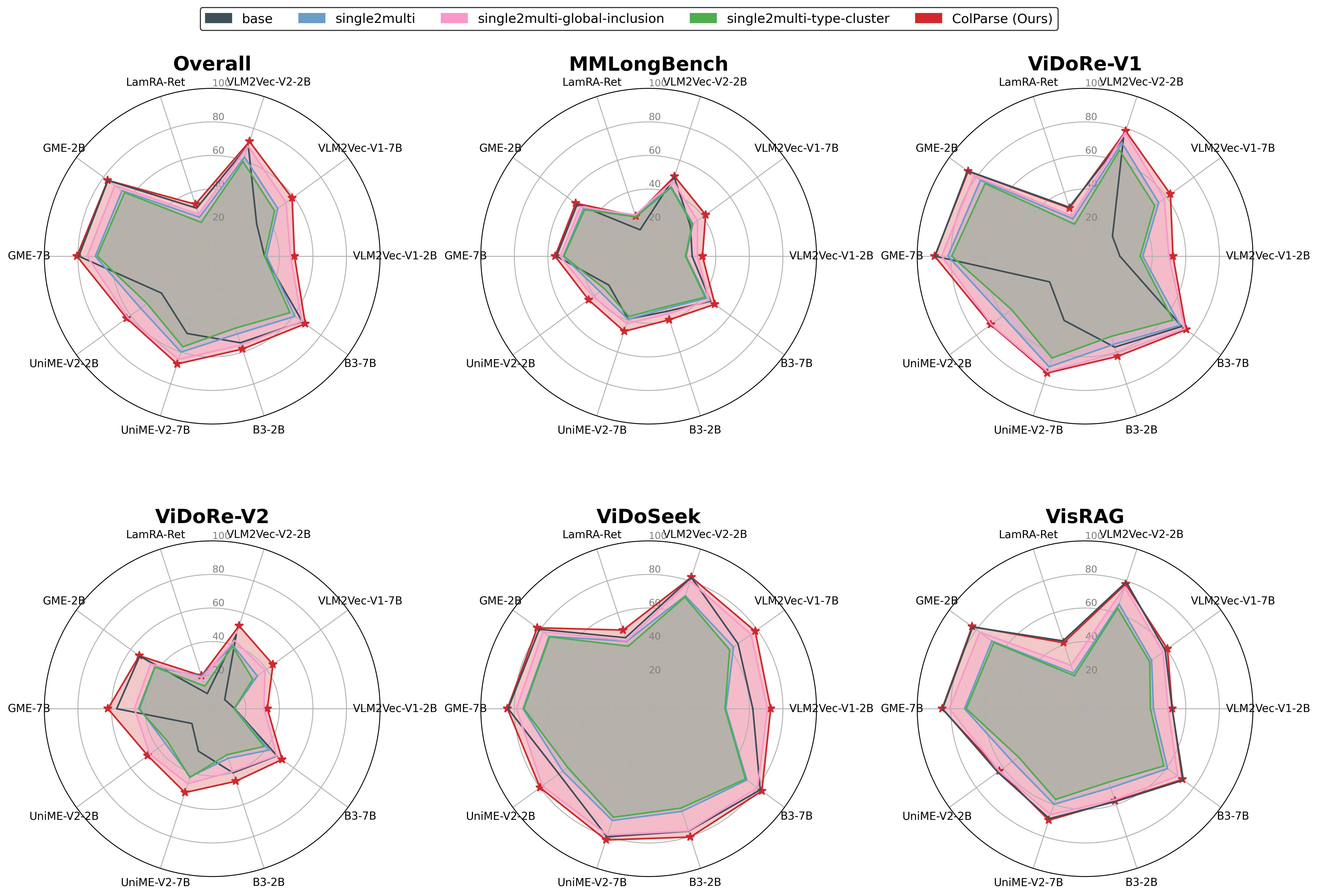}
  % \vspace{-2em}
  \caption{The performance comparison (evaluated by nDCG@5) between \ourmethod and its variants on five VDR benchmarks across ten mainstream single-vector multimodal retrieval models. Refer to \autoref{tab:variant_results_mmlongbench_vidorev1} and \autoref{tab:variant_results_vidorev2_vidoseek_visrag} for detailed result records due to the space limit.}
   \label{fig:variant_result_visualization}
   \vspace{-1em}
\end{figure*}

\onecolumn

% We recommend wrapping this in a \onecolumn environment for two-column papers.
\begin{center}
\small % Use smaller font for wide tables
\setlength\tabcolsep{2.5pt} % Adjust column separation
\renewcommand\arraystretch{1.2} % Adjust row height
\begin{longtable}{p{2.6cm}|cc|c|*{10}{c}|c}
\caption{Ablation study on MMLongBench and ViDoRe-V1 benchmarks. For each model block, we bold the best-performing method in each column (except for the base result). The average scores are shown with relative gains ($\color{RedOrange}\uparrow\color{black}$/$\color{BlueGreen}\downarrow\color{black}$) compared to the base model.}
\label{tab:variant_results_mmlongbench_vidorev1} \\
\Xhline{1.5pt}
\rowcolor{gray!20}
\multirow{2}{*}{\textbf{Method}} & \multicolumn{3}{c|}{\textbf{MMLongBench}} & \multicolumn{11}{c}{\textbf{ViDoRe-V1}} \\
\cmidrule(lr){2-4} \cmidrule(lr){5-15}
\rowcolor{gray!20}
& \textbf{Doc} & \textbf{Page} & \textbf{Avg.} & \textbf{Arxiv} & \textbf{DocV} & \textbf{InfoV} & \textbf{Shift} & \textbf{TabF} & \textbf{TatD} & \textbf{S-AI} & \textbf{S-En} & \textbf{S-HC} & \textbf{S-Gov} & \textbf{Avg.} \\
\Xhline{1.5pt}
\endfirsthead
\Xhline{1.5pt}
\rowcolor{gray!20}
\multicolumn{15}{c}{\tablename\ \thetable\ -- \textit{Continued from previous page}} \\
\Xhline{1.5pt}
\rowcolor{gray!20}
\multirow{2}{*}{\textbf{Method}} & \multicolumn{3}{c|}{\textbf{MMLongBench}} & \multicolumn{11}{c}{\textbf{ViDoRe-V1}} \\
\cmidrule(lr){2-4} \cmidrule(lr){5-15}
\rowcolor{gray!20}
& \textbf{Doc} & \textbf{Page} & \textbf{Avg.} & \textbf{Arxiv} & \textbf{DocV} & \textbf{InfoV} & \textbf{Shift} & \textbf{TabF} & \textbf{TatD} & \textbf{S-AI} & \textbf{S-En} & \textbf{S-HC} & \textbf{S-Gov} & \textbf{Avg.} \\
\Xhline{1.5pt}
\endhead
\Xhline{1.5pt}
\multicolumn{15}{r}{\textit{Continued on next page}} \\
\endfoot
\Xhline{1.5pt}
\endlastfoot
%
% Block 1: VLM2Vec-V1-2B
VLM2Vec-V1-2B & 25.62 & 26.23 & 25.93 & 17.80 & 13.98 & 39.41 & 9.18 & 36.32 & 10.56 & 16.39 & 15.96 & 23.56 & 24.11 & 20.73 \\
\hline
\rowcolor{gray!10}
\texttt{single2multi} & 25.19 & 19.59 & 22.39\blue{3.54} & 38.78 & 16.38 & 56.94 & 12.48 & 41.15 & 9.98 & 38.16 & 46.42 & 42.66 & 40.94 & 34.39\red{13.66} \\
\texttt{s2m-t-c} & 25.37 & 18.43 & 21.90\blue{4.03} & 34.95 & 15.04 & 55.09 & 15.31 & 40.07 & 9.78 & 32.83 & 43.07 & 44.84 & 37.31 & 32.83\red{12.10} \\
\rowcolor{gray!10}
\texttt{s2m-g-i} & 31.57 & 26.55 & 29.06\red{3.13} & \textbf{48.84} & 23.86 & 67.91 & 39.85 & \textbf{62.15} & 14.07 & 55.74 & \textbf{64.41} & 61.67 & 60.83 & 49.93\red{29.20} \\
\hline
\rowcolor{gray!10}
\ourmethod & \textbf{34.31} & \textbf{29.83} & \textbf{32.07}\red{6.14} & 47.66 & \textbf{28.12} & \textbf{69.23} & \textbf{47.11} & 57.05 & \textbf{20.43} & \textbf{62.24} & 63.77 & \textbf{65.51} & \textbf{62.54} & \textbf{52.37}\red{31.64} \\
\Xhline{1.5pt}
% Block 2: VLM2Vec-V1-7B
VLM2Vec-V1-7B & 23.85 & 37.63 & 30.74 & 28.07 & 17.93 & 44.47 & 2.06 & 16.78 & 5.86 & 17.93 & 25.04 & 28.90 & 14.59 & 20.16 \\
\hline
\rowcolor{gray!10}
\texttt{single2multi} & 35.75 & 29.40 & 32.58\red{1.84} & 53.54 & 26.07 & 61.73 & 41.43 & 67.64 & 22.37 & 72.54 & 59.56 & 73.52 & 64.03 & 54.24\red{34.08} \\
\texttt{s2m-t-c} & 35.62 & 28.64 & 32.13\red{1.39} & 50.56 & 24.08 & 59.42 & 35.90 & 64.84 & 18.29 & 65.39 & 58.49 & 72.13 & 63.75 & 51.29\red{31.13} \\
\rowcolor{gray!10}
\texttt{s2m-g-i} & 37.95 & 34.08 & 36.02\red{5.28} & \textbf{62.29} & 30.49 & 67.54 & 41.74 & \textbf{78.87} & 22.99 & \textbf{74.85} & 62.78 & 77.07 & 64.14 & 58.28\red{38.12} \\
\hline
\rowcolor{gray!10}
\ourmethod & \textbf{43.34} & \textbf{40.58} & \textbf{41.96}\red{11.22} & 60.47 & \textbf{34.42} & \textbf{70.39} & \textbf{53.67} & 77.12 & \textbf{31.33} & 74.81 & \textbf{69.64} & \textbf{80.79} & \textbf{75.89} & \textbf{62.85}\red{42.69} \\
\Xhline{1.5pt}
% Block 3: VLM2Vec-V2-2B
VLM2Vec-V2-2B & 48.55 & 50.34 & 49.45 & 78.98 & 38.51 & 82.21 & 64.57 & 87.64 & 44.68 & 85.06 & 82.99 & 89.89 & 87.08 & 74.16 \\
\hline
\rowcolor{gray!10}
\texttt{single2multi} & 45.89 & 41.04 & 43.47\blue{5.98} & 69.32 & 38.09 & 77.88 & 56.25 & 72.60 & 52.69 & 88.14 & 83.80 & 86.99 & 84.09 & 70.99\blue{3.17} \\
\texttt{s2m-t-c} & 44.99 & 40.97 & 42.98\blue{6.47} & 67.44 & 34.25 & 74.79 & 48.36 & 71.32 & 43.73 & 82.22 & 81.18 & 84.78 & 76.42 & 66.45\blue{7.71} \\
\rowcolor{gray!10}
\texttt{s2m-g-i} & 48.03 & 46.30 & 47.17\blue{2.28} & 79.83 & \textbf{46.57} & 82.19 & 66.78 & \textbf{87.66} & \textbf{52.85} & \textbf{93.91} & \textbf{86.02} & 91.14 & 87.46 & 77.44\red{3.28} \\
\hline
\rowcolor{gray!10}
\ourmethod & \textbf{49.49} & \textbf{50.53} & \textbf{50.01}\red{0.56} & \textbf{80.17} & 46.33 & \textbf{83.53} & \textbf{72.76} & 86.74 & 52.40 & 91.36 & 85.83 & \textbf{95.47} & \textbf{89.52} & \textbf{78.41}\red{4.25} \\
\Xhline{1.5pt}
% Block 4: LamRA-Ret
LamRA-Ret & 19.78 & 13.24 & 16.51 & 29.31 & 19.56 & 63.00 & 15.83 & 51.44 & 7.70 & 21.10 & 29.81 & 37.18 & 31.95 & 30.69 \\
\hline
\rowcolor{gray!10}
\texttt{single2multi} & 33.04 & 18.56 & 25.80\red{9.29} & 6.06 & 13.60 & 37.83 & 18.90 & 25.46 & 19.02 & 23.99 & 28.27 & 33.31 & \textbf{28.03} & 23.45\blue{7.24} \\
\texttt{s2m-t-c} & 31.74 & 17.49 & 24.62\red{8.11} & 6.13 & 10.90 & 35.44 & 16.81 & 28.56 & 15.77 & 15.45 & 27.59 & 24.45 & 19.65 & 20.08\blue{10.61} \\
\rowcolor{gray!10}
\texttt{s2m-g-i} & \textbf{33.16} & 18.48 & \textbf{25.82}\red{9.31} & 6.06 & 15.79 & 37.71 & 18.96 & 38.78 & \textbf{19.19} & \textbf{26.00} & 28.27 & 31.68 & 29.03 & 25.15\blue{5.54} \\
\hline
\rowcolor{gray!10}
\ourmethod & 30.74 & \textbf{19.50} & 25.12\red{8.61} & \textbf{17.27} & \textbf{20.61} & \textbf{58.35} & \textbf{21.39} & \textbf{39.69} & 13.34 & 25.57 & \textbf{35.27} & \textbf{43.21} & 26.29 & \textbf{30.10}\blue{0.59} \\
\Xhline{1.5pt}
% Block 5: GME-2B
GME-2B & 52.07 & 53.14 & 52.61 & 82.59 & 56.46 & 88.97 & 89.72 & 93.20 & 70.33 & 98.49 & 92.15 & 98.15 & 95.65 & 86.57 \\
\hline
\rowcolor{gray!10}
\texttt{single2multi} & 50.56 & 45.68 & 48.12\blue{4.49} & 72.46 & 39.99 & 79.82 & 70.54 & 80.91 & 68.07 & 88.91 & 91.11 & 88.85 & 89.86 & 77.05\blue{9.52} \\
\texttt{s2m-t-c} & 49.51 & 44.73 & 47.12\blue{5.49} & 70.90 & 37.34 & 77.92 & 67.59 & 79.85 & 59.12 & 86.13 & 86.87 & 87.23 & 83.35 & 73.63\blue{12.94} \\
\rowcolor{gray!10}
\texttt{s2m-g-i} & 51.81 & 48.49 & 50.15\blue{2.46} & 80.89 & 46.62 & 83.93 & 75.78 & 92.12 & 68.33 & 93.48 & \textbf{92.60} & 93.36 & 90.49 & 81.76\blue{4.81} \\
\hline
\rowcolor{gray!10}
\ourmethod & \textbf{53.06} & \textbf{54.24} & \textbf{53.65}\red{1.04} & \textbf{82.39} & \textbf{54.11} & \textbf{88.93} & \textbf{88.51} & \textbf{92.33} & \textbf{70.65} & \textbf{97.75} & 92.30 & \textbf{97.91} & \textbf{96.10} & \textbf{86.10}\blue{0.47} \\
\Xhline{1.5pt}
% Block 6: GME-7B
GME-7B & 54.01 & 55.80 & 54.91 & 87.59 & 56.05 & 91.96 & 94.25 & 93.72 & 76.26 & 99.63 & 95.45 & 99.63 & 99.06 & 89.36 \\
\hline
\rowcolor{gray!10}
\texttt{single2multi} & 53.55 & 48.17 & 50.86\blue{4.05} & 75.57 & 45.26 & 83.21 & 77.90 & 86.05 & 73.97 & 94.72 & 92.04 & 95.19 & 93.18 & 81.71\blue{7.65} \\
\texttt{s2m-t-c} & 52.97 & 48.05 & 50.51\blue{4.40} & 75.21 & 44.11 & 82.69 & 78.44 & 86.17 & 66.14 & 91.53 & 89.86 & 91.38 & 88.60 & 79.41\blue{9.95} \\
\rowcolor{gray!10}
\texttt{s2m-g-i} & 54.32 & 51.14 & 52.73\blue{2.18} & 84.29 & 52.98 & 87.36 & 82.08 & 94.79 & 74.48 & 96.65 & 91.10 & 96.92 & 93.70 & 85.44\blue{3.92} \\
\hline
\rowcolor{gray!10}
\ourmethod & \textbf{54.96} & \textbf{56.51} & \textbf{55.74}\red{0.83} & \textbf{87.35} & \textbf{57.91} & \textbf{90.76} & \textbf{95.35} & \textbf{95.44} & \textbf{75.92} & \textbf{99.63} & \textbf{94.67} & \textbf{99.63} & \textbf{98.89} & \textbf{89.56}\red{0.20} \\
\Xhline{1.5pt}
% Block 7: UniME-V2-2B
UniME-V2-2B & 18.52 & 40.10 & 29.31 & 36.52 & 12.43 & 42.41 & 14.09 & 51.11 & 7.39 & 20.23 & 32.96 & 24.21 & 19.25 & 26.06 \\
\hline
\rowcolor{gray!10}
\texttt{single2multi} & 38.47 & 33.34 & 35.91\red{6.60} & 52.64 & 27.01 & 68.58 & 49.89 & 61.20 & 38.92 & 77.36 & 73.65 & 77.92 & 79.30 & 60.65\red{34.59} \\
\texttt{s2m-t-c} & 35.13 & 30.33 & 32.73\red{3.42} & 48.29 & 21.24 & 62.26 & 43.83 & 58.58 & 27.37 & 70.42 & 66.86 & 74.85 & 67.57 & 54.13\red{28.07} \\
\rowcolor{gray!10}
\texttt{s2m-g-i} & 41.90 & 39.97 & 40.94\red{11.63} & \textbf{64.23} & 33.32 & \textbf{74.84} & 67.10 & \textbf{78.40} & \textbf{39.00} & 82.16 & \textbf{79.38} & 88.43 & 83.62 & 69.05\red{42.99} \\
\hline
\rowcolor{gray!10}
\ourmethod & \textbf{44.22} & \textbf{44.19} & \textbf{44.21}\red{14.90} & 62.39 & \textbf{37.69} & 73.33 & \textbf{71.19} & 77.45 & 38.83 & \textbf{82.50} & 75.80 & \textbf{89.35} & \textbf{85.84} & \textbf{69.44}\red{43.38} \\
\Xhline{1.5pt}
% Block 8: UniME-V2-7B
UniME-V2-7B & 33.19 & 45.72 & 39.46 & 63.23 & 24.91 & 65.25 & 11.16 & 41.54 & 14.18 & 41.89 & 40.56 & 57.44 & 42.78 & 40.29 \\
\hline
\rowcolor{gray!10}
\texttt{single2multi} & 39.83 & 39.39 & 39.61\red{0.15} & 57.08 & 32.54 & 71.38 & 64.54 & 71.82 & \textbf{49.00} & 84.00 & 84.26 & 91.18 & 87.49 & 69.33\red{29.04} \\
\texttt{s2m-t-c} & 38.66 & 36.94 & 37.80\blue{1.66} & 54.98 & 25.88 & 65.87 & 56.63 & 70.76 & 41.65 & 80.00 & 77.76 & 86.67 & 77.88 & 63.81\red{23.52} \\
\rowcolor{gray!10}
\texttt{s2m-g-i} & 41.35 & 43.08 & 42.22\red{2.76} & 64.50 & 34.55 & 76.81 & 66.35 & \textbf{81.95} & 48.84 & 85.63 & \textbf{85.38} & 91.83 & 87.90 & 72.37\red{32.08} \\
\hline
\rowcolor{gray!10}
\ourmethod & \textbf{45.90} & \textbf{48.26} & \textbf{47.08}\red{7.62} & \textbf{64.78} & \textbf{37.43} & \textbf{78.51} & \textbf{73.56} & 81.47 & 43.69 & \textbf{89.32} & 82.68 & \textbf{92.74} & \textbf{88.13} & \textbf{73.23}\red{32.94} \\
\Xhline{1.5pt}
% Block 9: B3-2B
B3-2B & 37.10 & 32.07 & 34.59 & 57.00 & 29.38 & 68.09 & 48.31 & 71.55 & 18.09 & 74.13 & 64.64 & 75.44 & 63.13 & 56.98 \\
\hline
\rowcolor{gray!10}
\texttt{single2multi} & 36.47 & 29.20 & 32.84\blue{1.75} & 46.63 & 20.13 & 60.31 & 51.58 & 56.45 & \textbf{36.55} & 70.40 & 69.80 & 73.83 & 66.85 & 55.25\blue{1.73} \\
\texttt{s2m-t-c} & 35.15 & 28.33 & 31.74\blue{2.85} & 43.88 & 18.92 & 54.65 & 41.90 & 54.30 & 26.86 & 67.36 & 61.92 & 72.05 & 60.66 & 50.25\blue{6.73} \\
\rowcolor{gray!10}
\texttt{s2m-g-i} & 38.87 & 33.51 & 36.19\red{1.60} & 56.06 & 22.52 & 64.41 & 59.98 & \textbf{73.09} & 35.98 & 78.34 & 70.48 & 80.13 & 67.17 & 60.82\red{3.84} \\
\hline
\rowcolor{gray!10}
\ourmethod & \textbf{42.06} & \textbf{37.60} & \textbf{39.83}\red{5.24} & \textbf{56.47} & \textbf{30.91} & \textbf{66.69} & \textbf{67.42} & 69.33 & 29.42 & \textbf{79.88} & \textbf{72.67} & \textbf{83.24} & \textbf{71.41} & \textbf{62.74}\red{5.76} \\
\Xhline{1.5pt}
% Block 10: B3-7B
B3-7B & 46.09 & 45.10 & 45.60 & 68.95 & 43.38 & 79.86 & 66.56 & 84.12 & 37.06 & 81.01 & 81.25 & 88.57 & 81.30 & 71.21 \\
\hline
\rowcolor{gray!10}
\texttt{single2multi} & 44.43 & 40.79 & 42.61\blue{2.99} & 58.94 & 31.91 & 71.96 & 68.80 & 73.69 & 53.07 & 87.14 & 81.75 & 88.38 & 84.76 & 70.04\blue{1.17} \\
\texttt{s2m-t-c} & 43.32 & 39.84 & 41.58\blue{4.02} & 56.41 & 29.10 & 68.99 & 53.11 & 70.75 & 44.35 & 83.69 & 77.98 & 84.69 & 78.56 & 64.76\blue{6.45} \\
\rowcolor{gray!10}
\texttt{s2m-g-i} & 45.62 & 43.88 & 44.75\blue{0.85} & 67.23 & 35.81 & 76.70 & 70.48 & \textbf{84.62} & \textbf{53.26} & \textbf{87.14} & \textbf{83.19} & 91.86 & 84.44 & 73.47\red{2.26} \\
\hline
\rowcolor{gray!10}
\ourmethod & \textbf{49.11} & \textbf{48.39} & \textbf{48.75}\red{3.15} & \textbf{67.68} & \textbf{42.17} & \textbf{79.02} & \textbf{78.06} & 81.64 & 47.60 & 85.17 & 82.04 & \textbf{92.00} & \textbf{88.73} & \textbf{74.41}\red{3.20} \\
\Xhline{1.5pt}
\end{longtable}
\end{center}

\twocolumn
\clearpage
\onecolumn

% We recommend wrapping this in a \onecolumn environment for two-column papers.
\begin{center}
\tiny % Use smaller font for wide tables
\setlength\tabcolsep{4.0pt} % Adjust column separation
\renewcommand\arraystretch{1.2} % Adjust row height
\begin{longtable}{p{2.4cm}|cccc|c|cc|c|*{6}{c}|c}
\caption{Ablation study on ViDoRe-V2, ViDoSeek, and VisRAG benchmarks. For each model block, we bold the best-performing method in each column (except for the base result). The average scores are shown with relative gains ($\color{RedOrange}\uparrow\color{black}$/$\color{BlueGreen}\downarrow\color{black}$) compared to the base model.}
\label{tab:variant_results_vidorev2_vidoseek_visrag} \\
\Xhline{1.5pt}
\rowcolor{gray!20}
\multirow{2}{*}{\textbf{Method}} & \multicolumn{5}{c|}{\textbf{ViDoRe-V2}} & \multicolumn{3}{c|}{\textbf{ViDoSeek}} & \multicolumn{7}{c}{\textbf{VisRAG}} \\
\cmidrule(lr){2-6} \cmidrule(lr){7-9} \cmidrule(lr){10-16}
\rowcolor{gray!20}
& \textbf{Bio-L} & \textbf{Eco-R} & \textbf{ESG-H} & \textbf{ESG-M} & \textbf{Avg.} & \textbf{Doc} & \textbf{Page} & \textbf{Avg.} & \textbf{Arxiv} & \textbf{Chart} & \textbf{InfoV} & \textbf{MP-Doc} & \textbf{Plot} & \textbf{Slide} & \textbf{Avg.} \\
\Xhline{1.5pt}
\endfirsthead
\Xhline{1.5pt}
\rowcolor{gray!20}
\multicolumn{16}{c}{\tablename\ \thetable\ -- \textit{Continued from previous page}} \\
\Xhline{1.5pt}
\rowcolor{gray!20}
\multirow{2}{*}{\textbf{Method}} & \multicolumn{5}{c|}{\textbf{ViDoRe-V2}} & \multicolumn{3}{c|}{\textbf{ViDoSeek}} & \multicolumn{7}{c}{\textbf{VisRAG}} \\
\cmidrule(lr){2-6} \cmidrule(lr){7-9} \cmidrule(lr){10-16}
\rowcolor{gray!20}
& \textbf{Bio-L} & \textbf{Eco-R} & \textbf{ESG-H} & \textbf{ESG-M} & \textbf{Avg.} & \textbf{Doc} & \textbf{Page} & \textbf{Avg.} & \textbf{Arxiv} & \textbf{Chart} & \textbf{InfoV} & \textbf{MP-Doc} & \textbf{Plot} & \textbf{Slide} & \textbf{Avg.} \\
\Xhline{1.5pt}
\endhead
\Xhline{1.5pt}
\multicolumn{16}{r}{\textit{Continued on next page}} \\
\endfoot
\Xhline{1.5pt}
\endlastfoot
%
% Block 1: VLM2Vec-V1-2B
VLM2Vec-V1-2B & 6.88 & 14.15 & 12.25 & 20.54 & 13.46 & 56.40 & 67.73 & 62.07 & 41.68 & 58.21 & 70.79 & 42.74 & 23.83 & 74.07 & 51.89 \\
\hline
\rowcolor{gray!10}
\texttt{single2multi} & 17.15 & 18.19 & 10.24 & 7.44 & 13.26\blue{0.20} & 50.51 & 41.52 & 46.02\blue{16.05} & 30.17 & 42.91 & 61.28 & 37.60 & 6.80 & 65.76 & 40.75\blue{11.14} \\
\texttt{s2m-t-c} & 13.46 & 18.31 & 11.58 & 8.85 & 13.05\blue{0.41} & 50.22 & 40.75 & 45.49\blue{16.58} & 27.87 & 41.64 & 59.37 & 34.82 & 6.94 & 62.76 & 38.90\blue{12.99} \\
\rowcolor{gray!10}
\texttt{s2m-g-i} & 28.49 & 26.29 & 29.18 & \textbf{39.15} & 30.78\red{17.32} & 73.38 & 68.14 & 70.76\red{8.69} & \textbf{40.39} & 52.54 & \textbf{70.43} & 47.82 & 8.96 & 76.40 & 49.42\blue{2.47} \\
\hline
\rowcolor{gray!10}
\ourmethod & \textbf{30.33} & \textbf{29.55} & \textbf{33.21} & 38.33 & \textbf{32.86}\red{19.40} & \textbf{75.23} & \textbf{70.19} & \textbf{72.71}\red{10.64} & 38.18 & \textbf{60.09} & 69.44 & \textbf{48.29} & \textbf{18.83} & \textbf{76.95} & \textbf{51.96}\red{0.07} \\
\Xhline{1.5pt}
% Block 2: VLM2Vec-V1-7B
VLM2Vec-V1-7B & 4.93 & 13.74 & 6.82 & 11.27 & 9.19 & 54.26 & 77.39 & 65.83 & 52.58 & 69.83 & 71.43 & 52.86 & 34.24 & 73.22 & 59.03 \\
\hline
\rowcolor{gray!10}
\texttt{single2multi} & 34.67 & 37.55 & 33.77 & 26.91 & 33.23\red{24.04} & 66.88 & 58.17 & 62.53\blue{3.30} & 45.06 & 50.30 & 63.65 & 50.23 & 10.83 & 74.00 & 49.01\blue{10.02} \\
\texttt{s2m-t-c} & 31.17 & 41.18 & 26.15 & 21.07 & 29.89\red{20.70} & 64.53 & 54.75 & 59.64\blue{6.19} & 43.98 & 52.75 & 61.47 & 45.75 & 10.49 & 71.33 & 47.63\blue{11.40} \\
\rowcolor{gray!10}
\texttt{s2m-g-i} & 41.08 & 37.99 & 40.11 & 36.54 & 38.93\red{29.74} & 75.46 & 75.28 & 75.37\red{9.54} & 53.97 & 64.14 & 65.37 & 55.41 & 27.98 & 77.07 & 57.32\blue{1.71} \\
\hline
\rowcolor{gray!10}
\ourmethod & \textbf{42.63} & \textbf{42.89} & \textbf{50.55} & \textbf{42.86} & \textbf{44.73}\red{35.54} & \textbf{78.34} & \textbf{78.61} & \textbf{78.48}\red{12.65} & \textbf{54.43} & \textbf{70.30} & \textbf{69.27} & \textbf{58.49} & \textbf{33.46} & \textbf{77.98} & \textbf{60.66}\red{1.63} \\
\Xhline{1.5pt}
% Block 3: VLM2Vec-V2-2B
VLM2Vec-V2-2B & 44.45 & 45.77 & 48.77 & 46.98 & 46.49 & 80.88 & 83.68 & 82.28 & 77.38 & 82.30 & 86.27 & 71.60 & 66.96 & 92.04 & 79.43 \\
\hline
\rowcolor{gray!10}
\texttt{single2multi} & 42.12 & 51.08 & 41.33 & 24.84 & 39.84\blue{6.65} & 73.81 & 67.71 & 70.76\blue{11.52} & 65.36 & 69.18 & 79.34 & 73.19 & 19.59 & 86.77 & 65.57\blue{13.86} \\
\texttt{s2m-t-c} & 41.81 & 51.83 & 37.07 & 25.78 & 39.12\blue{7.37} & 73.69 & 66.16 & 69.93\blue{12.35} & 64.18 & 66.49 & 77.61 & 64.32 & 19.58 & 85.70 & 62.98\blue{16.45} \\
\rowcolor{gray!10}
\texttt{s2m-g-i} & 44.34 & 51.99 & 40.57 & 34.53 & 42.86\blue{3.63} & 79.23 & 81.01 & 80.12\blue{2.16} & 77.03 & 74.83 & 82.98 & \textbf{78.17} & 54.13 & 90.93 & 76.35\blue{3.08} \\
\hline
\rowcolor{gray!10}
\ourmethod & \textbf{50.06} & \textbf{53.76} & \textbf{57.41} & \textbf{46.40} & \textbf{51.91}\red{5.42} & \textbf{80.94} & \textbf{83.87} & \textbf{82.41}\red{0.13} & \textbf{77.18} & \textbf{78.05} & \textbf{84.37} & 78.07 & \textbf{58.74} & \textbf{91.95} & \textbf{78.06}\blue{1.37} \\
\Xhline{1.5pt}
% Block 4: LamRA-Ret
LamRA-Ret & 10.75 & 9.65 & 6.32 & 11.18 & 9.48 & 60.17 & 28.81 & 44.49 & 11.17 & 63.50 & 59.78 & 33.57 & 29.42 & 57.59 & 42.51 \\
\hline
\rowcolor{gray!10}
\texttt{single2multi} & 11.11 & 26.32 & 20.54 & 23.76 & 20.43\red{10.95} & 53.77 & 30.82 & 42.30\blue{2.19} & 1.94 & 25.59 & 30.56 & 27.49 & 3.95 & 44.28 & 22.30\blue{20.21} \\
\texttt{s2m-t-c} & 9.36 & 15.57 & 14.49 & 17.11 & 14.13\red{4.65} & 49.02 & 29.38 & 39.20\blue{5.29} & 2.03 & 22.46 & 31.39 & 21.10 & 3.96 & 42.90 & 20.64\blue{21.87} \\
\rowcolor{gray!10}
\texttt{s2m-g-i} & 11.11 & \textbf{26.56} & 18.78 & 21.76 & 19.55\red{10.07} & 53.74 & 31.77 & 42.76\blue{1.73} & 1.91 & 28.55 & 31.15 & 32.12 & 25.09 & 44.22 & 27.17\blue{15.34} \\
\hline
\rowcolor{gray!10}
\ourmethod & \textbf{15.81} & 17.65 & \textbf{23.75} & \textbf{25.48} & \textbf{20.67}\red{11.19} & \textbf{58.55} & \textbf{39.99} & \textbf{49.27}\red{4.78} & \textbf{5.91} & \textbf{60.76} & \textbf{54.91} & \textbf{38.77} & \textbf{25.27} & \textbf{62.34} & \textbf{41.33}\blue{1.18} \\
\Xhline{1.5pt}
% Block 5: GME-2B
GME-2B & 54.25 & 50.65 & 59.44 & 49.15 & 53.37 & 81.44 & 79.62 & 80.53 & 81.37 & 81.70 & 91.31 & 85.03 & 63.81 & 93.60 & 82.80 \\
\hline
\rowcolor{gray!10}
\texttt{single2multi} & 47.25 & 43.18 & 42.80 & 35.67 & 42.23\blue{11.14} & 82.42 & 64.31 & 73.37\blue{7.16} & 69.46 & 75.14 & 82.71 & 77.65 & 19.43 & 87.64 & 68.67\blue{14.13} \\
\texttt{s2m-t-c} & 47.87 & 50.70 & 39.72 & 31.03 & 42.33\blue{11.04} & 82.06 & 63.82 & 72.94\blue{7.59} & 68.24 & 74.56 & 81.25 & 71.08 & 20.06 & 87.98 & 67.20\blue{15.60} \\
\rowcolor{gray!10}
\texttt{s2m-g-i} & 49.19 & 43.73 & 47.90 & 40.90 & 45.43\blue{7.94} & 83.46 & 72.22 & 77.84\blue{2.69} & 78.82 & 77.90 & 85.79 & 81.37 & 51.04 & 90.55 & 77.58\blue{5.22} \\
\hline
\rowcolor{gray!10}
\ourmethod & \textbf{55.14} & \textbf{52.08} & \textbf{56.24} & \textbf{51.36} & \textbf{53.71}\red{0.34} & \textbf{83.93} & \textbf{80.16} & \textbf{82.05}\red{1.52} & \textbf{81.65} & \textbf{83.87} & \textbf{91.06} & \textbf{85.89} & \textbf{63.17} & \textbf{93.76} & \textbf{83.23}\red{0.43} \\
\Xhline{1.5pt}
% Block 6: GME-7B
GME-7B & 53.66 & 54.34 & 65.38 & 54.32 & 56.93 & 83.21 & 84.18 & 83.70 & 87.20 & 82.32 & 92.92 & 88.89 & 63.36 & 94.81 & 84.92 \\
\hline
\rowcolor{gray!10}
\texttt{single2multi} & 44.39 & 42.94 & 50.47 & 35.10 & 43.23\blue{13.70} & 83.51 & 66.19 & 74.85\blue{8.85} & 75.24 & 77.08 & 84.57 & 82.41 & 20.95 & 90.67 & 71.82\blue{13.10} \\
\texttt{s2m-t-c} & 47.29 & 51.24 & 46.31 & 30.35 & 43.80\blue{13.13} & 83.03 & 65.61 & 74.32\blue{9.38} & 74.40 & 76.84 & 83.60 & 77.58 & 21.35 & 90.80 & 70.76\blue{14.16} \\
\rowcolor{gray!10}
\texttt{s2m-g-i} & 45.99 & 42.98 & 56.63 & 39.79 & 46.35\blue{10.58} & 84.04 & 73.61 & 78.83\blue{4.87} & 84.45 & 80.35 & 88.21 & 87.03 & 53.24 & 92.27 & 80.93\blue{3.99} \\
\hline
\rowcolor{gray!10}
\ourmethod & \textbf{62.40} & \textbf{59.73} & \textbf{68.43} & \textbf{57.90} & \textbf{62.12}\red{5.19} & \textbf{84.30} & \textbf{83.94} & \textbf{84.12}\red{0.42} & \textbf{87.12} & \textbf{83.94} & \textbf{92.84} & \textbf{89.62} & \textbf{62.53} & \textbf{94.97} & \textbf{85.17}\red{0.25} \\
\Xhline{1.5pt}
% Block 7: UniME-V2-2B
UniME-V2-2B & 9.50 & 15.78 & 15.51 & 19.03 & 14.96 & 54.24 & 77.98 & 66.11 & 61.19 & 65.48 & 76.76 & 59.65 & 45.25 & 77.50 & 64.31 \\
\hline
\rowcolor{gray!10}
\texttt{single2multi} & 37.44 & 47.15 & 27.88 & 26.68 & 34.79\red{19.83} & 67.36 & 59.21 & 63.29\blue{2.82} & 47.81 & 54.17 & 71.61 & 61.52 & 9.44 & 76.72 & 53.55\blue{10.76} \\
\texttt{s2m-t-c} & 35.32 & 42.69 & 28.26 & 25.47 & 32.94\red{17.98} & 63.55 & 55.59 & 59.57\blue{6.54} & 44.16 & 48.21 & 67.30 & 52.26 & 9.72 & 72.55 & 49.03\blue{15.28} \\
\rowcolor{gray!10}
\texttt{s2m-g-i} & \textbf{44.64} & \textbf{48.14} & 46.83 & 44.53 & 46.04\red{31.08} & 78.62 & 77.90 & 78.26\red{12.15} & 60.30 & 60.76 & \textbf{77.30} & \textbf{68.15} & 30.63 & \textbf{82.23} & \textbf{63.23}\blue{1.08} \\
\hline
\rowcolor{gray!10}
\ourmethod & 44.52 & 45.86 & \textbf{48.15} & \textbf{51.48} & \textbf{47.50}\red{32.54} & \textbf{80.50} & \textbf{79.46} & \textbf{79.98}\red{13.87} & \textbf{58.51} & \textbf{63.17} & 74.52 & 66.58 & \textbf{35.57} & 80.21 & 63.09\blue{1.22} \\
\Xhline{1.5pt}
% Block 8: UniME-V2-7B
UniME-V2-7B & 26.77 & 23.69 & 24.68 & 31.17 & 26.58 & 78.25 & 82.25 & 80.25 & 60.60 & 79.43 & 80.61 & 64.94 & 45.35 & 82.17 & 68.85 \\
\hline
\rowcolor{gray!10}
\texttt{single2multi} & 45.06 & 53.38 & 44.29 & 27.79 & 42.63\red{16.05} & 72.30 & 67.77 & 70.04\blue{10.21} & 52.99 & 64.35 & 74.06 & 69.25 & 16.48 & 82.14 & 59.88\blue{8.97} \\
\texttt{s2m-t-c} & 43.60 & \textbf{58.23} & 42.39 & 28.00 & 43.06\red{16.48} & 70.92 & 65.12 & 68.02\blue{12.23} & 52.28 & 59.47 & 69.08 & 62.67 & 16.85 & 80.61 & 56.83\blue{12.02} \\
\rowcolor{gray!10}
\texttt{s2m-g-i} & 50.51 & 53.70 & 46.85 & 37.13 & 47.05\red{20.47} & 77.78 & 80.58 & 79.18\blue{1.07} & 60.99 & 68.25 & 76.65 & \textbf{72.59} & 34.78 & \textbf{85.05} & 66.39\blue{2.46} \\
\hline
\rowcolor{gray!10}
\ourmethod & \textbf{54.95} & 50.07 & \textbf{54.92} & \textbf{50.14} & \textbf{52.52}\red{25.94} & \textbf{81.16} & \textbf{83.32} & \textbf{82.24}\red{1.99} & \textbf{61.90} & \textbf{77.80} & \textbf{78.41} & 71.89 & \textbf{44.43} & 84.68 & \textbf{69.85}\red{1.00} \\
\Xhline{1.5pt}
% Block 9: B3-2B
B3-2B & 38.41 & 31.80 & 45.23 & 45.10 & 40.14 & 78.56 & 74.87 & 76.72 & 51.75 & 66.86 & 70.43 & 45.73 & 36.69 & 77.81 & 58.21 \\
\hline
\rowcolor{gray!10}
\texttt{single2multi} & 36.98 & \textbf{45.67} & 23.09 & 18.57 & 31.08\blue{9.06} & 67.88 & 60.70 & 64.29\blue{12.43} & 42.15 & 56.31 & 60.69 & 51.81 & 11.43 & 74.38 & 49.46\blue{8.75} \\
\texttt{s2m-t-c} & 34.49 & 41.79 & 18.12 & 20.80 & 28.80\blue{11.34} & 65.79 & 58.75 & 62.27\blue{14.45} & 39.49 & 53.55 & 56.26 & 43.83 & 11.33 & 71.09 & 45.93\blue{12.28} \\
\rowcolor{gray!10}
\texttt{s2m-g-i} & 40.64 & 44.71 & 33.32 & 38.76 & 39.36\blue{0.78} & 75.41 & 77.59 & 76.50\blue{0.22} & \textbf{51.45} & \textbf{64.51} & 62.71 & \textbf{55.60} & 30.12 & 78.04 & 57.07\blue{1.14} \\
\hline
\rowcolor{gray!10}
\ourmethod & \textbf{45.03} & 39.23 & \textbf{48.20} & \textbf{49.30} & \textbf{45.44}\red{5.30} & \textbf{79.98} & \textbf{80.79} & \textbf{80.39}\red{3.67} & 51.00 & 62.94 & \textbf{67.06} & 53.86 & \textbf{32.02} & \textbf{80.01} & \textbf{57.82}\blue{0.39} \\
\Xhline{1.5pt}
% Block 10: B3-7B
B3-7B & 47.29 & 44.81 & 50.84 & 48.05 & 47.75 & 82.07 & 82.26 & 82.17 & 65.83 & 76.77 & 84.54 & 68.55 & 52.86 & 85.75 & 72.38 \\
\hline
\rowcolor{gray!10}
\texttt{single2multi} & 45.33 & 52.80 & 39.17 & 30.33 & 41.91\blue{5.84} & 77.57 & 66.96 & 72.27\blue{9.90} & 54.91 & 67.44 & 72.97 & 68.97 & 17.68 & 82.51 & 60.75\blue{11.63} \\
\texttt{s2m-t-c} & 42.29 & 50.60 & 34.12 & 25.95 & 38.24\blue{9.51} & 76.48 & 66.12 & 71.30\blue{10.87} & 52.95 & 61.93 & 71.13 & 62.39 & 18.19 & 81.25 & 57.97\blue{14.41} \\
\rowcolor{gray!10}
\texttt{s2m-g-i} & 49.68 & \textbf{53.53} & 48.27 & 38.41 & 47.47\blue{0.28} & 81.14 & 79.17 & 80.16\blue{2.01} & 64.55 & 71.83 & 77.07 & 72.23 & 39.18 & 84.84 & 68.28\blue{4.10} \\
\hline
\rowcolor{gray!10}
\ourmethod & \textbf{53.72} & 49.50 & \textbf{52.40} & \textbf{50.15} & \textbf{51.44}\red{3.69} & \textbf{83.15} & \textbf{83.60} & \textbf{83.38}\red{1.21} & \textbf{65.97} & \textbf{75.92} & \textbf{80.19} & \textbf{73.87} & \textbf{48.05} & \textbf{86.13} & \textbf{71.69}\blue{0.69} \\
\Xhline{1.5pt}
\end{longtable}
\end{center}

\twocolumn
\clearpage

\subsection{Hyperparameter Study}
\label{app:hyperparameter_study}

\subsubsection{Effect of Balancing Factor}
\label{app:hyperparameter_balancing_factor}

\begin{figure}[!ht]
  \centering
  \includegraphics[width=1\linewidth]{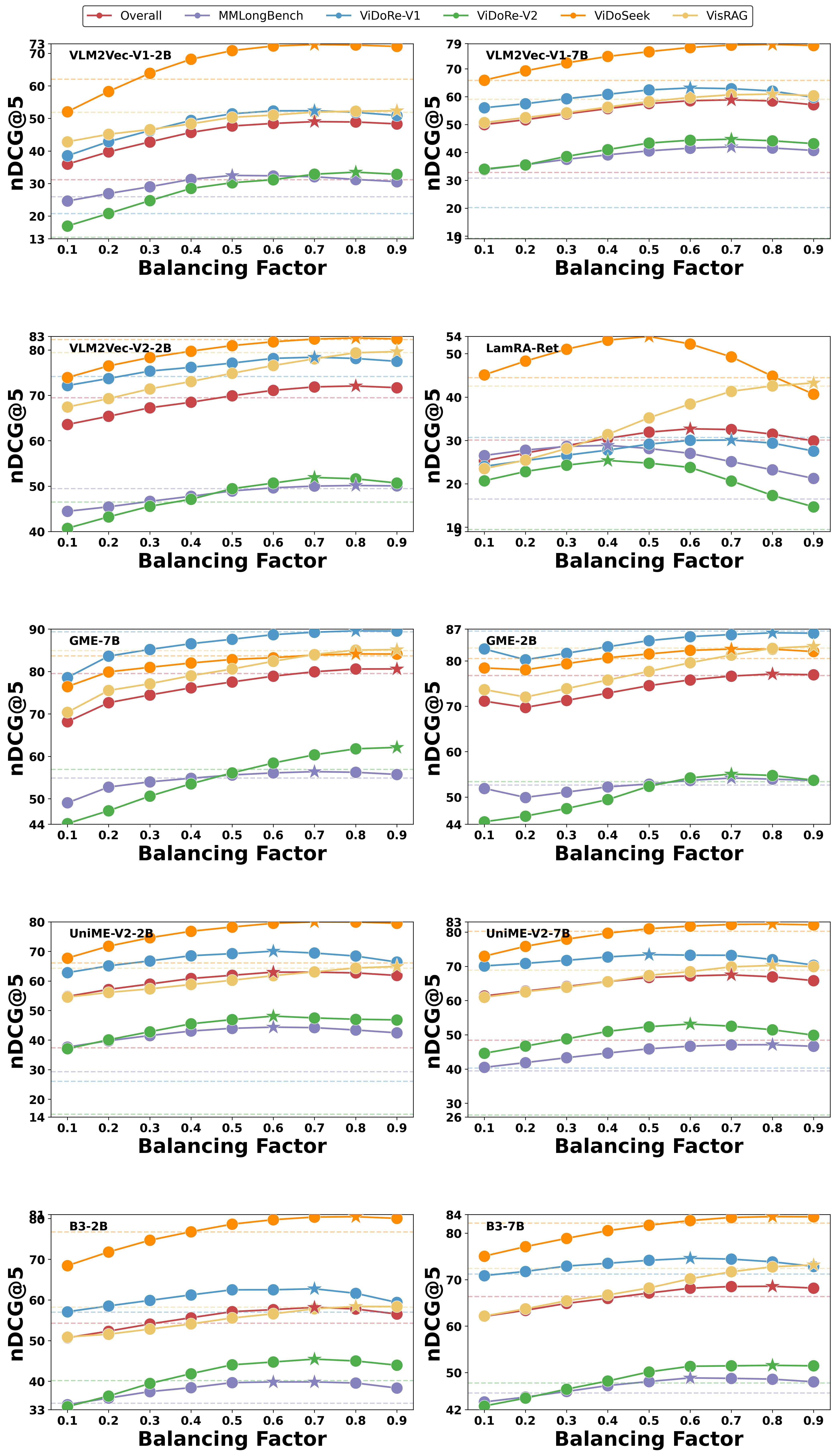}
  % \vspace{-2em}
  \caption{The comparison of the \textit{model-level} performance using \ourmethod across difference balancing factors. The dash lines refer to the base results; and the star points refer to the best-performing balancing factors. }
   \label{fig:balancing_factor_all_line}
   \vspace{-1em}
\end{figure}

\subsubsection{Effect of Document Parsing Model}
\label{app:hyperparameter_document_parsing_model}

\begin{table}[!h]
\caption{Inference efficiency comparison of MinerU2.5. The results for MinerU2.5 and baselines are tested on the A100(80G) machine. The best and runner-up results in each column are \textbf{bolded} and \underline{underlined}, respectively.}
\label{tab:mineru25_efficiency_comparison_updated}
\renewcommand\tabcolsep{7.5pt}
\renewcommand\arraystretch{1.1}
\centering
\footnotesize 
\begin{tabular}{l c c c} 
    \Xhline{1.2pt}
    \rowcolor{CadetBlue!20} 
    \textbf{Model} & \textbf{Para.} & \textbf{Tokens/sec} & \textbf{Pages/sec} \\
    \Xhline{1pt}
    MinerU2-VLM & 0.9B & \textbf{3089} & \textbf{2.84} \\
    \rowcolor{gray!10}
    dots.ocr & 3.0B & 311 & 0.28 \\
    MonkeyOCR-pro-3B & 3.7B & 520 & 0.47 \\
    \rowcolor{gray!10}
    MonkeyOCR-pro-1.2B & 1.9B & 589 & 0.53 \\
    Nanonets-OCR-s & 3.7B & 605 & 0.55 \\
    \hline
    \rowcolor{gray!10}
    MinerU2.5 & 1.2B & \underline{2422} & \underline{2.25} \\
    \Xhline{1.2pt}
\end{tabular}
\end{table}

\begin{table*}[!ht]
\caption{Document parsing performance on OmniDocBench \cite{ouyang2025omnidocbench} across multiple tasks. {The best and runner-up results are \textbf{bolded} and \underline{underlined}, respectively.}}
\label{tab:mineru25_performance_comparison}
\renewcommand\tabcolsep{5.5pt}
\renewcommand\arraystretch{1.1}
\centering
\footnotesize 
\begin{tabular}{l|ccccccc} 
    \Xhline{1.2pt}
    \rowcolor{CadetBlue!20} 
    \textbf{Methods} & \textbf{Para.} & \textbf{Overall}↑ & \textbf{Text\textsuperscript{Edit}}↓ & \textbf{Formula\textsuperscript{CDM}}↑ & \textbf{Table \textsuperscript{TEDS}}↑ & \textbf{Table \textsuperscript{TEDS-S}}↑ & \textbf{Read Order\textsuperscript{Edit}}↓ \\
    \Xhline{1pt}
    
    MinerU2-VLM & 0.9B & $85.56$ & $0.078$ & $80.95$ & $83.54$ & $87.66$ & $0.086$ \\
    \rowcolor{gray!10} dots.ocr & 3B & $88.41$ & \underline{$0.048$} & $83.22$ & \underline{$86.78$} & $90.62$ & \underline{$0.053$} \\
    MonkeyOCR-pro-3B & 3.7B & \underline{$88.85$} & $0.075$ & \underline{$87.25$} & \underline{$86.78$} & \underline{$90.63$} & $0.128$ \\
    \rowcolor{gray!10} MonkeyOCR-pro-1.2B & 1.9B & $86.96$ & $0.084$ & $85.02$ & $84.24$ & $89.02$ & $0.130$ \\
    Nanonets-OCR-s & 3.7B & $85.59$ & $0.093$ & $85.90$ & $80.14$ & $85.57$ & $0.108$ \\
    \Xhline{1.2pt}
    \hline
    \rowcolor{gray!10} \textbf{MinerU2.5} & 1.2B & $\mathbf{90.67}$ & $\mathbf{0.047}$ & $\mathbf{88.46}$ & $\mathbf{88.22}$ & $\mathbf{92.38}$ & $\mathbf{0.044}$ \\
    \Xhline{1.2pt}
    
\end{tabular}
\end{table*}

The evaluation results demonstrate the superior balance of efficiency and accuracy achieved by MinerU2.5 compared to existing specialized vision-language models. \autoref{tab:mineru25_efficiency_comparison_updated} quantifies inference efficiency on A100 (80G) hardware, utilizing Tokens/sec to measure generation speed and Pages/sec to evaluate end-to-end throughput. The findings show that while the 0.9B MinerU2-VLM maintains the highest processing speed, MinerU2.5 serves as the runner-up with 2422 tokens/s and 2.25 pages/s, both of which significantly outperform 3B-parameter baselines such as dots.ocr and MonkeyOCR-pro. Simultaneously, Table \autoref{tab:mineru25_performance_comparison} benchmarks parsing accuracy across multiple categories on OmniDocBench, employing composite Overall scores, Edit Distances for text and reading order, and structural similarity metrics (CDM and TEDS) for formulas and tables. MinerU2.5 achieves state-of-the-art performance across all six indicators, recording a top Overall score of 90.67 and the lowest error rates in text and layout recognition. MonkeyOCR-pro-3B and dots.ocr alternate as runner-up models across structural and textual tasks, yet MinerU2.5 remains the only method to consistently lead in every evaluated dimension of document parsing quality.

\subsubsection{Efficiency Analysis}
\label{app:efficiency_analysis}

\autoref{tab:parsed_vector_num_statistics} summarizes the average number of parsed vectors per document across 24 datasets in five VDR benchmarks.

\begin{table*}[htbp]
\centering
\caption{Summary of the average number of parsed vectors per document across 24 datasets in five VDR benchmarks. The values represent the number of layout-informed sub-image embeddings ($k$) generated by the document parser (MinerU2.5).}
\label{tab:parsed_vector_num_statistics}
\small % 稍微缩小字体以适应页面
\begin{tabular}{llc}
\toprule
% --- 设置表头背景颜色 ---
\rowcolor{CadetBlue!20} \textbf{Benchmark} & \textbf{Dataset} & \textbf{Avg. \#Vectors ($k$)} \\ 
\midrule

% MMLongBench
\multirow{2}{*}{MMLongBench} & MMLongBench-doc & 6.04 \\
                             & MMLongBench-page & 6.04 \\
\midrule

% ViDoSeek
\multirow{2}{*}{ViDoSeek}    & ViDoSeek-doc & 5.60 \\
                             & ViDoSeek-page & 5.77 \\
\midrule

% VisRAG
\multirow{6}{*}{VisRAG}      & VisRAG\_ArxivQA & 1.97 \\
                             & VisRAG\_ChartQA & 2.98 \\
                             & VisRAG\_InfoVQA & 4.20 \\
                             & VisRAG\_MP-DocVQA & 5.82 \\
                             & VisRAG\_PlotQA & 2.06 \\
                             & VisRAG\_SlideVQA & 4.66 \\
\midrule

% ViDoRe-v2
\multirow{4}{*}{ViDoRe-v2}   & ViDoRe\_biomedical\_lectures\_v2 & 3.88 \\
                             & ViDoRe\_economics\_reports\_v2 & 5.89 \\
                             & ViDoRe\_esg\_reports\_human\_labeled\_v2 & 6.92 \\
                             & ViDoRe\_esg\_reports\_v2\_multilingual & 6.91 \\
\midrule

% ViDoRe-V1
\multirow{10}{*}{ViDoRe-V1}  & ViDoRe\_arxiviva & 1.99 \\
                             & ViDoRe\_docvqa & 5.64 \\
                             & ViDoRe\_infovqa & 4.52 \\
                             & ViDoRe\_shiftproject & 8.97 \\
                             & ViDoRe\_syntheticDocQA\_artificial\_intelligence & 5.71 \\
                             & ViDoRe\_syntheticDocQA\_energy & 5.07 \\
                             & ViDoRe\_syntheticDocQA\_government\_reports & 5.98 \\
                             & ViDoRe\_syntheticDocQA\_healthcare\_industry & 6.11 \\
                             & ViDoRe\_tabfquad & 2.10 \\
                             & ViDoRe\_tatdqa & 8.58 \\
\bottomrule
\end{tabular}
\end{table*}

%%%%%%%%%%%%%%%%%%%%%%%%%%%%%%%%%%%%%%%%%%%%%%%%%%%%%%%%%%%%%%%%%%%%%%%%%%%%%%%
%%%%%%%%%%%%%%%%%%%%%%%%%%%%%%%%%%%%%%%%%%%%%%%%%%%%%%%%%%%%%%%%%%%%%%%%%%%%%%%

\end{document}